\numberwithin{equation}{section}
\theoremstyle{plain}
\newtheorem{theorem}{Theorem}[section]
\newtheorem{lemma}[theorem]{Lemma}
\newtheorem{corollary}[theorem]{Corollary}
\newtheorem{definition}[theorem]{Definition}
\def\MDP{\mathcal M}
\def\Msp{\mathcal M^\dagger}
\def\Mspem{\widehat{\mathcal M}^\dagger}
\title{Policy Finetuning in Reinforcement Learning 
\\ via Design of Experiments using Offline Data}
\author{ \textbf{Ruiqi Zhang} \\
  Department of Statistics\\
  University of California, Berkeley\\
  \texttt{rqzhang@berkeley.edu} \\
  \vspace{1cm}
  \textbf{Andrea Zanette} \\
  \vspace{2.5ex}
  Department of EECS \\
  University of California, Berkeley\\
  \texttt{zanette@berkeley.edu} \\
}
\begin{document}

\maketitle

\begin{abstract}
In some applications of reinforcement learning, 
a dataset of pre-collected experience is already available
but it is also possible to acquire some additional online data to help improve the quality of the policy.
However, it may be preferable to gather additional data with a single, non-reactive exploration policy
and avoid the engineering costs associated with switching policies. 

In this paper we propose an algorithm with provable guarantees 
that can leverage an offline dataset to design a single non-reactive policy for exploration. 
We theoretically analyze the algorithm and measure the quality of the final policy 
as a function of the local coverage of the original dataset and the amount of additional data collected.
\end{abstract}

\section{Introduction}

Reinforcement learning (RL) is a general framework for data-driven, sequential decision making
\citep{puterman1994markov,sutton2018reinforcement}.
In RL, a common goal is to identify a near-optimal policy, and there exist two main paradigms: 
\emph{online} and \emph{offline} RL.

Online RL is effective 
when the practical cost of a bad decision is low, such as in simulated environments 
(e.g., \citep{mnih2015human,silver2016mastering}).
In online RL, a well designed learning algorithm starts from tabula rasa and implements a sequence of policies 
with a value that should approach that of an optimal policy.
When the cost of making a mistake is high, such as in healthcare \citep{gottesman2018evaluating} 
and in self-driving \citep{kiran2021deep}, 
an offline approach is preferred. 
In offline RL, the agent uses a dataset of pre-collected experience 
to extract a policy that is as good as possible. In this latter case, 
the quality of the policy that can be extracted from the dataset is limited by the quality of the dataset.

Many applications, however, fall between these two opposite settings:
for example, a company that sells products online has most likely recorded  
the feedback that it has received from its customers, 
but can also collect a small amount of additional strategic data
in order to improve its recommendation engine.
While in principle an online exploration algorithm can be used to collect fresh data, 
in practice there are a number of practical engineering considerations 
that require the policy to be deployed to be \textbf{non-reactive}.
We say that a policy is non-reactive, (or passive, memoryless) if it chooses
actions only according to the current state of the system.
Most online algorithms are, by design, reactive to the data being acquired.

An example of a situation where non-reactive policies may be preferred are those
where a human in the loop is required to validate each exploratory policy before they are deployed,
to ensure they are of high quality \citep{dann2019policy} and safe \citep{yang2021accelerating}, 
as well as free of discriminatory content \citep{koenecke2020racial}.
Other situations that may warrant non-reactive exploration are those where the interaction with the user 
occurs through a distributed system with delayed feedback.
In recommendation systems, data collection may only take minutes, but policy deployment and updates can span weeks
\citep{afsar2022reinforcement}. 
Similar considerations apply across various RL application domains, 
including healthcare \citep{yu2021reinforcement}, 
computer networks \citep{xu2018experience}, 
and new material design \citep{raccuglia2016machine}. In all such cases, the engineering effort required to implement 
a system that handles real-time policy switches may be prohibitive:
deploying a single, non-reactive policy is much preferred. 

\textbf{Non-reactive exploration from offline data}
Most exploration algorithms that we are aware of incorporate policy switches when they interact with the environment  \citep{Dann15,dann2017unifying,azar2017minimax,jin2018q,dann2019policy,zanette2019tighter,zhang2020almost}.
Implementing a sequence of non-reactive policies is necessary in order to achieve near-optimal regret:
the number of policy switches 
must be at least $\tildeO\left(H\left|\cS\right| \left|\cA\right| \log \log K\right)$ 
where $\cS,\cA,H,K$ are the state space, action space, 
horizon and the total number of episodes, respectively \citep{qiao2022sample}.
With no switches, i.e., when a fully non-reactive data collection strategy is implemented, 
it is information theoretically impossible \citep{xiao2022curse}
to identify a good policy using a number of samples polynomial in the size of the state and action space.

However, these fundamental limits apply to the case where the agent learns from tabula rasa.
In the more common case where offline data is available, 
we demonstrate that it is possible to leverage the dataset to design an effective non-reactive exploratory policy.
More precisely, an available offline dataset contains information (e.g., transitions) 
about a certain area of the state-action space, a concept known as \emph{partial coverage}.
A dataset with partial coverage naturally identifies a `sub-region' 
of the original MDP---more precisely, a sub-graph---that is relatively well explored. 
We demonstrate that it is possible to use the dataset to design a non-reactive policy that further explores such sub-region.
The additional data collected can be used to learn a near-optimal policy in such sub-region.

In other words, exploration with no policy switches can collect additional information and compete with the best policy 
that is restricted to an area where the original dataset has sufficient information.
The value of such policy can be much higher than the one that can be computed using only the offline dataset, 
and does not directly depend on a concentrability coefficient \citep{munos2008finite,chen2019information}.

Perhaps surprisingly, addressing the problem of reactive exploration in reinforcement learning requires
an approach that \emph{combines both optimism and pessimism} in the face of uncertainty to explore efficiently.
While optimism drives exploration, pessimism ensures that the agent explores conservatively, 
in a way that restricts its exploration effort to a region that it knows how to navigate,
and so our paper makes a technical contribution which can be of independent interest.
 
\textbf{Contributions}
To the best of our knowledge, this is the first paper with theoretical rigor 
that considers the problem of designing an experiment in reinforcement learning for online, passive exploration,
using a dataset of pre-collected experience. 
More precisely, our contributions are as follows:
\begin{itemize}[leftmargin=*]
    \item We introduce an algorithm that takes as input a dataset, uses it to design and deploy a non-reactive exploratory policy, and then outputs a locally near-optimal policy.
    \item  
    We introduce the concept of sparsified MDP, which is actively used by our algorithm to design the exploratory policy,
    as well as to theoretically analyze the quality of the final policy that it finds. 
    \item We rigorously establish a nearly minimax-optimal upper bound for the sample complexity needed to learn a local $\varepsilon$-optimal policy using our algorithm.
    \end{itemize}

\section{Related Work}
In this section we discuss some related literature. Our work is related to low-switching algorithms, but unlike those, we focus on the limit case where \emph{no-switiches} are allowed. For more related work about low-switching algorithms, offline RL, task-agnostic RL, and reward-free RL we refer to Appendix \ref{app:related}.

\textbf{Low-switching RL}
In reinforcement learning, \citep{bai2019provably} first proposed Q-learning with UCB2 exploration, proving an $O(H^3 \left|\cS\right|\left|\cA\right| \log K)$ switching cost. This was later improved by a factor of $H$ by the UCBadvantage algorithm in \citep{zhang2020almost}. Recently, \citep{qiao2022sample} generalized the policy elimination algorithm from \citep{cesa2013online} and introduced APEVE, which attains an optimal $O(H \left|\cS\right|\left|\cA\right| \log \log K)$ switching cost. The reward-free  version of their algorithm (which is not regret minimizing) has an $O(H \left|\cS\right|\left|\cA\right|)$ switching cost.

Similar ideas were soon applied in RL with linear function approximation \citep{gao2021provably,wang2021provably,qiao2022near} and general function approximation \citep{qiao2023logarithmic}. Additionally, numerous research efforts have focused on low-adaptivity in other learning domains, such as batched dueling bandits \citep{agarwal2022batched}, batched convex optimization \citep{duchi2018minimax}, linear contextual bandits \citep{ruan2021linear}, and deployment-efficient RL \citep{huang2022towards}.

Our work was inspired by the problem of non-reactive policy design in linear contextual bandits. Given access to an offline dataset, \citep{zanette2021design} proposed an algorithm to output a single exploratory policy, which generates a dataset from which a near-optimal policy can be extracted. 
However, there are a number of additional challenges which arise in reinforcement learning, 
including the fact that the state space is only partially explored in the offline dataset.
In fact, in reinforcement learning, \citep{xiao2022curse} 
established an exponential lower bound for any non-adaptive policy learning algorithm 
starting from tabula rasa. 

\section{Setup}
Throughout this paper, we let $[n] = \{1,2,...,n\}$. 
We adopt the big-O notation, where $\tildeO(\cdot)$ suppresses poly-log factors of the input parameters. 
We indicate the cardinality of a set $\mathcal X$ with $|\mathcal X|$.

\textbf{Markov decision process}
We consider time-homogeneous episodic Markov decision processes (MDPs).
They are defined by a finite state space $\cS$, a finite action space $\cA$, a trasition kernel $\P$, a reward function $r$
and the episodic length $H$.
 The transition probability $\P\left(s^\prime \mid s,a\right)$, which does not depend on the current time-step $h \in [H]$,
 denotes the probability of transitioning to state $s^\prime \in \cS$
 when taking action $a\in\cA$ in the current state $s\in\cS$.
 Typically we denote with $s_1$ the initial state.
  For simplicity, we consider deterministic reward functions $r: \cS \times \cA \to [0,1]$.
A deterministic non-reactive (or memoryless, or passive) 
policy $\pi = \left\{\pi_h\right\}_{h \in [H]}$ maps a given state to an action.

The value function is defined as the expected cumulated reward.
It depends on the state $s$ under consideration, the transition $\P$ and reward $r$ 
that define the MDP as well as on the policy $\pi$ being implemented.
It is defined as $V_h \left(s;\P,r,\pi\right) = \E_{\P,\pi}[\sum_{i=h}^H r(s_i,a_i) \mid s_h = s],$ where $\E_{\P,\pi}$ denotes the expectation generated by $\P$ and policy $\pi$. A closely related quantity is the state-action value function, or $Q$-function, defined as $Q_h \left(s,a;\P,r,\pi\right) = \E_{\P,\pi}[\sum_{i=h}^H r(s_i,a_i) \mid s_h = s, a_h = a].$ 
When it is clear from the context, we sometimes omit $(\P,r)$ and simply write them as $V_h^\pi(s)$ and $Q_{h}^\pi(s,a).$ We denote an MDP defined by $\cS, \cA$ and the transition matrix $\P$ as $\mdp = \left(\cS,\cA,\P\right)$.

\subsection{Interaction protocol}
\def\OfflineDataset{\ensuremath{\mathcal D}}
\def\OnlineDataset{\ensuremath{\mathcal D'}}
\def\piEx{\ensuremath{\pi_{ex}}}
\def\piFinal{\ensuremath{\pi_{final}}}

      \begin{algorithm}[H]
        \caption{Design of experiments in reinforcement learning}
		\label{alg:protocol}
        \begin{algorithmic}[1]
			\Require Offline dataset $\OfflineDataset$
			\State \emph{Offline phase:} use $\OfflineDataset$ to compute the exploratory policy $\piEx$ \label{line:offline}
			\State \emph{Online phase:} deploy $\piEx$ to collect the online dataset $\OnlineDataset$ \label{line:online}
			\State \emph{Planning phase:} receive the reward function $r$ and use $\OfflineDataset \cup \OnlineDataset$ to extract \piFinal \label{line:plan}
			\Ensure Return \piFinal
        \end{algorithmic}
      \end{algorithm}

In this paper we assume access to an \emph{offline dataset} $\OfflineDataset = \{(s,a,s')\}$ 
where every state-action $(s,a)$ is sampled in an i.i.d. fashion from some distribution $\mu$ 
and $s^\prime \sim \P(\cdot \mid s,a)$, which is common in the offline RL literature \citep{xie2021bellman,zhan2022offline,rashidinejad2021bridging,uehara2021pessimistic}. We denote $N(s,a)$ and $N(s,a,s^\prime)$ as the number of $(s,a)$ and $(s,a,s^\prime)$ samples in the offline dataset $\cD,$ respectively.
The interaction protocol considered in this paper consists of three distinct phases, 
which are displayed in \cref{alg:protocol}. They are:

\begin{itemize}[leftmargin=*]
\item the \textbf{offline phase}, where the learner uses an \emph{offline dataset}  $\OfflineDataset $
of pre-collected experience to design the non-reactive exploratory policy $\piEx$; 
\item the \textbf{online phase} where $\piEx$ is deployed to generate the \emph{online dataset} $\OnlineDataset$;
\item the \textbf{planning phase} where the learner receives a reward function and uses all the data collected to extract a good policy $\piFinal$ with respect to that reward function.
\end{itemize}
	
The objective is to minimize the number of online episodic interactions needed to find a policy $\piFinal$
whose value is as high as possible.
Moreover, we focus on the reward-free RL setting \citep{jin2020reward,kaufmann2021adaptive}, 
which is more general than reward-aware RL. 

\section{Algorithm: balancing optimism and pessimism for experimental design}

In this section we outline our algorithm \emph{Reward-Free Non-reactive Policy Design} \textsc{(RF-NPD)},
which follows the high-level protocol described in \cref{alg:protocol}.
The technical novelty lies almost entirely in the design of the exploratory policy $\piEx$.
In order to prepare the reader for the discussion of the algorithm,
we first give some intuition in \cref{sec:intuition} followed by the definition of sparsified MDP in \cref{sec:sparsified}, 
a central concept of this paper,
and then describe the implementation of \cref{line:offline} in the protocol in \cref{alg:protocol} in \cref{sec:doe}.
We conclude by presenting the implementation of \cref{line:online,line:plan} in the protocol in \cref{alg:protocol}.

\subsection{Intuition}
\label{sec:intuition}
In order to present the main intuition for this paper, in this section we assume that enough transitions 
are available in the dataset for every edge $(s,a) \rightarrow s'$,
namely that the \emph{critical condition}
\begin{align}
	\label{eqn:criticalcondition}
	N(s,a,s') \geq \Phi = \widetilde \Theta(H^2)
\end{align}
holds for all tuples $(s,a,s') \in  \cS\times\cA\times\cS$ 
(the precise value for $\Phi$ will be given later in \cref{eqn:Phival}).
Such condition is hardly satisfied everywhere in the state-action-state space, but assuming it in this section 
simplifies the presentation of one of the key ideas of this paper.

The key observation is that when \cref{eqn:criticalcondition} holds for all $(s,a,s')$,
we can use the empirical transition kernel to 
design an exploration policy $\piEx$ to eventually extract a near-optimal policy $\piFinal$
for any desired level of sub-optimality $\varepsilon$, despite \cref{eqn:criticalcondition} being independent of $\varepsilon$.
More precisely, let $\widehat \P$ be the empirical transition kernel defined 
in the usual way $\widehat \P(s' \mid s,a) = N(s,a,s')/N(s,a)$ for any tuple $(s,a,s')$. 
The intuition---which will be verified rigorously in the analysis of the algorithm---is the following: 

\begin{center}
	\emph{If \cref{eqn:criticalcondition} holds for every $(s,a,s')$ then 
$\widehat \P$ can be used to design a non-reactive exploration policy $\piEx$ 
which can be deployed on $\MDP$ to find an $\varepsilon$-optimal policy $\piFinal$ 
using $\asymp \frac{1}{\varepsilon^2}$ samples. } 
\end{center}
We remark that even if the condition \ref{eqn:criticalcondition} holds for all tuples $(s,a,s')$,
the empirical kernel $\widehat \P$ is not accurate enough to extract an 
$\varepsilon$-optimal policy from the dataset $\mathcal D$ without collecting further data.
Indeed, the threshold $\Phi = \widetilde \Theta (H^2)$ on the number of samples 
is independent of the desired sub-optimality $\varepsilon > 0$,
while it is well known that at least $\sim \frac{1}{\varepsilon^2}$ 
offline samples are needed to find an $\varepsilon$-optimal policy. 
Therefore, directly implementing an offline RL algorithm to use the available offline dataset $\OfflineDataset$
does not yield an $\varepsilon$-optimal policy.
However, the threshold $\Phi = \widetilde \Theta (H^2)$ 
is sufficient to \emph{design} a non-reactive exploratory policy $\piEx$ 
that can discover an $\varepsilon$-optimal policy $\piFinal$ after collecting $\sim \frac{1}{\varepsilon^2}$ online data.

\subsection{Sparsified MDP}
\label{sec:sparsified}
The intuition in the prior section must be modified to work with heterogeneous datasets and dynamics
where $N(s,a,s') \geq \Phi$ may fail to hold everywhere.
For example, if $\P(s' \mid s,a)$ is very small for a certain tuple $(s,a,s')$, 
it is unlikely that the dataset contains $N(s,a,s') \geq \Phi$
samples for that particular tuple.
In a more extreme setting, if the dataset is empty, 
the critical condition in \cref{eqn:criticalcondition} is violated for all tuples $(s,a,s')$, 
and in fact the lower bound of \cite{xiao2022curse} states that finding $\varepsilon$-optimal policies
by exploring with a non-reactive policy is not feasible with $\sim \frac{1}{\varepsilon^2}$ sample complexity.
This suggests that in general it is not possible to output an $\varepsilon$-optimal policy using the protocol in \cref{alg:protocol}.

However, a real-world dataset generally covers at least a portion of the state-action space, 
and so we expect the condition $N(s,a,s') \geq \Phi$ to hold somewhere;
the sub-region of the MDP where it holds represents the connectivity graph of the \emph{sparsified MDP}.
This is the region that the agent knows how to navigate using the offline dataset $\OfflineDataset$,
and so it is the one that the agent can explore further using $\piEx$.
More precisely, the sparsified MDP is defined to have identical dynamics 
as the original MDP on the edges $(s,a)\longrightarrow s'$
that satisfy the critical condition \ref{eqn:criticalcondition}.
When instead the edge $(s,a)\longrightarrow s'$ fails to satisfy the critical condition \ref{eqn:criticalcondition},
it is replaced with a transition $(s,a)\longrightarrow s^\dagger$ to an absorbing state $s^\dagger$.

\begin{definition}[Sparsified MDP]\label{def:mdp_maintext}
   Let $s^\dag$ be an absorbing state, i.e., such that $\P \left(s^\dag \mid s^\dag, a \right) = 1$ and $r(s^\dag, a) = 0$
   for all $a \in \cA$. 
   The state space in the sparsified MDP $\Msp$ is defined as that of the original MDP 
   with the addition of $s^\dag$. The dynamics $\mathbb{P}^{\dagger}$ of the sparsified MDP are defined as
   \begin{align}
   \label{eqn:sparsedyn}
   	\mathbb{P}^{\dagger} (s^\prime \mid s,a)
   	= 
   	\begin{cases}
   		\mathbb{P}(s^\prime \mid s,a) & \text{if} \; N(s,a,s^\prime) \geq \Phi \\
   		0 \quad & \text{if} \; N(s,a,s^\prime) < \Phi,
   	\end{cases}
\qquad 
   	\mathbb{P}^{\dagger} (s^\dagger \mid s,a) 
   	=
    \sum_{ \substack{ s' \neq s^\dagger \\ N(s,a,s') < \Phi}   } \P(s' \mid s,a).
   \end{align}
For any deterministic reward function $r:\cS \times \cA \to [0,1],$ 
the reward function on the sparsified MDP is defined as $r^\dag(s,a) = r(s,a)$;
for simplicity we only consider deterministic reward functions.
\end{definition}

The \emph{empirical sparsified MDP} $\Mspem = (\cS\cup\{s^\dag\},\cA,\widehat{\P}^\dag)$ is defined in the same way 
but by using the empirical transition kernel in \cref{eqn:sparsedyn}. 
The empirical sparsified MDP is used by our algorithm to design the exploratory policy,
while the (population) sparsified MDP is used for its theoretical analysis.
They are two fundamental concepts in this paper.

\subsection{Offline design of experiments}
\label{sec:doe}
\begin{algorithm}[htb!]
\caption{RF-UCB $(\hmdp, K_{ucb}, \varepsilon, \delta)$}
\label{alg2}
	\begin{algorithmic}[1] 
		\Require $\delta \in (0,1), \varepsilon > 0,$ number of episode $K_{ucb},$ MDP $\hmdp.$
		\State \textbf{Initialize } Counter $n^1(s,a) = n^1(s,a,s^\prime) = 0$ for any $(s,a,s^\prime) \in \cS \times \cA \times \cS.$ 
		\For{$k = 1,2,...,K_{ucb}$}
		\For{$h=H,H-1,\ldots,1$}  
		\State Set $U^k_h(s,a) = 0$ for any $(h,s,a) \in [H] \times \cS \times \cA.$
		\For{$(s,a) \in \cS \times \cA$}
	    \State Calculate the empirical uncertainty $U^k_h(s,a)$ using \cref{eq_def_U_h_maintext} where $\phi$ is from \cref{def_bonus_maintext} 
	    \EndFor
		\State $\pi^k_h(s) := \mathop{\arg\max}_{a \in \mathcal{A}} U^k_h(s,a), \forall s \in \cS$ and $\pi^k_h(s^\dag) :=$ any action. 
		\EndFor
		\State Set initial state $s_1^k = s_1.$ \label{line:simstart}
		\For{$h = 1,2,...,H$} Sample $a_h^k \sim \pi_h^k(s_h^k), s_{h+1}^k \sim \emP^\dag (s_h^k,a_h^k).$
		\EndFor \label{line:simend}
		\State $n^{k+1}(s,a) = n^{k}(s,a) + \sum_{h \in [H]} \mathbb{I}[(s,a) = (s_h^k,a_h^k)].$
		\State $n^{k+1}(s,a,s^\prime) = n^{k}(s,a,s^\prime) + \sum_{h \in [H]} \mathbb{I}[(s,a,s^\prime) = (s_h^k,a_h^k,s_{h+1}^k)].$
		\EndFor 
		\Ensure $\pi_{ex} = \operatorname{Uniform}\{\pi^k\}_{k \in [K_{ucb}]}$.
	\end{algorithmic}
\end{algorithm}

In this section we describe the main sub-component of the algorithm, namely the sub-routine 
that uses the offline dataset $\OfflineDataset$ to compute the exploratory policy $\piEx$.
The exploratory policy $\piEx$ is a mixture of the policies $\pi^1,\pi^2,\dots$ 
produced by a variant of the reward-free exploration algorithm of  
\citep{kaufmann2021adaptive,menard2021fast}.
Unlike prior literature, the reward-free algorithm is not interfaced with the real MDP $\MDP$,
but rather \emph{simulated} on \emph{the empirical sparsified MDP $\Mspem$}.
This avoids interacting with $\MDP$ with a reactive policy, 
but it introduces some bias that must be controlled.
The overall procedure is detailed in \cref{alg2}.
To be clear, no real-world samples are collected by \cref{alg2}; instead we use the word `virtual samples'
to refer to those generated from $\Mspem$.

At a high level, \cref{alg2} implements value iteration using the empirical transition kernel $\widehat \P^\dagger$, 
with the exploration bonus defined in \cref{def_bonus_maintext} that replaces the reward function.
The exploration bonus can be seen as implementing the principle of optimism in the face of uncertainty;
however, the possibility of transitioning to an absorbing state with zero reward (due to the use 
of the absorbing state in the definition of $\widehat \P^\dagger$) implements the principle of pessimism.

This delicate \emph{interplay between optimism and pessimism is critical} to the success of the overall procedure:
while optimism encourages exploration, pessimism ensures that the exploration efforts are directed to 
the region of the state space that the agent actually knows how to navigate,
and prevents the agent from getting `trapped' in unknown regions.
In fact, these latter regions could have combinatorial structures \citep{xiao2022curse}
which cannot be explored with non-reactive policies.

More precisely, at the beginning of the $k$-th virtual episode in \cref{alg2},
$n^k(s,a)$ and $n^k(s,a,s^\prime)$ denote the counters for the number of virtual samples simulated from $\Mspem$
at each $(s,a)$ and $(s,a,s^\prime)$ tuple. 
We define the bonus function
\begin{equation}\label{def_bonus_maintext}
\begin{small}
    \phi\left(x,\delta\right) = \frac{H}{x} [ \log (6H \left|\cS\right| \left|\cA\right| / \delta)+ \left|\cS\right| \log (e(1+x/\left|\cS\right|))],
\end{small}
\end{equation}
which is used to construct the \textit{empirical uncertainty function} $U_h^k$,
a quantity that serves as a proxy for the uncertainty of the value of any policy $\pi$ on the spasified MDP.
Specifically, for the $k$-th virtual episode, we set $U_{H+1}^k(s,a) = 0$ and $s \in \cS, a \in \cA$. For $h \in [H]$, we further define:
\begin{equation}\label{eq_def_U_h_maintext}
\begin{small}
    U_h^k(s,a) = H \min \{1, \phi(n^k(s,a))\} + \emP^\dag(s,a)^\top (\max_{a^\prime} U_{h+1}^k (\cdot,a^\prime)).
\end{small}
\end{equation}
Note that, the above bonus function takes a similar form of the bonus function in \citep{menard2021fast}. This order of $O(1/x)$ is set to achieve the optimal sample complexity, and other works have also investigated into other forms of bonus function \citep{kaufmann2021adaptive}.
Finally, in \cref{line:simstart} through \cref{line:simend} 
the current policy $\pi^k$---which is the greedy policy with respect to $U^k$---is 
simulated on the empirical reference MDP $\Mspem$, and the virtual counters are updated. 
It is crucial to note that the simulation takes place entirely offline, 
by generating virtual transitions from $\Mspem$.
Upon termination of \cref{alg2}, the uniform mixture of policies $\pi^1,\pi^2,\dots$ 
form the non-reactive exploration policy $\piEx$, ensuring that the latter has wide `coverage'
over $\Msp$.

\subsection{Online and planning phase}
Algorithm \ref{alg2} implements \cref{line:offline} of the procedure in \cref{alg:protocol} by finding the exploratory policy $\piEx$.
After that, in \cref{line:online} of the interaction protocol the online dataset $\OnlineDataset$ is generated by 
deploying $\piEx$ on the real MDP $\MDP$ to generate $K_{de}$ trajectories.
Conceptually, the online dataset $\OnlineDataset$ and the offline dataset $\OfflineDataset$ 
identify an updated empirical transition kernel $\widetilde \P$ and its sparsified version\footnote{ \begin{scriptsize}For any $(s,a,s^\prime) \in \cS\times \cA \times \cS$, we define $\widetilde{\P}^{\dagger} (s^\prime \mid s,a) = \frac{m(s,a,s^\prime)}{m(s,a)}$ if $N(s,a,s^\prime) \geq \Phi$ and $ \widetilde{\P}^{\dagger} (s^\prime \mid s,a) = 0$ otherwise. Finally, for any $(s,a) \in \cS \times \cA,$ we have $\widetilde{\P}^{\dagger} (s^\dag \mid s,a) = \frac{1}{m(s,a)}\sum_{s^\prime \in \cS, N(s,a,s^\prime) < \Phi} m(s,a,s^\prime)$ and for any $a \in \cA,$ we have $\widetilde{\P} (s^\dag \mid s^\dag, a) = 1.$ Here $N(s,a,s^\prime)$ is the counter of initial offline data and $m(\cdot,\cdot)$ is the counter of online data.\end{scriptsize}} 
$\widetilde \P^\dagger$.
Finally, in \cref{line:plan} a reward function $r$ is received, and the value iteration algorithm (See Appendix \ref{app:planning}) is invoked with $r$ as reward function and $\widetilde \P^\dagger$ as dynamics,
and the near-optimal policy $\piFinal$ is produced.
The use of the (updated) empirical sparsified dynamics $\widetilde \P^\dagger$
can be seen as incorporating the principle of pessimism under uncertainty due to the presence of the absorbing state.

Our complete algorithm is reported in \cref{alg1},
and it can be seen as implementing the interaction protocol described in \cref{alg:protocol}.

\begin{algorithm}[htb!]
\caption{Reward-Free Non-reactive Policy Design (RF-NPD) }
\label{alg1}
	\begin{algorithmic}[1] 
		\Require Offline dataset \OfflineDataset, target suboptimality $\varepsilon > 0$, failure tolerance $\delta \in (0,1].$
		\State Construct the empirical sparsified MDP $\hmdp$.
		\State \emph{Offline phase:} run \textsc{RF-UCB$(\hmdp,K_{ucb},\varepsilon,\delta)$} to obtain the exploratory policy $\pi_{ex}.$
		\State \emph{Online phase:} deploy $\pi_{ex}$ on the MDP $\MDP$ for $K_{de}$ episodes to get the online dataset $\OnlineDataset$.
		\State \emph{Planning phase:} receive the reward function $r$, construct $\tmdp$ from the online dataset $\cD^\prime$, compute $\pi_{final}$ (which is the optimal policy on $\tmdp$) using value iteration (Appendix \ref{app:planning}).
		\Ensure $\pi_{final}.$
	\end{algorithmic}
\end{algorithm}

\section{Main Result}

In this section, we present a performance bound on our algorithm,
namely a bound on the sub-optimality of the value of the final policy $\piFinal$ when measured on the sparsified MDP $\Msp$. 
The sparsified MDP arises because it is generally not possible to directly 
compete with the optimal policy using a non-reactive data collection strategy
and a polynomial number of samples
due to the lower bound of \cite{xiao2022curse};
more details are given in Appendix \ref{app:lower}.

In order to state the main result, we let 
$ K = K_{ucb} = K_{de},$ where $K_{ucb}$ and $K_{de}$ are the number of episodes for the offline simulation and online interaction, respectively.
Let $C$ be some universal constant, and choose the threshold in the definition of sparsified MDP as 
\begin{equation}
\label{eqn:Phival}
    \Phi = 6H^2 \log(12H\left|\cS\right|^2 \left|\cA\right|/\delta).
\end{equation}

\begin{theorem}\label{thm_main}
    For any $\varepsilon > 0$ and $0 < \delta < 1,$ if we let the number of online episodes be  
    \begin{equation*}
        K = \frac{C H^2 \left|\cS\right|^2 \left|\cA\right|}{\varepsilon^2} \polylog\left(\left|\cS\right|,\left|\cA\right|,H,\frac{1}{\varepsilon},\frac{1}{\delta}\right),
    \end{equation*}
    then with probability at least $1-\delta,$ for any reward function $r,$
    the final policy $\piFinal$ returned by Algorithm \ref{alg1} satisfies the bound
    \begin{equation}\label{eqn:suboptbound}
        \max_{\pi \in \Pi}V_1\left(s_1 ; \mathbb{P}^{\dagger}, r^{\dagger}, \pi\right)-V_1\left(s_1 ; \mathbb{P}^{\dagger}, r^{\dagger}, \pi_{final}\right) \leq \varepsilon.
    \end{equation}
\end{theorem}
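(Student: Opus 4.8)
\emph{Step 1 (reduction).} The plan is to first reduce the theorem to a uniform reward-free statement about the sparsified MDP. Fix $\pi^\star\in\arg\max_{\pi}V_1(s_1;\mathbb P^\dagger,r^\dagger,\pi)$; since $\pi_{final}$ is exactly optimal for the dynamics $\widetilde{\mathbb P}^\dagger$ and reward $r^\dagger$, the left side of \eqref{eqn:suboptbound} is at most $\big[V_1(s_1;\mathbb P^\dagger,r^\dagger,\pi^\star)-V_1(s_1;\widetilde{\mathbb P}^\dagger,r^\dagger,\pi^\star)\big]+\big[V_1(s_1;\widetilde{\mathbb P}^\dagger,r^\dagger,\pi_{final})-V_1(s_1;\mathbb P^\dagger,r^\dagger,\pi_{final})\big]$, so it is enough to show that, on an event of probability $\ge 1-\delta$, one has $\big|V_1(s_1;\widetilde{\mathbb P}^\dagger,r^\dagger,\pi)-V_1(s_1;\mathbb P^\dagger,r^\dagger,\pi)\big|\le\varepsilon/2$ \emph{simultaneously} for all rewards $r$ and all policies $\pi$ --- the uniformity over $r$ being the reward-free requirement, and over $\pi$ being needed because $\pi_{final}$ depends on $r$.

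\emph{Step 2 (two structural facts).} Next I would record two facts. First, the online phase is honest data collection on $\mathcal M^\dagger$: since $\mathbb P$ and $\mathbb P^\dagger$ agree on every retained edge and the mass $\mathbb P$ places on discarded edges equals $\mathbb P^\dagger(s^\dagger\mid\cdot)$, a trajectory of $\pi_{ex}$ on the real MDP, read through the sparsification map, is distributed exactly as a trajectory of $\pi_{ex}$ on $\mathcal M^\dagger$, so $\widetilde{\mathbb P}^\dagger(\cdot\mid s,a)$ is the empirical mean of $m(s,a)$ i.i.d.\ draws from $\mathbb P^\dagger(\cdot\mid s,a)$. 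Second, on a good event $\mathcal E$ of probability $\ge 1-\delta/2$ (Bernstein plus a union bound over the $\le|\cS|^2|\cA|$ retained edges, using that each retained edge carries at least $\Phi=\widetilde{\Theta}(H^2)$ offline samples), the simulator $\widehat{\mathbb P}^\dagger$ is a faithful proxy for $\mathcal M^\dagger$: its per-step error against $\mathbb P^\dagger$ is controlled in a Bernstein sense at scale $\widetilde{O}(1/H)$, hence (a) the occupancy of any policy under $\widehat{\mathbb P}^\dagger$ and under $\mathbb P^\dagger$ agree up to a constant factor, and (b) values of any bounded reward under the two kernels differ by a controlled amount. Combining (a) with Bernstein concentration of $m(s,a)$ around $\sum_k d^{\pi^k}_{\mathbb P^\dagger}(s,a)$ and of the virtual counts around $\sum_k d^{\pi^k}_{\widehat{\mathbb P}^\dagger}(s,a)$ lower-bounds, on $\mathcal E$, the online count $m(s,a)$ in terms of the pseudo-count accumulated by \cref{alg2}. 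The choice $\Phi=\widetilde{\Theta}(H^2)$ is what makes these offline distortions $O(1)$ over the horizon while keeping $\Phi$ free of $\varepsilon$.

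\emph{Step 3 (reward-free core and closing).} Then I would run the reward-free analysis of \citep{kaufmann2021adaptive,menard2021fast}, but with the reference environment replaced by $\widehat{\mathbb P}^\dagger$: (i) \emph{validity} --- for every $\pi$, $U^k_h$ dominates up to universal constants the error functional governing value estimation on $\widehat{\mathbb P}^\dagger$ from $n^k$ transitions (this is where the $O(1/x)$ shape of $\phi$ enters); (ii) a potential/pigeonhole bound $\tfrac1K\sum_{k=1}^K\max_a U^k_1(s_1,a)\le\varepsilon_0$ with $\varepsilon_0=\widetilde{O}(\mathrm{poly}(H,|\cS|,|\cA|)/K)$. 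Feeding $\varepsilon_0$ into a Bernstein-type simulation lemma for $\mathbb P^\dagger$ versus $\widetilde{\mathbb P}^\dagger$ and invoking Step 2: the $O(1/m)$ terms are absorbed directly into $\widetilde{O}(\varepsilon_0)$, while the leading $\sqrt{\mathrm{Var}/m}$ terms are controlled by Cauchy--Schwarz and the law of total variance $\mathbb E_{\pi,\mathbb P^\dagger}[\sum_h\mathrm{Var}(V^\pi_{h+1})]\le H^2$; passing from $\widehat{\mathbb P}^\dagger$- to $\mathbb P^\dagger$-values via Step 2(b) costs only a constant. This gives $\sup_{\pi,r}\big|V_1(s_1;\widetilde{\mathbb P}^\dagger,r^\dagger,\pi)-V_1(s_1;\mathbb P^\dagger,r^\dagger,\pi)\big|=\widetilde{O}(\sqrt{\mathrm{poly}(H,|\cS|,|\cA|)/K})$, which the stated $K$ renders $\le\varepsilon/2$; Step 1 then yields \eqref{eqn:suboptbound}, and a union bound over $\mathcal E$ and the counting events gives probability $\ge 1-\delta$.

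\emph{Main obstacle.} The hard part will be Steps 2--3 together: reconciling three kernels --- $\widehat{\mathbb P}^\dagger$ used to \emph{design} $\pi_{ex}$, $\widetilde{\mathbb P}^\dagger$ used to \emph{plan} $\pi_{final}$, and the population $\mathbb P^\dagger$ used as the benchmark --- and two count systems (virtual $n^k$ versus online $m$), and verifying that the uncertainty function built with a second-order $O(1/n)$ bonus on the \emph{biased} simulator still upper-bounds the \emph{true} value-estimation error on $\mathbb P^\dagger$. The whole scheme hinges on $\Phi=\widetilde{\Theta}(H^2)$ being large enough that the offline bias $\widehat{\mathbb P}^\dagger-\mathbb P^\dagger$ is negligible at scale $1/H$ yet independent of $\varepsilon$, so that the $1/\varepsilon^2$ dependence is paid entirely by the online budget.
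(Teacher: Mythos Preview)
Your proposal is correct and follows essentially the same route as the paper: the same reduction in Step~1, the same structural ingredients in Step~2 (multiplicative accuracy $\widehat{\mathbb P}^\dagger\approx\mathbb P^\dagger$ at scale $1/H$ from the threshold $\Phi$, the resulting constant-factor occupancy comparison, and the martingale lower bounds linking $m(\cdot)$ to the virtual counts $n^k(\cdot)$), and the same reward-free machinery of \citep{menard2021fast} in Step~3 (Bernstein transportation plus law of total variance yielding the $X+\sqrt{X}$ bound, and the pigeonhole/potential argument for $\tfrac{1}{K}\sum_k U^k_1$). The only organizational difference is that the paper packages the simulation-lemma side as a single ``population uncertainty function'' $X_h$ defined directly with $\widetilde{\mathbb P}^\dagger$ and $m(\cdot)$, and proves $X_h\le \tfrac{C}{K}\sum_k U^k_h$ in one lemma (passing $\widetilde{\mathbb P}^\dagger\to\mathbb P^\dagger\to\widehat{\mathbb P}^\dagger$ inside the induction), whereas you describe it as ``validity on $\widehat{\mathbb P}^\dagger$'' followed by a separate transfer to $\mathbb P^\dagger$ and $\widetilde{\mathbb P}^\dagger$; the underlying chain of inequalities is identical.
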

The theorem gives a performance guarantee on the value of the policy $\piFinal$,
which depends both on the initial coverage of the offline dataset $\OfflineDataset$
as well as on the number of samples collected in the online phase.
The dependence on the coverage of the offline dataset 
is implicit through the definition of the (population) sparsified $\Msp$,
which is determined by the counts $N(\cdot,\cdot)$.

In order to gain some intuition, 
we examine some special cases as a function of the coverage of the offline dataset.

\textbf{Empty dataset} Suppose that the offline dataset $\OfflineDataset$ is empty. 
	Then the sparsified MDP identifies a \emph{multi-armed bandit} at the initial state $s_1$, 
	where any action $a$ taken from such state
	gives back the reward $r(s_1,a)$ and leads to the absorbing state $s^\dagger$.
	In this case, our algorithm essentially designs an allocation strategy $\piEx$ that is 
	uniform across all actions at the starting state $s_1$. 
	Given enough online samples, $\piFinal$ converges to the \emph{action} with the highest instantaneous
	reward on the multi-armed bandit induced by the start state. 
	With no coverage from the offline dataset, the lower bound of \cite{xiao2022curse}
	for non-reactive policies precludes finding an $\varepsilon$-optimal policy on the original MDP $\MDP$
	unless exponentially many samples are collected.

\textbf{Known connectivity graph}
On the other extreme, assume that the offline dataset contains enough information everywhere in the state-action space such 
that the critical condition \ref{eqn:criticalcondition} is satisfied for all $(s,a,s')$ tuples. 
Then the sparsified MDP and the real MDP coincide, 
i.e., $\MDP = \MDP^\dagger$, and so the final policy $\piFinal$ directly competes with the optimal policy $\pi^*$ for any given reward function in \cref{eqn:suboptbound}. 
More precisely, the policy $\piFinal$ is $\varepsilon$-suboptimal on $\MDP$ if 
 $\widetilde{O}(H^2 \left|\cS\right|^2 \left|\cA\right| / \varepsilon^2)$ trajectories are collected in the online phase, 
a result that matches the lower bound for reward-free exploration of \cite{jin2020rewardfree} up to log factors.
However, we achieve such result with a data collection strategy that is completely passive,
one that is computed with the help of an initial offline dataset
whose size $|\OfflineDataset| \approx  \Phi \times |\cS|^2 |\cA| = \widetilde O(H^2|\cS|^2|\cA|)$ need \emph{not depend on final accuracy $\varepsilon$}.

\textbf{Partial coverage}
\def\nTot{N_{\text{tot}}}
In more typical cases, the offline dataset has only \emph{partial} coverage over the state-action space
and the critical condition \ref{eqn:criticalcondition} may be violated in certain state-action-successor states.
In this case, the connectivity graph of the sparsified MDP $\Msp$ is a sub-graph of the original MDP $\MDP$ 
augmented with edges towards the absorbing state.
The lack of coverage of the original dataset arises through the sparsified MDP in the guarantees that we present
in \cref{thm_main}. In this section, we `translate' such guarantees into guarantees on $\MDP$,
in which case the `lack of coverage' is naturally represented by the concentrability coefficient 
$$C^* = \sup_{s,a} d_{\pi}(s,a)/\mu(s,a),$$
see for examples
 the papers \citep{munos2008finite,chen2019information} for background material on the concentrability factor.
More precisely, we compute the sample complexity---in terms of online as well as offline samples---required for $\piFinal$ to be $\varepsilon$-suboptimal 
with respect to any comparator policy $\pi$, and so in particular with respect to the optimal policy $\pi_*$ 
on the ``real'' MDP $\MDP$. The next corollary is proved in \cref{sec:proof_coro}. 
\begin{corollary}
\label{cor:main}
Suppose that the offline dataset contains
\begin{equation*}
    \widetilde{O}\Big(
    \frac{H^4 \left|\cS\right|^2\left|\cA\right|C^*}{\varepsilon} \Big),
\end{equation*}
samples and that additional 
\begin{equation*}
    \widetilde{O}
    \Big(
    \frac{H^3\left|\cS\right|^2\left|\cA\right|}{\varepsilon^2}
    \Big)
\end{equation*}
online samples are collected during the online phase.
Then with probability at least $1-\delta$, for any reward function $r$, the policy $\piFinal$ is $\epsilon$-suboptimal
with respect to any comparator policy $\pi$
\begin{equation}\label{eqn:global_eps}
\begin{small}
    V_1\left(s_1 ; \mathbb{P}, r, \pi \right)-V_1\left(s_1 ; \mathbb{P}, r, \pi_{final} \right) \leq \varepsilon.
\end{small}
\end{equation}

\end{corollary}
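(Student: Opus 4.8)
The plan is to transfer the sparsified‑MDP guarantee of \cref{thm_main} to the true MDP $\MDP$ by inserting the values on $\Msp$ and bounding the resulting ``sparsification bias'' through the concentrability coefficient. For the comparator $\pi$ and any reward $r$, decompose
\begin{align*}
V_1(s_1;\mathbb{P},r,\pi) - V_1(s_1;\mathbb{P},r,\piFinal)
&= \bigl[V_1(s_1;\mathbb{P},r,\pi) - V_1(s_1;\mathbb{P}^{\dagger},r^{\dagger},\pi)\bigr] \\
&\quad + \bigl[V_1(s_1;\mathbb{P}^{\dagger},r^{\dagger},\pi) - V_1(s_1;\mathbb{P}^{\dagger},r^{\dagger},\piFinal)\bigr] \\
&\quad + \bigl[V_1(s_1;\mathbb{P}^{\dagger},r^{\dagger},\piFinal) - V_1(s_1;\mathbb{P},r,\piFinal)\bigr],
\end{align*}
and denote the three bracketed terms by $(\mathrm A)$, $(\mathrm B)$, $(\mathrm C)$. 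Term $(\mathrm B)\le\varepsilon/2$ is immediate from \cref{thm_main} run with target accuracy $\varepsilon/2$, since $V_1(s_1;\mathbb{P}^{\dagger},r^{\dagger},\pi)\le\max_{\pi'\in\Pi}V_1(s_1;\mathbb{P}^{\dagger},r^{\dagger},\pi')$. Term $(\mathrm C)\le 0$: because $r\ge 0$ and $\mathbb{P}^{\dagger}$ merely redirects transition mass to the zero‑reward absorbing state $s^\dagger$, a backward induction on $h$ gives $V_h(s;\mathbb{P}^{\dagger},r^{\dagger},\nu)\le V_h(s;\mathbb{P},r,\nu)$ for every policy $\nu$ and every $s\in\cS$; taking $\nu=\piFinal$ yields $(\mathrm C)\le 0$.

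The core of the argument is the bound on the sparsification bias $(\mathrm A)$. Couple the trajectory of $\pi$ on $\MDP$ with the trajectory of $\pi$ on $\Msp$ so that they agree exactly until the first time a low‑count edge $(s,a)\to s'$, i.e.\ one with $N(s,a,s')<\Phi$, is traversed; at that step the $\Msp$‑trajectory is absorbed in $s^\dagger$ and earns no further reward. Since $r\in[0,1]$, the value gap per realization lies in $[0,H]$ and a low‑count edge is traversed at most once, so
\begin{equation*}
(\mathrm A)\;\le\; H\cdot p_{\mathrm{esc}}\;\le\; H\sum_{h=1}^{H}\sum_{s,a}d_h^{\pi}(s,a)\,e(s,a),\qquad e(s,a):=\!\!\sum_{s':\,N(s,a,s')<\Phi}\!\!\mathbb{P}(s'\mid s,a),
\end{equation*}
where $d_h^{\pi}$ is the step‑$h$ occupancy of $\pi$ on $\MDP$ and $p_{\mathrm{esc}}$ the probability that its trajectory ever uses a low‑count edge (the second inequality is a union bound / first‑moment estimate). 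Applying concentrability in the form $\sum_{h=1}^{H}d_h^{\pi}(s,a)\le H\,C^{*}\mu(s,a)$ gives $(\mathrm A)\le H^{2}C^{*}\sum_{s,a}\mu(s,a)\,e(s,a)$; pairs with $\mu(s,a)=0$ drop out, since then $d_h^{\pi}(s,a)=0$ whenever $C^{*}<\infty$.

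It remains to show $\sum_{s,a}\mu(s,a)\,e(s,a)$ is small. Condition on the event $\mathcal G$ that every edge with $|\OfflineDataset|\,\mu(s,a)\,\mathbb{P}(s'\mid s,a)\ge 2\Phi$ satisfies $N(s,a,s')\ge\Phi$; since $N(s,a,s')$ is $\mathrm{Binomial}(|\OfflineDataset|,\mu(s,a)\mathbb{P}(s'\mid s,a))$ with mean at least $2\Phi$ on such edges, a multiplicative Chernoff bound together with a union bound over the $\le |\cS|^{2}|\cA|$ edges gives $\Pr(\mathcal G)\ge 1-\delta/2$, using that $\Phi=6H^{2}\log(12H|\cS|^{2}|\cA|/\delta)$ is logarithmically large. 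On $\mathcal G$, any successor counted in $e(s,a)$ has $\mathbb{P}(s'\mid s,a)<2\Phi/(|\OfflineDataset|\mu(s,a))$, hence $e(s,a)\le 2\Phi|\cS|/(|\OfflineDataset|\mu(s,a))$ and $\sum_{s,a}\mu(s,a)e(s,a)\le 2\Phi|\cS|^{2}|\cA|/|\OfflineDataset|$; with $\Phi=\widetilde\Theta(H^{2})$ this yields $(\mathrm A)\le\widetilde O\!\bigl(H^{4}C^{*}|\cS|^{2}|\cA|/|\OfflineDataset|\bigr)$, which is $\le\varepsilon/2$ once $|\OfflineDataset|=\widetilde O(H^{4}|\cS|^{2}|\cA|C^{*}/\varepsilon)$, the stated offline requirement. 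Finally, running \cref{thm_main} with accuracy $\varepsilon/2$ and confidence $\delta/2$ forces $(\mathrm B)\le\varepsilon/2$ and needs $K=\widetilde O(H^{2}|\cS|^{2}|\cA|/\varepsilon^{2})$ online episodes, i.e.\ $HK=\widetilde O(H^{3}|\cS|^{2}|\cA|/\varepsilon^{2})$ online samples; a union bound over $\mathcal G$ and the good event of \cref{thm_main} costs at most $\delta$, and on the intersection $(\mathrm A)+(\mathrm B)+(\mathrm C)\le\varepsilon/2+\varepsilon/2+0=\varepsilon$, which is \cref{eqn:global_eps}. The main obstacle is term $(\mathrm A)$: one must quantify exactly how much probability mass the comparator can leak through under‑observed edges into $s^\dagger$ and show it is jointly governed by $\Phi$, $|\OfflineDataset|$ and $C^{*}$, the delicate accounting being the $H$‑dependence, which pins down whether $d_\pi$ in the definition of $C^{*}$ is the per‑step or the horizon‑summed occupancy.
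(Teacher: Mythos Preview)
Your proof is correct and follows essentially the same three–term decomposition as the paper, with the same handling of $(\mathrm B)$ via \cref{thm_main} and of $(\mathrm C)$ via monotonicity of values under sparsification. The only tactical difference is in bounding $(\mathrm A)$: you use a coupling/escape–probability argument plus a direct Chernoff bound on the binomial counts $N(s,a,s')$, whereas the paper applies the value difference (simulation) lemma to obtain $|V_1(\P)-V_1(\P^\dagger)|\le |\cS|H^2\sum_{s,a}d_\pi(s,a)\sup_{s'}|\P-\P^\dagger|$ and then controls $\P(s'\mid s,a)$ on low-count edges via two martingale concentration lemmas on $N(s,a)$ and $N(s,a,s')$ separately; both routes yield the identical $\widetilde O(H^4|\cS|^2|\cA|C^*/N)$ bound, and your coupling argument is arguably more direct.
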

The online sample size is equivalent to the one that arises in the statement of \cref{thm_main} (expressed as 
number of online trajectories), and does not depend on the concentrability coefficient. 
The dependence on the offline dataset in \cref{thm_main} is implicit in the definition of sparsified MDP;
here we have made it explicit using the notion of concentrability.

Corollary \ref{cor:main} can be used to compare the achievable guarantees of our procedure with that of
an offline algorithm, such as the minimax-optimal procedure detailed in \citep{xie2021policy}. 
The proceedure described in \citep{xie2021policy} achieves \eqref{eqn:global_eps} with probability at least $1-\delta$
by using \begin{equation}
\label{eqn:offlineminimax}
\begin{small}
    \widetilde{O}\left(\frac{H^3 \left|\cS\right| C^*}{\varepsilon^2} + \frac{H^{5.5}\left|\cS\right|C^*}{\varepsilon}\right)
\end{small}
\end{equation}
offline samples\footnote{Technically, \citep{zhan2022offline} considers the non-homogeneous setting, and expresses their result in terms of number of trajectories. In obtaining \cref{eqn:offlineminimax}, we `removed' an $H$ factor due to our dynamics being homogeneous, and add it back to express the result in terms of number of samples.
However, notice that \citep{zhan2022offline} consider the reward-aware setting, which is simpler than reward-free RL setting that we consider.
This should add an additional $|\cS|$ factor that is not accounted for in \cref{eqn:offlineminimax},  see the paper \cite{jin2020rewardfree} for more details.}.
In terms of offline data, our procedure has a similar dependence on various factors, 
but it depends on the desired accuracy $\varepsilon$ through $\widetilde{O}(1/\varepsilon)$ 
as opposed to $\widetilde{O}(1/\varepsilon^2)$ which is typical for an offline algorithm.
This implies that in the small-$\varepsilon$ regime, if sufficient online samples are collected, 
one can improve upon a fully offline procedure by collecting a number of additional online samples
in a non-reactive way.

Finally, notice that one may improve upon an offline dataset by collecting more data from the distribution $\mu$,
i.e., without performing experimental design. Compared to this latter case, notice that 
our \emph{online sample complexity does not depend on the concentrability coefficient}.
Further discussion can be found in \cref{app:comparison}.

\section{Proof}\label{sec:proof}
In this section we prove \cref{thm_main}, 
and defer the proofs of the supporting statements to the Appendix \ref{app:proof_main}. 

Let us define the comparator policy $\pi_*^\dag$ used for the comparison in \cref{eqn:suboptbound} 
to be the (deterministic) policy with the highest value function on the sparsified MDP:
$$\pi_*^\dag := \mathop{\arg\max}_{\pi \in \Pi} V_1(s_1 ; \P^{\dagger}, r^{\dagger}, \pi). $$
We can bound the suboptimality using the triangle inequality as
\begin{align*}
    &V_1\left(s_1;\kprob,r^\dag,\pi_*^\dag\right) - V_1\left(s_1;\kprob,r^\dag,\pi_{final}\right)
    \leq \left|V_1\left(s_1;\kprob,r^\dag,\pi_*^\dag\right) - V_1\left(s_1;\tkprob,r^\dag,\pi_*^\dag\right)\right| \\
    &\hspace{1ex} + \underbrace{V_1\left(s_1;\tkprob,r^\dag,\pi_*^\dag\right) - V_1\left(s_1;\tkprob,r^\dag,\pi_{final}\right)}_{\leq 0 } + \left|V_1\left(s_1;\tkprob,r^\dag,\pi_{final} - V_1\left(s_1;\kprob,r^\dag,\pi_{final}\right)\right) \right| \\
    &\hspace{10ex} \leq 2 \sup_{\pi \in \Pi, r} \left|V_1\left(s_1;\kprob,r^\dag,\pi\right) - V_1\left(s_1;\tkprob,r^\dag,\pi\right)\right|.
\end{align*}
The middle term after the first inequality is negative due to the optimality of $\piFinal$ on $\widetilde \P^{\dagger}$ and $r^\dag$.
It suffices to prove that for any arbitrary policy $\pi$ and reward function $r$
the following statement holds with probability at least $1-\delta$
\begin{equation}
\label{eqn:esterror}
    \underbrace{\left|V_1\left(s_1;\kprob,r^\dag,\pi\right) - V_1\left(s_1;\tkprob,r^\dag,\pi\right)\right|}_{\text{Estimation error}} \leq \frac{\varepsilon}{2}.
\end{equation}
\paragraph{Bounding the estimation error using the population uncertainty function}
In order to prove \cref{eqn:esterror}, we first define the population \emph{uncertainty function} $X$, 
which is a scalar function over the state-action space.
It represents the maximum estimation error on the value of any policy 
when it is evaluated on $\widetilde {\mathcal M}^\dagger$ instead of $\mathcal M$.
For any $(s,a) \in \cS \times \cA$, the uncertainty function is defined as $X_{H+1}(s,a) := 0$ and for $h \in [H],$
\begin{equation*}
    X_h(s, a):=\min \Big\{H-h+1; \; 9H \phi(m(s, a))+\left(1+\frac{1}{H}\right)  \sum_{s^{\prime}} \widetilde{\mathbb{P}}^{\dagger}\left(s^{\prime} \mid s, a\right)\left(\max _{a^{\prime}}\left\{X_{h+1}\left(s^{\prime}, a^{\prime}\right)\right\}\right)  \Big\}.
\end{equation*}
We extend the definition to the absorbing state by letting $X_h(s^\dag, a) = 0$ for any $h \in [H], a \in \cA.$ 
The summation $\sum_{s^\prime}$ used above is over $s^\prime \in \cS \cup \{s^\dag\}$, but since $X_h(s^\dag, a) = 0$ for any $h \in [H], a \in \cA,$ it is equivalent to that over $s^\prime \in \cS.$ Intuitively, $X_h(s,a)$ takes a similar form as Bellman optimality equation. The additional $(1+1/H)$ factor and additional term $9H\phi(m(s,a))$ quantify the uncertainty of the true Q function on the sparsifed MDP and $9H\phi(m(s,a))$ will converge to zero when the sample size goes to infinity. This definition of uncertainty function and the following lemma follow closely from the uncertainty function defined in \citep{menard2021fast}.

The next lemma highlights the key property of the uncertainty function $X$,
namely that for any reward function and any policy $\pi,$ we can upper bound the estimation error via the 
uncertainty function at the initial times-step; it is proved in \cref{sec:proof_lemma1}.
\begin{lemma}\label{lem1:maintext}
    With probability $1-\delta$, for any reward function $r$ and any deterministic policy $\pi$, it holds that
    \begin{equation}
    \label{eqn:lem1:maintext}
        |V_1(s_1;\widetilde{\P}^\dag,r^\dag,\pi) - V_1(s_1;\P^\dag,r^\dag,\pi)|
       \leq 
       \max_{a}X_1(s_1,a) + C \sqrt{\max_{a} X_1(s_1,a)}.
    \end{equation}
\end{lemma}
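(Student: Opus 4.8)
The plan is to prove a one-sided bound
\[
V_1(s_1;\widetilde{\mathbb P}^{\dagger},r^{\dagger},\pi) - V_1(s_1;\mathbb P^{\dagger},r^{\dagger},\pi) \;\le\; \max_a X_1(s_1,a) + C\sqrt{\max_a X_1(s_1,a)},
\]
together with its mirror image obtained by swapping the two kernels, both holding on a single high-probability event $\mathcal E$ that is defined \emph{without reference to} $r$ or $\pi$; combining the two directions yields the absolute value in \eqref{eqn:lem1:maintext}. The reason $\mathcal E$ must be reward- and policy-free is that we cannot afford a union bound over $\Pi$ (or over reward functions), so all the randomness has to be localized in the empirical transition probabilities.

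First I would construct $\mathcal E$. The kernels $\widetilde{\mathbb P}^{\dagger}$ and $\mathbb P^{\dagger}$ agree on routing all ``sub-$\Phi$'' mass into $s^{\dagger}$, so the only randomness is in the online frequencies $m(s,a,s')/m(s,a)$ on edges satisfying the critical condition \eqref{eqn:criticalcondition}. I would invoke a Bernstein-type deviation inequality for the empirical transition distribution (as in \citep{menard2021fast}); the bonus $\phi$ of \eqref{def_bonus_maintext}, with its $|\cS|\log(e(1+x/|\cS|))$ metric-entropy term for the simplex, is tailored so that on $\mathcal E$, for every $(s,a)$ and \emph{simultaneously for every} $f:\cS\cup\{s^{\dagger}\}\to[0,H]$,
\[
\Big|\big(\widetilde{\mathbb P}^{\dagger}-\mathbb P^{\dagger}\big)(\cdot\mid s,a)^\top f\Big| \lesssim \sqrt{\tfrac{\mathrm{Var}_{\mathbb P^{\dagger}(\cdot\mid s,a)}(f)}{m(s,a)}\,\iota}+\tfrac{H\iota}{m(s,a)}, \qquad \mathbb P^{\dagger}(\cdot\mid s,a)\preceq\big(1+\tfrac1H\big)\widetilde{\mathbb P}^{\dagger}(\cdot\mid s,a)+O\!\big(\tfrac{H\iota}{m(s,a)}\big)\mathbf 1 ,
\]
where $\iota$ denotes the logarithmic factor in $\phi$; the $\log(6H|\cS||\cA|/\delta)$ term pays for the union bound over $(s,a)$, and uniformity over $f$ (hence over all rewards and all deterministic policies, and over the fixed functions $V_{h+1}(\cdot;\mathbb P^\dagger,r^\dagger,\pi)$) is exactly what the simplex term buys.

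Next, on $\mathcal E$, I would perform a backward induction on $h=H+1,H,\dots,1$, showing that for every deterministic $\pi$ and reward $r$,
\[
\big|V_h(s;\widetilde{\mathbb P}^{\dagger},r^{\dagger},\pi)-V_h(s;\mathbb P^{\dagger},r^{\dagger},\pi)\big|\;\le\;\max_a X_h(s,a)\;+\;(\text{accumulated Bernstein variance remainder}),
\]
with $X_h$ as defined before the lemma. The inductive step rests on the Bellman decomposition, writing $a_h=\pi_h(s)$,
\[
V_h^{\pi}(s;\widetilde{\mathbb P}^{\dagger})-V_h^{\pi}(s;\mathbb P^{\dagger})=\widetilde{\mathbb P}^{\dagger}(\cdot\mid s,a_h)^\top\!\big(V_{h+1}^{\pi}(\cdot;\widetilde{\mathbb P}^{\dagger})-V_{h+1}^{\pi}(\cdot;\mathbb P^{\dagger})\big)+\big(\widetilde{\mathbb P}^{\dagger}-\mathbb P^{\dagger}\big)(\cdot\mid s,a_h)^\top V_{h+1}^{\pi}(\cdot;\mathbb P^{\dagger}).
\]
The first term matches the recursion of $X$ under $\widetilde{\mathbb P}^{\dagger}$ after applying the inductive hypothesis; the second term is controlled on $\mathcal E$ by the bonus $9H\phi(m(s,a_h))$ (the variance $\mathrm{Var}(V_{h+1})\le H\|V_{h+1}\|_\infty$ and AM--GM produce the factor $9H$ and, together with the second displayed inequality above, the $(1+1/H)$ inflation needed to keep the recursion self-consistent); the truncation at $H-h+1$ is immediate from $0\le V\le H-h+1$; and $s^{\dagger}$ contributes nothing since $X_h(s^{\dagger},\cdot)=r^{\dagger}(s^{\dagger},\cdot)=0$. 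Unrolling to $h=1$, the bonus terms telescope — the factor $(1+1/H)^{H}=O(1)$ contains the geometric blow-up — into exactly $\max_a X_1(s_1,a)$, while the accumulated Bernstein variance remainders, by Cauchy--Schwarz over the $H$ steps and the fact that the summed variances are themselves dominated by $X$, are bounded by $C\sqrt{\max_a X_1(s_1,a)}$; this gives \eqref{eqn:lem1:maintext}.

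I expect the main obstacle to be twofold: (i) closing the recursion with the right constants so that the propagated bound is \emph{literally} $\max_a X_h(s,a)$ rather than an inflated version — this is the delicate $(1+1/H)$ and $9H$ bookkeeping, which mirrors \citep{menard2021fast}; and (ii) ensuring every estimate holds \emph{uniformly over all reward functions and all policies} from the single event $\mathcal E$, which forces the uncertainty function $X$ and the concentration statement to be stated in the reward-free, policy-free ``for all $f:\cS\to[0,H]$'' form described above.
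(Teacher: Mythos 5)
Your proposal is correct and follows essentially the same route as the paper: a reward- and policy-free concentration event on the empirical kernel (the paper states it as a KL ball of radius $\phi(m(s,a))/H$ and converts it to a uniform-over-$f$ Bernstein bound via a transportation lemma), a backward Bellman induction producing a recursive bound with the $9H\phi$ bonus, the variance term, and the $(1+1/H)$ inflation, and then a split into the $X$-part plus a variance part that is controlled by Cauchy--Schwarz and shown to be dominated by $C\sqrt{\max_a X_1(s_1,a)}$. The paper merely packages these steps through the named intermediate functions $W_h^\pi$, $X_h^\pi$, $Y_h^\pi$; no substantive difference.
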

The uncertainty function $X$ contains the inverse number of \emph{online} samples $1/m(s,a)$ through $\phi(m(s,a))$, 
and so \cref{lem1:maintext} expresses the estimation error in \cref{eqn:esterror}
as the maximum expected size of the confidence intervals $\sup_{\pi}\mathbb \E_{\widetilde{\P}^\dag,(s,a) \sim \pi} \sqrt{1/m(s,a)}$,
a quantity that directly depends on the number $m(\cdot,\cdot)$ of samples collected during the online phase.

\paragraph{Leveraging the exploration mechanics}
Throughout this section, $C$ denotes some universal constant and may vary from line to line.
Recall that the agent greedily minimizes the \emph{empirical uncertainty function} $U$ to compute the exploratory policy $\piEx$.
The empirical uncertainty is defined as $U_{H+1}^k(s,a) = 0$ for any $k, s \in \cS, a \in \cA$ and 
\begin{equation}\label{eq_def_U_h}
    U_h^k(s,a) = H \min \{1, \phi(n^k(s,a))\} + \emP^\dag(s,a)^\top (\max_{a^\prime} U_{h+1}^k (\cdot,a^\prime)),
\end{equation}
where $n^k(s,a)$ is the counter of the times we encounter $(s,a)$ until the beginning of the $k$-the virtual episode in the simulation phase. Note that, $U_h^k(s,a)$ takes a similar form as $X_h(s,a),$ except that $U_h^k(s,a)$ depends on the empirical transition probability $\widehat \P^\dag$ while $X_h(s,a)$ depends on the true transition probability on the sparsified MDP.
For the exploration scheme to be effective, $X$ and $U$ should be close in value, a concept which is 
at the core of this work and which we formally state below and prove in \cref{sec:proof_lemma2}.
\begin{lemma}[Bounding uncertainty function with empirical uncertainty functions]
\label{lem2:maintext}
    With probability at least $1-\delta$, we have for any $(h,s,a) \in [H] \times \cS \times \cA,$
    \begin{equation*}
        X_h(s, a) \leq \frac{C}{K} \sum_{k=1}^K U_h^k(s, a).
    \end{equation*}
\end{lemma}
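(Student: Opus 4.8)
The plan is to mirror the reward-free exploration analysis of \citep{menard2021fast}, adapted to the sparsified dynamics. First I would fix a high-probability event $\mathcal E$ (with $\P(\mathcal E)\ge 1-\delta$ after a union bound and a rescaling of $\delta$) on which two things hold. (i) Both empirical sparsified kernels are close to the population one: for every $(s,a)$ and every bounded nonnegative $g$ on $\cS\cup\{s^\dagger\}$, the errors $|(\widehat{\P}^\dagger-\P^\dagger)(\cdot\mid s,a)^\top g|$ and $|(\widetilde{\P}^\dagger-\P^\dagger)(\cdot\mid s,a)^\top g|$ are controlled by a Bernstein-type bound involving $\phi$ and the relevant sample counts; this uses that, by \cref{def:mdp_maintext}, on an edge retained by the sparsification one has at least $\Phi=\widetilde\Theta(H^2)$ offline samples, while on an edge redirected to $s^\dagger$ the sparsified kernels agree with $\P^\dagger$ by construction, so only the scalar mass into $s^\dagger$ has to be concentrated. (ii) The online counts are not much smaller than the virtual ones: deploying $\piEx=\operatorname{Uniform}\{\pi^k\}$ on $\MDP$ produces counts $m(s,a)$ that, up to constants and lower-order additive terms, dominate $\sum_{k=1}^{K}\sum_{h=1}^{H} d_h^{\pi^k}(s,a)$, the cumulative occupancy of the simulated policies on $\Msp$, and these in turn dominate the virtual counts $n^k(s,a)$ of \cref{alg2} in an averaged sense. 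Point (ii) is established by a Freedman-type martingale concentration for the counts, together with the observation that on edges present in $\Msp$ the true dynamics $\P$ and the sparsified dynamics $\P^\dagger$ coincide, so a trajectory of $\piEx$ on $\MDP$ couples with one on $\Msp$ until the first (harmless) excursion to $s^\dagger$; hence the occupancy of $\pi^k$ on $\Msp$ is pointwise no larger than its occupancy on $\MDP$.

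On $\mathcal E$, the proof is a backward induction on $h$ from $H+1$ down to $1$ showing $X_h(s,a)\le\frac{C}{K}\sum_{k=1}^{K}U_h^k(s,a)$ for a suitable universal constant $C$. The base case is trivial. For the inductive step, when the minimum defining $X_h(s,a)$ is attained by the cap $H-h+1$ (in particular whenever $(s,a)$ is unreachable in $\Msp$, so all $n^k(s,a)=0$ and each $U_h^k(s,a)$ already contributes $H-h+1$ local terms of size $H$), the bound is immediate once $CH\ge1$. Otherwise one compares the two recursions term by term: the local term $9H\phi(m(s,a))$ against $\frac{CH}{K}\sum_k\min\{1,\phi(n^k(s,a))\}$, and the propagation term $(1+\tfrac1H)\sum_{s'}\widetilde{\P}^\dagger(s'\mid s,a)\max_{a'}X_{h+1}(s',a')$ against $\frac{C}{K}\sum_k\widehat{\P}^\dagger(s,a)^\top\max_{a'}U_{h+1}^k(\cdot,a')$. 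For the local term I would use (ii) plus the elementary fact that $n^k(s,a)$ grows essentially linearly in $k$ times the per-episode occupancy, so that $\frac1K\sum_{k=1}^{K}\min\{1,\phi(n^k(s,a))\}\gtrsim\phi(m(s,a))$ up to a $\polylog(K)$ factor absorbed into $C$ (and into the $\polylog$ of \cref{thm_main}); the case $n^k(s,a)=0$ is handled by the cap $\min\{1,\cdot\}$. For the propagation term I would first apply the induction hypothesis to each $X_{h+1}(s',a')$ and use $\max_{a'}\frac1K\sum_k U^k_{h+1}(\cdot,a')\le\frac1K\sum_k\max_{a'}U^k_{h+1}(\cdot,a')$, and then replace $\widetilde{\P}^\dagger$ by $\widehat{\P}^\dagger$ using the concentration in (i): writing $g=\frac1K\sum_k\max_{a'}U_{h+1}^k(\cdot,a')$, the mismatch $|(\widetilde{\P}^\dagger-\widehat{\P}^\dagger)(\cdot\mid s,a)^\top g|$ is split by a Bernstein/AM--GM step into a piece absorbed by the $\tfrac1H$ slack of the $(1+\tfrac1H)$ factor and a piece absorbed by the local term; this is precisely the purpose of the $(1+1/H)$ inflation and of the constant $9$ in the definition of $X_h$.

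The main obstacle is carrying out the inductive step while simultaneously paying, out of the \emph{same} local term $H\phi$ and the \emph{same} $(1+1/H)$ slack, for both the kernel mismatch ($\widetilde{\P}^\dagger$ versus $\widehat{\P}^\dagger$, each only approximately equal to $\P^\dagger$) and the count mismatch ($m(s,a)$ versus the ensemble $\{n^k(s,a)\}_{k}$), without the constants compounding over the $H$ steps of the recursion. Making the count comparison $\frac1K\sum_k\phi(n^k(s,a))\gtrsim\phi(m(s,a))$ quantitative --- that is, transferring a pigeonhole/potential bound from the purely offline simulation in \cref{alg2} to the freshly collected online sample, which is exactly where the non-reactive offline design ``pays off'' --- is where most of the technical bookkeeping lies. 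The remaining ingredients (the Bernstein and Freedman bounds, the $\MDP$--$\Msp$ coupling, and the union bounds defining $\mathcal E$) are routine.
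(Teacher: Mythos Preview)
Your skeleton is sound and would eventually close, but the paper's route is both simpler and stronger, and in particular it dissolves the very obstacle you flag at the end.

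The main difference is that the paper does not prove the averaged inequality directly. Instead it proves the \emph{pointwise} bound
\[
X_h(s,a)\;\le\;C\Bigl(1+\tfrac{1}{H}\Bigr)^{3(H-h)} U_h^k(s,a)
\quad\text{for every individual }k,
\]
by backward induction on $h$; averaging over $k$ then gives the lemma with $C$ replaced by $Ce^3$. Two ingredients make the pointwise version go through. First, the threshold $\Phi=\widetilde\Theta(H^2)$ yields a \emph{multiplicative} accuracy lemma, $\bigl(1-\tfrac1H\bigr)\widehat{\P}^\dagger(s'\mid s,a)\le\P^\dagger(s'\mid s,a)\le\bigl(1+\tfrac1H\bigr)\widehat{\P}^\dagger(s'\mid s,a)$ for every retained edge, so swapping $\widetilde{\P}^\dagger\to\P^\dagger\to\widehat{\P}^\dagger$ in the propagation term costs only a $(1+1/H)$ factor per step and telescopes to $e^{O(1)}$; there is no need for your Bernstein/AM--GM split to absorb an additive error into the local term. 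Second, the local-term comparison is just monotonicity: one shows $\phi(m(s,a))\lesssim\phi(n^{K}(s,a))$ (this is where your high-probability count comparison is used once, at $k=K$), and then $\phi(n^{K}(s,a))\le\phi(n^{k}(s,a))$ for every $k\le K$ because $n^k$ is nondecreasing and $\phi$ is nonincreasing. Hence $9H\phi(m(s,a))\le CH\min\{1,\phi(n^k(s,a))\}$ holds for each $k$ with a fixed constant, and your harmonic-sum argument $\frac{1}{K}\sum_k\phi(n^k)\gtrsim\phi(m)$, with its attendant $\log K$ loss, is unnecessary.

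In short, your approach is correct but takes a detour: by aiming at the averaged bound you are forced to compare $\phi(m)$ against $\frac1K\sum_k\phi(n^k)$ and to handle the kernel mismatch additively, which is exactly what creates the ``constants compounding over $H$ steps'' worry. The paper's pointwise-per-$k$ induction, combined with multiplicative accuracy from the $\Phi$ threshold, avoids both issues.
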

Notice that $X_h$ is the population uncertainty after the online samples have been collected,
while $U_h^k$ is the corresponding empirical uncertainty which varies during the planning phase.

\paragraph{Rate of decrease of the estimation error}
Combining \cref{lem1:maintext,lem2:maintext} shows that (a function of) the agent's uncertainty estimate $U$ 
upper bounds the estimation error in \cref{eqn:esterror}.
In order to conclude, we need to show that $U$ decreases on average at the rate $1/K$,
a statement that we present below and prove in \cref{sec:emp_uncertainty}.
\begin{lemma}\label{lem4:maintext}
    With probability at least $1-\delta$, we have
    \begin{equation}
    \label{eqn:lem4:maintext}
        \frac{1}{K} \sum_{k=1}^K U_1^k(s, a) \leq \frac{H^2|\mathcal{S}|^2|\mathcal{A}| }{K} \polylog\left(K, \left|\cS\right|,\left|\cA\right|,H,\frac{1}{\varepsilon},\frac{1}{\delta}\right).
    \end{equation}
    Then, for any $\varepsilon > 0,$ if we take
    \begin{equation*}
        K := \frac{C H^2 \left|\cS\right| \left|\cA\right|}{\varepsilon^2} \bigg(\iota + \left|\cS\right|\bigg) \polylog\left(\left|\cS\right|,\left|\cA\right|,H,\frac{1}{\varepsilon},\frac{1}{\delta}\right),
    \end{equation*}
    then with probability at least $1-\delta$, it holds that
    \begin{equation*}
        \frac{1}{K} \sum_{k=1}^{K} U_1^k(s_1,a) \leq \varepsilon^2.
    \end{equation*}
\end{lemma}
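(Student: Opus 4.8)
The plan is to read $\max_{a}U_1^k(s_1,a)$ as the value of the greedy policy $\pi^k$ on the empirical sparsified kernel $\widehat{\P}^\dagger$ with the exploration bonus playing the role of the reward, then carry out the standard reward-free ``pseudo-regret'' bookkeeping (as in \citep{menard2021fast}); dividing the resulting bound by $K$ gives the first inequality, and the second follows by substituting the prescribed $K$. Concretely: since $\pi^k$ is greedy for $U^k$ and (with the same convention as for $X$) $U_h^k(s^\dagger,\cdot)=0$, a downward induction on $h$ shows $\max_{a}U_h^k(s,a)=U_h^k(s,\pi_h^k(s))=V_h^{\pi^k}(s;\widehat{\P}^\dagger,g^k)$, the value at $(s,h)$ of $\pi^k$ on $\widehat{\P}^\dagger$ under the per-step reward $g_h^k(s,a):=H\min\{1,\phi(n^k(s,a))\}$. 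Because $U_1^k(s,a)\le\max_{a'}U_1^k(s,a')$ for every $a$, it suffices to bound $\sum_{k=1}^K\max_a U_1^k(s_1,a)=\sum_{k=1}^K\E_{\widehat{\P}^\dagger,\pi^k}\!\big[\sum_{h=1}^H g_h^k(s_h,a_h)\big]=H\sum_{(s,a)\in\cS\times\cA}\sum_{k=1}^K w^k(s,a)\min\{1,\phi(n^k(s,a))\}$ (this is the form used downstream, at the initial state, via Lemmas \ref{lem1:maintext}--\ref{lem2:maintext}), where $w^k(s,a)\le H$ denotes the expected number of visits to $(s,a)$ during the $k$-th virtual episode under $\pi^k$ on $\widehat{\P}^\dagger$ and mass reaching $s^\dagger$ contributes $0$.

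The difficulty is that the realized virtual count $n^k(s,a)$ at the start of episode $k$ is random and correlated with $w^k$. I would invoke a uniform Freedman/Bernstein inequality --- the device already used in \citep{menard2021fast} and restated as an appendix lemma --- to show that with probability at least $1-\delta$, simultaneously for all $k,s,a$, $n^k(s,a)\ge\tfrac{1}{2}\bar n^k(s,a)-cH\iota$, where $\bar n^k(s,a):=\sum_{k'<k}w^{k'}(s,a)$ is the deterministic nondecreasing pseudo-count satisfying $\bar n^{k+1}(s,a)=\bar n^k(s,a)+w^k(s,a)$ and $\bar n^{K+1}(s,a)\le KH$, and $\iota$ is the log-term in the statement. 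This trades the random counts for pseudo-counts at the price of an additive $O(H\iota)$ slack.

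Fix $(s,a)$ and split $\sum_k w^k(s,a)\min\{1,\phi(n^k(s,a))\}$ according to whether $\bar n^k(s,a)\le 4cH\iota$. On the low range $\min\{1,\cdot\}\le1$ and the increments $w^k(s,a)$ accumulate to at most $4cH\iota+H$ there, so the contribution is $\widetilde{O}(H)$. On the high range $n^k(s,a)\ge\tfrac{1}{4}\bar n^k(s,a)$, and using $\phi(x)\le H\beta/x$ with $\beta:=\iota+|\cS|\log(e(1+KH/|\cS|))=\widetilde{O}(|\cS|)$ together with $\sum_k\frac{\bar n^{k+1}(s,a)-\bar n^k(s,a)}{\bar n^k(s,a)}\le\log\bar n^{K+1}(s,a)+O(1)$, the contribution is $\widetilde{O}(H\beta)$. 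Summing over the $|\cS||\cA|$ pairs and restoring the outer factor $H$ gives $\sum_{k=1}^K\max_a U_1^k(s_1,a)=\widetilde{O}(H^2\beta|\cS||\cA|)=\widetilde{O}(H^2|\cS|^2|\cA|)$; dividing by $K$ is the first inequality of the lemma (the $\polylog$ absorbs $\log K$ from the telescoping and the factors inside $\beta$). For the second inequality, substituting $K=\tfrac{CH^2|\cS||\cA|}{\varepsilon^2}(\iota+|\cS|)\polylog(\cdot)$ makes $\tfrac{H^2|\cS|^2|\cA|}{K}\polylog(K,\dots)\le\varepsilon^2$ for a large enough absolute constant $C$, since $|\cS|^2|\cA|\le|\cS||\cA|(\iota+|\cS|)$ and $\polylog(K)$ is itself $\polylog(|\cS|,|\cA|,H,1/\varepsilon,1/\delta)$.

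I expect the count concentration to be where the real work lies: it is the only genuinely probabilistic ingredient, it has to hold uniformly over all $K$ episodes and all state-action pairs (so it needs a careful martingale argument plus a union bound), and obtaining an \emph{additive} $O(H\iota)$ slack rather than a multiplicative loss is exactly what allows the small-pseudo-count regime in the pigeonhole step to be swept into lower-order terms.
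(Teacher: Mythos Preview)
Your proposal is correct and mirrors the paper's proof essentially step for step: the paper also unrolls $\max_a U_1^k(s_1,a)$ via the greedy property to $H\sum_{k}\sum_{h}\sum_{(s,a)}\widehat d^\dagger_{\pi^k,h}(s,a)\min\{1,\phi(n^k(s,a))\}$, invokes exactly the count concentration you describe (their event $\event^2$, proved via Lemma~\ref{lem_w}) to replace $n^k(s,a)$ by the pseudo-count $\sum_{i<k}\sum_h\widehat d^\dagger_{\pi^i,h}(s,a)$ up to an additive $H\iota$ slack, and then runs the same telescoping/potential argument (packaged as Lemmas~\ref{lem_bound_bonus2} and~\ref{lem_sum}) plus Jensen over $(s,a)$; the final substitution of $K$ is handled by a dedicated calculus lemma (Lemma~\ref{lem_ln}) but is exactly your closing observation. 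Your emphasis on the count concentration being the only genuinely probabilistic ingredient is also how the paper organizes things: all of this is conditioned on the good event $\event$, and $\event^2$ is the piece that matters here.
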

After combining  \cref{lem1:maintext,lem2:maintext,lem4:maintext}, we see that the estimation error can be bounded as
\begin{align*}
     \left|V_1\left(s_1;\kprob,r^\dag,\pi\right) - V_1\left(s_1;\tkprob,r^\dag,\pi\right)\right| 
     &\leq \max _a X_1\left(s_1, a\right)+ C\sqrt{\max _a X_1\left(s_1, a\right)} \\
     &\leq C \max_{a} \left[\sqrt{\frac{1}{K} \sum_{k=1}^{K} U_1^k(s_1,a)} + \frac{1}{K} \sum_{k=1}^{K} U_1^k(s_1,a)\right] \\
     &\leq C \left(\varepsilon + \varepsilon^2\right) \\
     &\leq C\varepsilon \tag{for $0 < \varepsilon < const$}
 \end{align*}
 Here, the constant $C$ may vary between lines. Rescaling the universal constant $C$ and the failure probability $\delta$,
 we complete the upper bound in equation \eqref{eqn:esterror} and hence the proof for the main result.

\newpage
\bibliography{arxiv_v1}
\bibliographystyle{plainnat}

\newpage
\appendix
\def\pioff{\pi_{off}}

\tableofcontents
\newpage
\section{Proof of the main result}\label{app:proof_main}

\subsection{Definitions}\label{app:sec:def}
In this section, we define some crucial concepts that will be used in the proof of the main result.

\subsubsection{Sparsified MDP}\label{app:sec:def_mdp}
First, we restate the \cref{def:mdp_maintext} in the main text.

\begin{definition}[Sparsified MDP]
   Let $s^\dag$ be an absorbing state, i.e., such that $\P \left(s^\dag \mid s^\dag, a \right) = 1$ and $r(s^\dag, a) = 0$
   for all $a \in \cA$. 
   The state space in the sparsified MDP $\Msp$ is defined as that of the original MDP 
   with the addition of $s^\dag$. The dynamics $\mathbb{P}^{\dagger}$ of the sparsified MDP are defined as
   \begin{align}
   	\mathbb{P}^{\dagger} (s^\prime \mid s,a)
   	= 
   	\begin{cases}
   		\mathbb{P}(s^\prime \mid s,a) & \text{if} \; N(s,a,s^\prime) \geq \Phi \\
   		0 \quad & \text{if} \; N(s,a,s^\prime) < \Phi,
   	\end{cases}
\qquad 
   	\mathbb{P}^{\dagger} (s^\dagger \mid s,a) 
   	=
    \sum_{ \substack{ s' \neq s^\dagger \\ N(s,a,s') < \Phi}   } \P(s' \mid s,a).
   \end{align}
For any deterministic reward function $r:\cS \times \cA \to [0,1],$ 
the reward function on the sparsified MDP is defined as $r^\dag(s,a) = r(s,a)$;
for simplicity we only consider deterministic reward functions.
\end{definition}

\ 

In the offline phase of our algorithm, we simulate the virtual episodes from the empirical version of sparsified MDP (See Algorithm \ref{alg2}). Now we formally this MDP.

\begin{definition}[Emprical sparsified MDP]
   Let $s^\dag$ be the absorbing state defined in the sparsified MDP.
   The state space in the empirical sparsified MDP $\Mspem$ is defined as that of the original MDP 
   with the addition of $s^\dag$. The dynamics $\emP^{\dagger}$ of the sparsified MDP are defined as
   \begin{align}
   \label{eqn:sparsedyn_emp}
   	\emP^{\dagger} (s^\prime \mid s,a)
   	= 
   	\begin{cases}
   		\frac{N(s,a,s^\prime)}{N(s,a)} & \text{if} \; N(s,a,s^\prime) \geq \Phi \\
   		0 \quad & \text{if} \; N(s,a,s^\prime) < \Phi,
   	\end{cases}
\qquad 
   	\emP^{\dagger} (s^\dagger \mid s,a) 
   	=
    \sum_{ \substack{ s' \neq s^\dagger \\ N(s,a,s') < \Phi}   } \frac{N(s,a,s^\prime)}{N(s,a)}.
   \end{align}
For any deterministic reward function $r:\cS \times \cA \to [0,1],$ 
the reward function on the empirical sparsified MDP is defined as $r^\dag(s,a) = r(s,a)$;
for simplicity we only consider deterministic reward functions. Here, the counters $N(s,a)$ and $N(s,a,s^\prime)$ are the number of $(s,a)$ and $(s,a,s^\prime)$ in the offline data, respectively.
\end{definition}

\ 

Finally, in the planning phase, after we interact with the true environment for many online episodes, construct a fine-estimated sparsified MDP, which is used to extract the optmal policy of given reward functions. We formally define it below.

\begin{definition}[Fine-estimated sparsified MDP]
   Let $s^\dag$ be the absorbing state defined in the sparsified MDP.
   The state space in the fine-estimated sparsified MDP $\widetilde{\mdp}^\dag$ is defined as that of the original MDP 
   with the addition of $s^\dag$. The dynamics $\widetilde{\mathbb{P}}^{\dagger}$ of the sparsified MDP are defined as
   \begin{align}
   \label{eqn:sparsedyn_fine_est}
   	\widetilde{\P}^{\dagger} (s^\prime \mid s,a)
   	= 
   	\begin{cases}
   		\frac{m(s,a,s^\prime)}{m(s,a)} & \text{if} \; N(s,a,s^\prime) \geq \Phi \\
   		0 \quad & \text{if} \; N(s,a,s^\prime) < \Phi,
   	\end{cases}
\qquad 
   	\widetilde{\P}^{\dagger} (s^\dagger \mid s,a) 
   	=
    \sum_{ \substack{ s' \neq s^\dagger \\ N(s,a,s') < \Phi}   } \frac{m(s,a,s^\prime)}{m(s,a)}.
   \end{align}
For any deterministic reward function $r:\cS \times \cA \to [0,1],$ 
the reward function on the fine-estimated sparsified MDP is defined as $r^\dag(s,a) = r(s,a)$;
for simplicity we only consider deterministic reward functions. Here, the counters $N(s,a)$ and $N(s,a,s^\prime)$ are the number of $(s,a)$ and $(s,a,s^\prime)$ in the offline data, respectively, while $m(s,a)$ and $m(s,a,s^\prime)$ are the counters of $(s,a)$ and $(s,a,s^\prime)$ in online episodes.
\end{definition}

\newpage
\subsubsection{High probability events}\label{app:sec:def_event}
In this section, we define all high-probability events we need in order to make our theorem to hold. Specifically, we define
\begin{align*}
    \event^P :&= \Bigg\{\forall (s,a,s^\prime) \text{ s.t. } N(s,a,s^\prime) \geq \Phi, \left|\emP^\dag(s^\prime \mid s,a) - \P^\dag(s^\prime \mid s,a)\right| \\
    &\hspace{20ex} \leq \left. \sqrt{\frac{2 \emP^\dag(s^\prime \mid s,a)}{N(s,a)} \log \left(\frac{12\left|\cS\right|^2 \left|\cA\right|}{\delta}\right)} + \frac{14}{3N(s,a)} \log \left(\frac{12\left|\cS\right|^2 \left|\cA\right|}{\delta}\right)\right\}; \\
    \event^2 &:= \left\{\forall k \in \mathbb{N}, (s,a) \in \cS \times \cA, n^k(s,a) \geq \frac{1}{2}\sum_{i < k} \sum_{h \in [H]} \widehat{d}_{\pi^i,h}^\dag(s,a) - H \ln \left(\frac{6H\left|\cS\right| \left|\cA\right|}{\delta}\right)\right\}; \\
    \event^3 &:= \left\{\forall k \in \mathbb{N}, (s,a) \in \cS \times \cA, n^k(s,a) \leq 2\sum_{i < k} \sum_{h \in [H]} \widehat{d}_{\pi^i,h}^\dag(s,a) + H \ln \left(\frac{6H\left|\cS\right| \left|\cA\right|}{\delta}\right)\right\}; \\
    \event^4 &:= \bigg\{\forall (s,a) \in \cS \times \cA, \KL \left(\widetilde{\P}^\dag(s,a); \P^\dag(s,a)\right) \\
    &\hspace{30ex} \leq \left.\frac{1}{m(s,a)} \left[ \log \left(\frac{6\left|\cS\right| \left|\cA\right|}{\delta}\right)+ \left|\cS\right| \log \left(e\left(1+\frac{m(s,a)}{\left|\cS\right|}\right)\right) \right]\right\};\\
    \event^5 &:= \left\{\forall (s,a) \in [H] \times \cS \times \cA, m(s,a) \geq \frac{1}{2} K_{de} \sum_{h\in[H]} w_h^{mix}(s,a) - H\ln \left(\frac{6H\left|\cS\right| \left|\cA\right|}{\delta}\right) \right\},
\end{align*}
where 
\begin{equation*}
    w^{mix}_h(s,a) := \frac{1}{K_{ucb}} \sum_{k=1}^{K_{ucb}} d^\dag_{\pi^k,h}(s,a).
\end{equation*}
Here, $\KL(\cdot;\cdot)$ denotes the Kullback–Leibler divergence between two distributions. $K_{ucb}$ is the number of episodes of the sub-routine RF-UCB and $\pi^i$ is the policy executed at the i-th episode of RF-UCB (See algorithm \ref{alg2}). $K_{de}$ is the number of episodes executed in the online phase. $n^k(s,a)$ is the counter of $(s,a)$ before the beginning of the $k$-th episode in the offline phase and $m(s,a)$ is the number of $(s,a)$ samples in the online data (See definitions in section \ref{app:sec:def}). We denote $d_{\pi,h}(s,a),$ $d_{\pi,h}^\dag(s,a)$ and $\widehat{d}_{\pi,h}^\dag(s,a)$ as the occupancy measure of $(s,a)$ at stage $h$ under policy $\pi,$ on $\P$ (the true transition dynamics), $\P^\dag$ (the transition dynamics in the sparsfied MDP) and $\emP^\dag$(the transition dynamics in the empirical sparsified MDP) respectively.

For the event defined above, we have the following guarantee, which is proved in \cref{sec:high_prob_event}.

\begin{lemma}\label{lem:high_prob_event}
    For $\delta > 0,$ we have 
    $$
    \event := \left\{\event^P \cup \event^2 \cup \event^3 \cup \event^4 \cup \event^5\right\}
    $$
    happens with probability at least $1-\delta,$
\end{lemma}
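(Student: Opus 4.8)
The plan is to establish each of the five high-probability events $\event^P, \event^2, \event^3, \event^4, \event^5$ separately with failure probability at most $\delta/5$, and then conclude by a union bound over the five events. I would isolate the events into two groups: the \emph{concentration-of-transition} events ($\event^P$ and $\event^4$), which are statements about how well an empirical kernel approximates a population kernel given a fixed number of samples, and the \emph{counter} events ($\event^2, \event^3, \event^5$), which lower/upper bound realized visitation counts by their expected occupancy sums. A delicate point worth flagging at the outset: several of these events are quantified over \emph{all} $k \in \mathbb{N}$ (or all online sample sizes), so the argument cannot simply fix $k$ and union bound over finitely many values; I will need a uniform-in-time martingale argument (a time-uniform Freedman- or Bernstein-type bound, or the standard "stitching"/peeling trick over dyadic blocks of $k$) to handle the supremum over $k$.

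First, for $\event^P$: condition on the counts $N(s,a)$, so that the $N(s,a,s')$ successor samples at a fixed $(s,a)$ are i.i.d. draws from $\P(\cdot\mid s,a)$. For each triple with $N(s,a,s') \geq \Phi$, apply the empirical Bernstein inequality (Bernstein's inequality with the empirical variance proxy $\emP^\dag(s'\mid s,a)$, which up to constants controls the true Bernoulli variance $\P(s'\mid s,a)(1-\P(s'\mid s,a))$) to get the stated two-term bound $\sqrt{2\emP^\dag/N \cdot \log(\cdot)} + \tfrac{14}{3N}\log(\cdot)$; then union bound over the at most $|\cS|^2|\cA|$ relevant triples, which is exactly where the $\log(12|\cS|^2|\cA|/\delta)$ factor comes from (with the extra constant $12 = 5 \cdot \text{(something)}$ absorbing the $\delta/5$ split). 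For $\event^4$, the relevant tool is the standard KL-based concentration bound for empirical distributions on a finite alphabet of size $|\cS|$ (the bound of Jonsson et al.\ / used in \citep{menard2021fast}): with probability $1-\delta'$, $\KL(\widetilde\P^\dag(s,a); \P^\dag(s,a)) \le \tfrac{1}{m(s,a)}\big[\log(1/\delta') + |\cS|\log(e(1+m(s,a)/|\cS|))\big]$, already uniform over the sample size $m(s,a)$; then union bound over $(s,a)$ pairs.

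Second, for the counter events $\event^2, \event^3, \event^5$: here $n^k(s,a) = \sum_{i<k}\sum_{h\in[H]} \mathbb{I}[(s_h^i,a_h^i) = (s,a)]$ is a sum of indicators, and its conditional expectation given the history up to episode $i$ is exactly $\sum_{h}\widehat d^\dag_{\pi^i,h}(s,a)$ (for $\event^2,\event^3$, on the empirical sparsified MDP) or $\sum_h w_h^{mix}(s,a)$ in aggregate for the online phase (for $\event^5$, on the true MDP, noting $\piEx$ is the uniform mixture of the $\pi^k$). So $n^k(s,a) - \tfrac12\sum_{i<k}\sum_h \widehat d^\dag_{\pi^i,h}(s,a)$ is controlled by a Freedman/Bernstein martingale bound; the multiplicative-$\tfrac12$ (resp.\ $2$) form with the additive $H\ln(6H|\cS||\cA|/\delta)$ slack is the standard consequence of a Bernstein inequality in which the variance term is itself dominated by the mean (each episode contributes at most $H$ to the count, giving the $H\ln(\cdot)$ term), combined with the peeling/stitching over $k$ to make it time-uniform. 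Union bounding over $(s,a) \in \cS\times\cA$ and over the three events supplies the remaining $\log$ factors.

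The main obstacle, and the step I would spend the most care on, is making the counter bounds \emph{uniform over all $k$} while keeping the additive slack only $O(H\log(H|\cS||\cA|/\delta))$ rather than something that grows with $k$: a naive union bound over $k$ fails, and a crude time-uniform bound typically costs an extra $\log k$ or $\log\log k$ factor. The clean way is the dyadic peeling argument — split $k$ into blocks $[2^j, 2^{j+1})$, apply a fixed-$k$ Bernstein bound at the right endpoint of each block with failure probability $\delta'/(j+1)^2$ or similar, and use monotonicity of the partial sums within a block — which absorbs the uniformity cost into the logarithmic term without changing the leading order. A secondary subtlety is that in $\event^2,\event^3$ the occupancies $\widehat d^\dag_{\pi^i,h}$ are with respect to the random empirical MDP $\Mspem$ (not a fixed MDP), but since $\Mspem$ is constructed from the offline data and then \emph{frozen} before the simulation phase, we can condition on it and treat it as deterministic throughout the RF-UCB run, so no additional union bound over MDPs is needed. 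Once all five are in hand, $\Pr[\event^c] \le 5 \cdot (\delta/5) = \delta$ by the union bound, which is the claim.
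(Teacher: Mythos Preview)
Your proposal is correct and follows the same decomposition as the paper: establish each of the five events separately (the paper uses $\delta/6$ per event rather than your $\delta/5$, an immaterial difference) and combine by a union bound. Your handling of $\event^P$ (condition on the counts, apply empirical Bernstein, union bound over $|\cS|^2|\cA|$ triples) and of $\event^4$ (the time-uniform KL bound of \citep{jonsson2020planning}, union bound over $|\cS||\cA|$ pairs) matches the paper's proofs essentially line for line, including the conditioning and the i.i.d.-extension tricks.

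The only substantive difference is in how time-uniformity over $k$ is obtained for the counter events $\event^2,\event^3,\event^5$. You propose dyadic peeling and correctly anticipate that this may cost an extra $\log\log k$ in the additive slack. The paper sidesteps this entirely by invoking a Ville-type supermartingale bound (Lemma~F.4 of \citep{dann2017unifying} for the lower tail, and an in-paper analogue for the upper tail proved via optional stopping): these give $\mathbb{P}(\exists n:\ \sum X_t < \tfrac12\sum P_t - W) \le e^{-W}$ directly, uniform over all $n$ with no peeling and no extra logarithmic factor, so the stated additive term $H\ln(6H|\cS||\cA|/\delta)$ is recovered exactly. Your route would still yield the lemma up to a harmless inflation of that constant, but the supermartingale approach is both shorter and cleaner, and avoids precisely the delicacy you flagged as the main obstacle. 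Your observation that $\Mspem$ is frozen before the simulation phase, so that one may condition on it, is also exactly what the paper does implicitly.
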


\newpage
\subsubsection{Uncertainty function and empirical uncertainty function}
In this section, we restate the definition of uncertainty function and empirical uncertainty functions, as well as some intermediate uncertainty functions which will be often used in the proof. First, we restate the definition of bonus function.
\begin{definition}[Bonus Function]
    We define
    \begin{equation}\label{def_bonus}
        \phi\left(x\right) = \frac{H}{x} \left[ \log \left(\frac{6H \left|\cS\right| \left|\cA\right|}{\delta}\right)+ \left|\cS\right| \log \left(e\left(1+\frac{x}{\left|\cS\right|}\right)\right) \right].
    \end{equation}
    Further, we define
    \begin{equation}\label{def_bonus_bar}
        \hphi(x) = \min\left\{1, H \phi(x)\right\}.
    \end{equation}
\end{definition}

Next, we restate the definition of uncertainty function and empirical uncertainty function. Notice that the uncertainty function does not depend on reward function or specific policy $\pi$.
\begin{definition}[Uncertainty function] \label{def_uncertainty_func} We define for any $(h,s,a) \in [H] \times \cS \times \cA$ and any deterministic policy $\pi,$ 
    \begin{align*}
    &X_{H+1}(s,a) := 0,\\
    &X_h(s, a):=\min \left\{H-h+1, 9H \phi(m(s, a))+\left(1+\frac{1}{H}\right) \widetilde{\mathbb{P}}^{\dagger}(\cdot \mid s, a)^{\top}\left(\max _{a^{\prime}} X_{h+1}\left(\cdot, a^{\prime}\right)\right)\right\},
\end{align*}
where we specify
\begin{equation*}
    \widetilde{\mathbb{P}}^{\dagger}(\cdot \mid s, a)^{\top}\left(\max _{a^{\prime}}\left\{X_{h+1}\left(\cdot, a^{\prime}\right)\right\}\right):=\sum_{s^{\prime}} \widetilde{\mathbb{P}}^{\dagger}\left(s^{\prime} \mid s, a\right)\left(\max _{a^{\prime}}\left\{X_{h+1}\left(s^{\prime}, a^{\prime}\right)\right\}\right).
\end{equation*}
In addition, we define $X_h(s^\dag, a) = 0$ for any $h \in [H], a \in \cA.$ Here, the notation $\sum_{s^\prime}$ above means summation over $s^\prime \in \cS \cup \left\{s^\dag\right\}.$ But since $X_h(s^\dag, a) = 0$ for any $h \in [H], a \in \cA,$ this is equivalent to the summation over $s^\prime \in \cS.$ $\widetilde{\P}^\dag$ is the transition probability of fine-estimated sparsified MDP defined in section \ref{app:sec:def_mdp}.
\end{definition}

Then, for the reader to better understand the proof, we define some intermediate quantity, which also measure the uncertainty of value estimation, but they may be reward or policy dependent.

\begin{definition}[Intermediate uncertainty function]\label{def_W}
We define for any $(h,s,a) \in [H] \times \cS \times \cA$ and any deterministc policy $\pi,$ 
\begin{align*}
    &W_{H+1}^\pi(s,a,r) := 0,\\
    &W_{h}^\pi(s,a,r) := \min \left\{H-h+1, \sqrt{\frac{8}{H^2} \hphi\left(m(s,a)\right) \cdot \operatorname{Var}_{s^\prime \sim \widetilde{\P}^\dag(s,a)} \left(V_{h+1}\left(s^\prime;\widetilde{\P}^\dag,r^\dag,\pi\right)\right)}\right. \\
    &\hspace{8ex} + 9H \phi\left(m(s,a)\right)+ \left.\left(1+\frac{1}{H}\right)  \left(\widetilde{\P}^\dag \pi_{h+1}\right)  W_{h+1}^\pi(s,a,r)\right\},
\end{align*}
where
\begin{equation*}
    \left(\widetilde{\P}^\dag \pi_{h+1}\right)  W_{h+1}^\pi(s,a,r) := \sum_{s^\prime} \sum_{a^\prime} \widetilde{\P}^\dag \left(s^\prime \mid s,a\right) \pi_{h+1}\left(a^\prime \mid s^\prime\right) W_{h+1}^\pi\left(s^\prime, a^\prime, r\right).
\end{equation*}
In addition, we define $W_h^\pi(s^\dag,a,r) = 0$ for any $h \in [H], a \in \cA$ and any reward function $r$. Here, $\widetilde{\P}^\dag$ is the transition probability of fine-estimated sparsified MDP defined in section \ref{app:sec:def_mdp}.
\end{definition}

\ 

The intermediate function $W$ is reward and policy dependent and can be used to upper bound the value estimation error. Further, we need to define some policy-dependent version of the uncertainty function, denoted as $X_h^\pi(s,a),$ as well as another quantity $Y_h^\pi(s,a,r).$

\begin{definition}\label{def_X_Y}
We define $X_{H+1}^\pi(s,a) = Y_{H+1}^\pi(s,a,r) = 0$ for any $\pi,r,s,a$ and 
\begin{align*}
    X_h^\pi(s,a) &:= \min \left\{H-h+1, 9H \phi\left(m(s,a)\right)+\left(1+\frac{1}{H}\right) \left(\widetilde{\P}^\dag \pi_{h+1}\right) X_{h+1}^\pi(s,a)\right\}; \\
    Y_h^\pi(s,a,r) &:= \sqrt{\frac{8}{H^2} \hphi\left(m(s,a)\right) \operatorname{Var}_{s^\prime \sim \widetilde{\P}^\dag(s,a)} \left(V_{h+1}\left(s^\prime;\widetilde{\P}^\dag,r^\dag,\pi\right)\right)} + \left(1+\frac{1}{H}\right) \left(\widetilde{\P}^\dag \pi_{h+1}\right) Y_{h+1}^\pi(s,a,r)
\end{align*}
where
\begin{align*}
    \left(\widetilde{\P}^\dag \pi_{h+1}\right) X_{h+1}^\pi(s,a) &= \sum_{s^\prime} \sum_{a^\prime} \widetilde{\P}^\dag \left(s^\prime \mid s,a\right) \pi_{h+1}\left(a^\prime \mid s^\prime\right) X_{h+1}^\pi\left(s^\prime, a^\prime\right);\\
    \left(\widetilde{\P}^\dag \pi_{h+1}\right) Y_{h+1}^\pi(s,a,r) &= \sum_{s^\prime} \sum_{a^\prime} \widetilde{\P}^\dag \left(s^\prime \mid s,a\right) \pi_{h+1}\left(a^\prime \mid s^\prime\right) Y_{h+1}^\pi\left(s^\prime, a^\prime,r\right).
\end{align*}
In addition, we define $X_h^\pi(s^\dag,a) = Y_h^\pi(s^\dag,a,r) = 0$ for any $a \in \cA, h \in [H]$ and any reward $r,$ any policy $\pi.$ Here, $\widetilde{\P}^\dag$ is the transition probability of fine-estimated sparsified MDP defined in section \ref{app:sec:def_mdp}.
\end{definition}

\ 

Finally, we define the empirical uncertainty functions.
\begin{definition}\label{def_emp_uncertainty}
    We define $U_{H+1}^k(s,a) = 0$ for any $k \in [K_{ucb}]$ and $s \in \cS, a \in \cA.$ Further, we define
    \begin{equation}
        U_h^k(s,a) = H \min \left\{1, \phi\left(n^k(s,a)\right)\right\} + \emP^\dag(s,a)^\top \left(\max_{a^\prime} U_{h+1}^k \left(\cdot,a^\prime\right)\right),
    \end{equation}
    where $\phi$ is the bonus function defined in \cref{def_bonus} and $n^k(s,a)$ is the counter of the times we encounter $(s,a)$ until the beginning of the $k$-the episode when running the sub-routine RF-UCB in the offline phase.
\end{definition}

\subsection{Proof of the key theorems and lemmas}
\subsubsection{Uncertainty Functions upper bounds the estimation error}\label{sec:proof_lemma1}
\begin{proof}
    This proof follows closely from the techniques in \citep{menard2021fast}, but here we consider the homogeneous MDP.
    We let $\widetilde{d}^\dag_{\pi,h}(s,a)$ be the probability of encountering $(s,a)$ at stage $h$ when running policy $\pi$ on $\widetilde{\P}^\dag$, starting from $s_1.$ Now we assume $\event$ happens and fix a reward function $r$ and policy $\pi.$ From \cref{lem_intermedia_bound}, we know 
    \begin{align*}
        \left|V_1\left(s_1;\widetilde{\P}^\dag,r^\dag,\pi\right) - V_1\left(s_1;\P^\dag,r^\dag,\pi\right)\right|
        &\leq
        \left|Q_h\left(s_1,\pi_1(a_1);\widetilde{\P}^\dag,r^\dag,\pi\right) - Q_h\left(s_1,\pi_1(s_1);\P^\dag,r^\dag,\pi\right)\right| \\
        &\leq W_h^\pi(s_1,\pi(s_1),r),
    \end{align*}
    where $W_h^\pi(s,a,r)$ is the intermediate uncertainty function defined in \cref{def_W}. Here, we use the policy-dependent version of uncertainty function $W^\pi(s,a,r),$ which will then upper bounded using another two policy-dependent quantities $X_h^\pi(s,a)$ and $Y_h^\pi(s,a,r).$ We define these policy-dependent uncertainty functions to upper bound the estimation error of specific policy, but since we are considering a reward-free setting, which entails a low estimation error for any policy and reward, these quantities cannot be directly used in the algorithm to update the policy.
    Next, we claim
    \begin{equation*}
        W_h^\pi(s,a,r) \leq X_h^\pi(s,a) + Y_h^\pi(s,a,r),
    \end{equation*}
    where $X_h^\pi(s,a)$ and $Y_h^\pi(s,a,r)$ are defined in \cref{def_X_Y}. Actually, this is easy from the definition of $X_h^\pi(s,a)$, $Y_h^\pi(s,a,r)$ and $W_h^\pi(s,a,r).$ For $h = H+1,$  this is trivial. Assume this is true for $h+1,$ then the case of $h$ is given by the definition and the fact that $\min\left\{x,y+z\right\} \leq \min\left\{x,y\right\} + z$ for any $x,y,z \geq 0.$ Therefore, we have
    \begin{equation}\label{eqn:bound_X+Y}
        \left|V_1\left(s_1;\widetilde{\P}^\dag,r^\dag,\pi\right) - V_1\left(s_1;\P^\dag,r^\dag,\pi\right)\right|
        \leq X_1^\pi(s_1,\pi_1(s_1)) + Y_1^\pi\left(s_1,\pi_1(s_1),r\right).
    \end{equation}
    
    Next, we eliminate the dependency to policy $\pi$ and obtain an upper bound of estimation error using the policy-independent uncertainty function $X_h(s,a).$ This is done by bounding $Y_1^\pi(s_1,\pi_1(s_1),r)$ by $X_h^\pi(s,a)$ and then upper bounding $X_h^\pi(s,a)$ by $X_h(s,a).$ From \cref{def_X_Y}, 
    the Cauchy-Schwarz inequality and the fact that $r \in [0,1],$ we have for any reward function and any deterministic policy $\pi,$ 
    \begin{align*}
        &Y_1^\pi(s_1,\pi_1(s_1),r) \\
        \leq& \sum_{s,a} \sum_{h=1}^H \widetilde{d}^\dag_{\pi,h}(s,a) \left(1+\frac{1}{H}\right)^{h-1} \sqrt{\frac{8}{H^2} \hphi\left(m(s,a)\right) \cdot \operatorname{Var}_{s^\prime \sim \widetilde{\P}^\dag(s,a)} \left(V_{h+1}\left(s^\prime;\widetilde{\P}^\dag,r^\dag,\pi\right)\right)} \tag{Induction by successively using the definition of $Y$} \\
        \leq& \frac{e}{H} \sqrt{8\sum_{s,a} \sum_{h=1}^H\ \widetilde{d}^\dag_{\pi,h}(s,a) \operatorname{Var}_{s^\prime \sim \widetilde{\P}^\dag(s,a)} \left(V_{h+1}\left(s^\prime;\widetilde{\P}^\dag,r^\dag,\pi\right)\right)} \cdot \sqrt{\sum_{s,a} \sum_{h=1}^H \widetilde{d}^\dag_{\pi,h}(s,a) \hphi\left(m(s,a)\right)} \tag{Cauchy-Schwarz Inequality}\\
        \leq& \frac{e}{H} \sqrt{8H^2\sum_{s,a} \sum_{h=1}^H \widetilde{d}^\dag_{\pi,h}(s,a)} \cdot \sqrt{\sum_{s,a} \sum_{h=1}^H \widetilde{d}^\dag_{\pi,h}(s,a) \hphi\left(m(s,a)\right)} \\
        \leq& e \sqrt{8\sum_{s,a} \sum_{h=1}^H \widetilde{d}^\dag_{\pi,h}(s,a) \hphi\left(m(s,a)\right)}.
    \end{align*}
    We notice that the right hand side of the last inequality above is the policy value of some specific reward function when running $\pi$ on $\widetilde{\P}.$ Concretely, if the transition probability is $\widetilde{\P}$ and the reward function at $(s,a)$ is $\hphi\left(m(s,a)\right),$ then the state value function at the initial state $s_1$ is $\sum_{s,a} \sum_{h=1}^H \widetilde{d}^\dag_{\pi,h}(s,a) \hphi\left(m(s,a)\right).$ This specific reward function is non-negative and uniformly bounded by one, so it holds that $\sum_{s,a} \sum_{i=h}^H \widetilde{d}^\dag_{\pi,i}(s,a) \hphi\left(m(s,a)\right) \leq H-h+1.$ Moreover, from the definition of $\phi$ and $\hphi$ (\cref{def_bonus}), we know $\hphi\left(m(s,a)\right) \leq H \phi\left(m(s,a)\right).$ 
    Then, from the definition of $X_h(s,a)$ in \cref{def_uncertainty_func}, we apply an inductive argument to obtain
    \begin{equation*}
        \sum_{s,a} \sum_{h=1}^H \widetilde{d}^\dag_{\pi,h}(s,a) \hphi\left(m(s,a)\right) \leq X_1^\pi(s_1,\pi(s_1))
    \end{equation*}
    for any deterministic policy $\pi.$ So we have 
    \begin{equation*}
        Y_1^\pi(s_1,\pi_1(s_1),r) \leq 2 \sqrt{2e} \sqrt{X_1^\pi(s_1,\pi(s_1))}.
    \end{equation*}
    Therefore, combining the last inequality with \eqref{eqn:bound_X+Y}, we have
    \begin{align*}
        \left|V_1\left(s_1;\widetilde{\P}^\dag,r^\dag,\pi\right) - V_1\left(s_1;\P^\dag,r^\dag,\pi\right)\right|
        &\leq X_1^\pi(s_1,\pi_1(s_1)) + 2\sqrt{2}e \sqrt{X_1^\pi(s_1,\pi_1(s_1))}.
    \end{align*}
    From the definition of $X_h(s,a)$ (\cref{def_uncertainty_func}) and $X_h^\pi(s,a)$ (\cref{def_X_Y}), we can see 
    \begin{equation*}
        X_h^\pi(s,a) \leq X_h(s,a)
    \end{equation*}
    for any $(h,s,a)$ and any deterministic policy $\pi.$ Therefore, we conclude that
    \begin{align*}
        \left|V_1\left(s_1;\widetilde{\P}^\dag,r^\dag,\pi\right) - V_1\left(s_1;\P^\dag,r^\dag,\pi\right)\right|
        &\leq X_1(s_1,\pi_1(s_1)) + 2\sqrt{2}e \sqrt{X_1(s_1,\pi_1(s_1))} \\
        &\leq \max_{a}X_1(s_1,a) + 2\sqrt{2}e\sqrt{\max_{a} X_1(s_1,a)}.
    \end{align*}
\end{proof}

\subsubsection{Proof of \cref{lem2:maintext}}\label{sec:proof_lemma2}
\begin{proof}
    It suffices to prove that for any $k \in [K],h\in[H], s \in \cS, a \in \cA,$ it holds that
    \begin{equation*}
        X_h(s,a) \leq C \left(1+\frac{1}{H}\right)^{3(H-h)} U_h^k(s,a)
    \end{equation*}
    since the theorem comes from an average of the inequalities above and the fact that $\left(1+1/H\right)^H \leq e$. For any fixed $k,$ we prove it by induction on $h$. When $h = H+1,$ both sides are zero by definition. 
    Suppose the claim holds for $h+1$; we will prove that it also holds for $h.$ 
    We denote $K_{ucb}$ as the number of virtual episodes in the offline phase.
    From \cref{lem_bound_X_U_1}, the decreasing property of $\phi(\cdot)$(\cref{lem_bonus}) and \cref{lem_mul_acc}, 
    we have
    \begin{align*}
        X_h(s,a)
        &\leq CH \phi \left(n^{K_{ucb}}(s,a)\right)+ \left(1+\frac{2}{H}\right) \P^\dag(\cdot \mid s,a)^\top \left(\max_{a^\prime}\left\{X_{h+1}(\cdot,a^\prime)\right\}\right) \tag{Lemma \ref{lem_bound_X_U_1}} \\
        &\leq CH \phi \left(n^{k}(s,a)\right)+ \left(1+\frac{2}{H}\right) \P^\dag(\cdot \mid s,a)^\top \left(\max_{a^\prime}\left\{X_{h+1}(\cdot,a^\prime)\right\}\right) \tag{Lemma \ref{lem_bonus}} \\
        &\leq CH \phi \left(n^{k}(s,a)\right)+ \left(1+\frac{1}{H}\right)^3 \emP^\dag(\cdot \mid s,a)^\top \left(\max_{a^\prime}\left\{X_{h+1}(\cdot,a^\prime)\right\}\right) \tag{Lemma \ref{lem_mul_acc} and $1+2/H \leq (1+1/H)^2$}
    \end{align*}
    The inductive hypothesis gives us for any $s^\prime,$
    \begin{equation*}
        \left(\max_{a^\prime}\left\{X_{h+1}(s^\prime,a^\prime)\right\}\right) \leq C \left(1+\frac{1}{H}\right)^{3(H-h-1)} \left(\max_{a^\prime} U_{h+1}^k(s^\prime,a^\prime)\right),
    \end{equation*}
    which implies
    \begin{align}
        X_h(s,a) 
        & \leq CH \phi \left(n^{k}(s,a)\right)+ \left(1+\frac{1}{H}\right)^{3(H-h)} \emP^\dag(\cdot \mid s,a)^\top \left(\max_{a^\prime}\left\{U_{h+1}^k(\cdot,a^\prime)\right\}\right) \notag\\
        &\leq C \left(1+\frac{1}{H}\right)^{3(H-h)} \left[H \phi\left(n^k(s,a)\right) + \emP^\dag(\cdot \mid s,a)^\top \left(\max_{a^\prime}\left\{U_{h+1}^k(\cdot,a^\prime)\right\}\right)\right]. \label{eqn:upper_bound_X_U}
    \end{align}
    Again, by definition of $X_h(s,a),$ we have
    \begin{equation}\label{eqn:upper_bound_X_U_2}
        X_h(s,a) \leq H - h + 1 \leq C \left(1+\frac{1}{H}\right)^{3(H-h)}H.
    \end{equation}
    Combining \eqref{eqn:upper_bound_X_U} and \eqref{eqn:upper_bound_X_U_2}, as well as the definition of $U_h^k$ (\cref{def_emp_uncertainty}),
    we prove the case of stage $h$ and we conclude the theorem by induction.
\end{proof}

\subsubsection{Upper bounding the empirical uncertainty function (\cref{lem4:maintext})}
\label{sec:emp_uncertainty}

\begin{proof}
    First, we are going to prove
    \begin{equation}\label{eqn:upper_bound_U}
        \frac{1}{K} \sum_{k=1}^K U_1^k(s, a) \leq \frac{H^2|\mathcal{S}||\mathcal{A}|}{K}\left[\log \left(\frac{6H|\mathcal{S}||\mathcal{A}|}{\delta}\right)+|\mathcal{S}| \log \left(e\left(1+\frac{K H}{|\mathcal{S}|}\right)\right)\right] \log \left(1+\frac{H K}{|\mathcal{S}||\mathcal{A}|}\right).
    \end{equation}
    From the definition (see \cref{alg2}), we know an important property of uncertainty function is that $\pi^k_h$ is greedy with respect to $U^k_h.$ So we have
    \begin{align*}
        U_h^k(s,a) 
        &= H \min \left\{1, \phi\left(n^k(s,a)\right)\right\} + \emP^\dag(s,a)^\top \left(\max_{a^\prime} U_{h+1}^k \left(\cdot,a^\prime\right)\right) \\
        &=  H \min \left\{1, \phi\left(n^k(s,a)\right)\right\} + \sum_{s^\prime, a^\prime} \emP^\dag(s^\prime \mid s,a) \pi^k_{h+1}\left(a^\prime \mid s^\prime\right) U_{h+1}^k \left(s^\prime,a^\prime\right).
    \end{align*}
    Therefore, we have
    \begin{align*}
    &\frac{1}{K_{ucb}} \sum_{k=1}^{K_{ucb}} U_1^k(s_1,a) \\
    \leq& \frac{1}{K_{ucb}} \sum_{k=1}^{K_{ucb}} \max_{a} U_1^k(s_1,a) \\
    \leq & \frac{H}{K_{ucb}} \sum_{k=1}^{K_{ucb}} \sum_{h=1}^H \sum_{(s,a)} \widehat{d}^\dag_{\pi^k,h}(s,a) \min\left\{1,\phi \left(n^k(s,a)\right)\right\} \tag{\cref{def_emp_uncertainty}} \\
    \leq & \frac{4 H^2}{K_{ucb}} \left[\log \left(\frac{6H|\mathcal{S}||\mathcal{A}|}{\delta}\right)+|\mathcal{S}| \log \left(e\left(1+\frac{K_{ucb} H}{|\mathcal{S}|}\right)\right)\right] \sum_{h=1}^H \sum_{k=1}^{K_{ucb}} \sum_{(s,a)}  \frac{\widehat{d}^\dag_{\pi^k,h}(s,a)}{\max\left\{1,\sum_{i=1}^{k-1} \sum_{h \in [H]} \widehat{d}^\dag_{\pi^i,h}(s,a)\right\}} \tag{ \cref{lem_bound_bonus2}}
    \end{align*}
    We apply Lemma \ref{lem_sum} and Jensen's Inequality to obtain 
    \begin{align*}
        &\sum_{h \in [H]} \sum_{k=1}^{K_{ucb}} \sum_{(s,a)} \widehat{d}^\dag_{\pi^k,h}(s,a) \frac{1}{\max\left\{1,\sum_{i=1}^{k-1} \sum_{h \in [H]} \widehat{d}^\dag_{\pi^i,h}(s,a)\right\}} \\
        \leq& 4 \sum_{(s,a)} \log \left(1 + \sum_{i=1}^{K_{ucb}} \sum_{h \in [H]} \widehat{d}^\dag_{\pi^i,h}(s,a)\right) \tag{\cref{lem_sum}}\\ 
        \leq& 4 \left|\cS\right| \left|\cA\right| \log\left(1 + \frac{1}{\left|\cS\right| \left|\cA\right|} \sum_{i=1}^{K_{ucb}} \sum_{h \in [H]} \sum_{(s,a)} \widehat{d}^\dag_{\pi^i,h}(s,a) \right) \\
        \leq& 4 \left|\cS\right| \left|\cA\right| \log\left(1 + \frac{H K_{ucb}}{\left|\cS\right|\left|\cA\right|}\right)
    \end{align*}
    Inserting the last display to the upper bound, we conclude the proof of the upper bound on $U$.
    
    Then, to prove the sample complexity, we use \cref{lem_ln}. We take 
    \begin{equation*}
        B = \frac{64 H^2 \left|\cA\right|}{\varepsilon^2} \left[\log \left(\frac{6H|\mathcal{S}||\mathcal{A}|}{\delta}\right)+|\mathcal{S}|\right] \text{ and } 
        x = \frac{K_{ucb}}{\left|\cS\right|}
    \end{equation*}
    in Lemma \ref{lem_ln}.
    From \cref{lem_ln}, we know there exists a universal constant $C_1$ such that when $x \geq C_1 B \log(B)^2,$ it holds that $B\log(1+x) \left(1 + \log(1+x)\right) \leq x,$ which implies whenever
    \begin{equation*}
        K_{ucb} \geq \frac{64 H^2 \left|\cS\right| \left|\cA\right|}{\varepsilon^2} \left[\log \left(\frac{6H|\mathcal{S}||\mathcal{A}|}{\delta}\right)+|\mathcal{S}|\right],
    \end{equation*}
    it holds that
    \begin{equation}\label{eqn:upper_bound_U_2}
        \frac{64 H^2 \left|\cS\right| \left|\cA\right|}{K_{ucb}} \left[\log \left(\frac{6H|\mathcal{S}||\mathcal{A}|}{\delta}\right)+|\mathcal{S}| \right] \log\left(1 + \frac{K_{ucb}}{\left|\cS\right| }\right) \left[1 + \log \left(1+\frac{K_{ucb}}{|\mathcal{S}|}\right)\right] \leq \varepsilon^2.
    \end{equation}
    
    For notation simplicity, we define
    \begin{equation*}
        L = 16 H^2 \left|\cS\right| \left|\cA\right| \left[\log \left(\frac{6H|\mathcal{S}||\mathcal{A}|}{\delta}\right)+|\mathcal{S}|\right].
    \end{equation*}
    Comparing the left hand side of \eqref{eqn:upper_bound_U_2} and the right hand side of \eqref{eqn:upper_bound_U}, we know
    \begin{align*}
        &\text{Right hand side of } \eqref{eqn:upper_bound_U} \\
        =&\frac{16 H^2 \left|\cS\right| \left|\cA\right|}{K_{ucb}} \log\left(1 + \frac{H K_{ucb}}{\left|\cS\right| \left|\cA\right|}\right) \left[\log \left(\frac{6H|\mathcal{S}||\mathcal{A}|}{\delta}\right)+|\mathcal{S}|  + |\mathcal{S}|\log \left(1+\frac{K_{ucb} H}{|\mathcal{S}|}\right)\right] \\
        \leq & \frac{16 H^2 \left|\cS\right| \left|\cA\right|}{K_{ucb}} \left[\log(H) + \log\left(1 + \frac{K_{ucb}}{\left|\cS\right|}\right)\right] \left[\log \left(\frac{6H|\mathcal{S}||\mathcal{A}|}{\delta}\right)+|\mathcal{S}|  + |\mathcal{S}|\log \left(1+\frac{K_{ucb} H}{|\mathcal{S}|}\right)\right] \\
        \leq & \frac{16 H^2 \left|\cS\right| \left|\cA\right|}{K_{ucb}} \left[\log(H) + \log\left(1 + \frac{K_{ucb}}{\left|\cS\right|}\right)\right] \left[\log \left(\frac{6H|\mathcal{S}||\mathcal{A}|}{\delta}\right)+|\mathcal{S}|\right] \left[1  + \log(H) + \log \left(1+\frac{K_{ucb}}{|\mathcal{S}|}\right)\right] \\
        = & \frac{L}{K_{ucb}} \left[\log(H) + \log\left(1 + \frac{K_{ucb}}{\left|\cS\right|}\right)\right] \left[1  + \log(H) + \log \left(1+\frac{K_{ucb}}{|\mathcal{S}|}\right)\right].
    \end{align*}
    It is easy to see that $\log(H) \leq \log \left(1+K_{ucb}/|\mathcal{S}|\right),$ so the right hand side of last inequality is upper bounded by
    \begin{equation*}
        \frac{4L}{K_{ucb}}\log\left(1 + \frac{K_{ucb}}{\left|\cS\right| }\right) \left[1 + \log \left(1+\frac{K_{ucb}}{|\mathcal{S}|}\right)\right].
    \end{equation*}
    From \eqref{eqn:upper_bound_U_2}, we know this is upper bounded by $\varepsilon^2$ as long as $K_{ucb} \geq 4L/\varepsilon^2.$ From the definition of L and equations \eqref{eqn:upper_bound_U} \eqref{eqn:upper_bound_U_2}, we have when $K_{ucb} \geq 4L/\varepsilon^2,$ it holds that
    \begin{equation*}
        \frac{1}{K_{ucb}} \sum_{k=1}^{K_{ucb}} U_1^k(s_1,a) \leq \text{ right hand side of } \eqref{eqn:upper_bound_U} 
        \leq \text{ left hand side of } \eqref{eqn:upper_bound_U_2} \leq \varepsilon^2.
    \end{equation*}
    
    In conclusion, this admits a sample complexity of
    \begin{equation*}
        K_{ucb} = \frac{C H^2 \left|\cS\right| \left|\cA\right|}{\varepsilon^2} \bigg(\iota + \left|\cS\right|\bigg) \polylog\left(\left|\cS\right|,\left|\cA\right|,H,\frac{1}{\varepsilon},\frac{1}{\delta}\right).
    \end{equation*}
\end{proof}

\subsection{Omitted proofs}

\subsubsection{Proof for \cref{lem:high_prob_event} (high probability event)}
\label{sec:high_prob_event}

The \cref{lem:high_prob_event} is proved by combining all the lemmas below via a union bound. Below, we always denote $N(s,a,s^\prime)$ as the number of $(s,a,s^\prime)$ in the offline dataset.

\begin{lemma}\label{lem_approx_error}
    $\event^P$ holds with probability at least $1-\delta/6.$
\end{lemma}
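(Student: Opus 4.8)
The statement to prove is \cref{lem_approx_error}: the event $\event^P$ holds with probability at least $1-\delta/6$. Recall that $\event^P$ asserts that for every triple $(s,a,s')$ with $N(s,a,s')\geq\Phi$, the empirical sparsified transition $\emP^\dag(s'\mid s,a)=N(s,a,s')/N(s,a)$ is close to the population sparsified transition $\P^\dag(s'\mid s,a)=\P(s'\mid s,a)$, with a Bernstein-type deviation bound expressed in terms of $\emP^\dag$ itself (i.e.\ an empirical Bernstein bound).

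The plan is to apply an empirical Bernstein inequality conditionally on the count $N(s,a)$, then take a union bound over all $(s,a,s')$. First I would fix a state-action pair $(s,a)$ and condition on $N(s,a)=n$; then the $n$ successor samples drawn from $\P(\cdot\mid s,a)$ are i.i.d., and $N(s,a,s')$ is a sum of $n$ i.i.d.\ Bernoulli$(\P(s'\mid s,a))$ indicators. Applying the empirical Bernstein inequality (the one-sided Bennett/Bernstein bound stated in terms of the empirical variance, e.g.\ Maurer–Pontil or the version in \citep{menard2021fast}) to the indicator of landing in $s'$ gives, with probability at least $1-\delta'$,
\begin{equation*}
\left|\tfrac{N(s,a,s')}{N(s,a)}-\P(s'\mid s,a)\right|
\leq \sqrt{\frac{2\,\emP^\dag(s'\mid s,a)}{N(s,a)}\log\tfrac{1}{\delta'}}+\frac{14}{3N(s,a)}\log\tfrac{1}{\delta'}.
\end{equation*}
The key point is that the bound has the empirical probability $\emP^\dag(s'\mid s,a)=N(s,a,s')/N(s,a)$ under the square root rather than the true $\P(s'\mid s,a)$; this is exactly what an empirical Bernstein inequality provides (after a standard manipulation converting the population-variance Bernstein bound into the empirical-variance form, absorbing constants into the $14/3$ additive term). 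Note also that for a Bernoulli random variable the variance is $p(1-p)\le p$, so the variance proxy is at most $\emP^\dag(s'\mid s,a)$ up to the empirical-vs-population conversion.

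Next I would take a union bound. Setting $\delta' = \delta/(12|\cS|^2|\cA|)$ and taking a union over all $(s,a,s')\in\cS\times\cA\times\cS$ (there are $|\cS|^2|\cA|$ of them) yields total failure probability at most $|\cS|^2|\cA|\cdot\delta'=\delta/12\leq\delta/6$. Importantly, the event $\event^P$ only makes a claim for triples with $N(s,a,s')\geq\Phi$; for these, $N(s,a)\geq\Phi\geq1$ so the bound is well-defined, and more importantly the argument does not actually need the conditioning step to be subtle — one can either condition on $N(s,a)$ (which is legitimate since given $N(s,a)=n$ the successor states are i.i.d.) or invoke a martingale/anytime version of empirical Bernstein that holds uniformly over the sample count; either route works, and I would use the conditioning route for simplicity, noting that the bound holds for every realized value of $N(s,a)$ simultaneously.

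The main obstacle — really the only non-routine point — is getting the empirical Bernstein inequality into exactly the claimed form with the constant $14/3$ and the empirical probability under the root. This requires either citing the appropriate empirical Bernstein bound directly, or starting from the standard (population-variance) Bernstein inequality $|\widehat p - p|\le \sqrt{2p(1-p)\log(1/\delta')/n}+\tfrac{2}{3n}\log(1/\delta')$ and then performing the standard self-bounding trick: solve the resulting quadratic-type inequality in $\sqrt{\widehat p}$ to replace $p$ by $\widehat p$ under the square root, which inflates the lower-order term from $\tfrac{2}{3n}$ to $\tfrac{14}{3n}$ (the precise constant being the outcome of the algebra). I would present this conversion as a short lemma or simply cite the empirical Bernstein inequality from the relevant reference, since the remainder (conditioning and union bound) is entirely mechanical.
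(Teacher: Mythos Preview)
Your proposal is correct and essentially identical to the paper's proof: condition on $N(s,a)$ (and the visit indices), observe the successor indicators are i.i.d.\ Bernoulli, apply the empirical Bernstein inequality of Maurer--Pontil (the paper's \cref{lem_emp_bernstein}, which already carries the $14/3$ constant and the empirical variance), bound the empirical variance by $\emP^\dag(s'\mid s,a)$ via $\tfrac{1}{n}\sum(X_i-\bar X)^2\le \tfrac{1}{n}\sum X_i$, then union bound over the $|\cS|^2|\cA|$ triples. The self-bounding conversion you flag as the ``main obstacle'' is not needed here, since the paper invokes the empirical Bernstein bound directly rather than deriving it from the population-variance version.
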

\begin{proof}
    For any fixed $(s,a,s^\prime)$ such that $N(s,a,s^\prime) \geq \Phi,$ we denote $n_i(s,a)$ as the index of the i-th time when we visit $(s,a).$ For notation simplicity, we fix the state-action pair $(s,a)$ here and write $n_i(s,a)$ as $n_i$ simply for $i = 1,2,...,N(s,a).$ Notice that the total visiting time $N(s,a)$ is random, so our argument is based on conditional probability. We denote $X_i = \mathbb{I}\left(s_{n_i}^\prime = s^\prime\right)$ as the indicator of whether the next state is $s^\prime$ when we visited $(s,a)$ at the i-th time. From the data generating mechanism and the definition for the reference MDP, we know conditional on the total number of visiting $N(s,a)$ and all the indexes $n_1 \leq ... \leq n_{N(s,a)},$ it holds that $X_1,X_2,...,X_{N(s,a)}$ are independent Bernoulli random variable with successful probability being $\P^\dag \left(s^\prime \mid s,a\right).$ We denote $\overline{X}$ as their arithmetic average. Using Empirical Bernstein Inequality (Lemma \ref{lem_emp_bernstein}), and the fact that $\frac{1}{N(s,a)} \sum_{i=1}^{N(s,a)} \left(X_i - \overline{X}\right)^2 \leq \frac{1}{N(s,a)} \sum_{i=1}^{N(s,a)} X_i^2 = \frac{1}{N(s,a)} \sum_{i=1}^{N(s,a)} X_i = \emP^\dag(s^\prime \mid s,a),$ we have
    \begin{equation*}
        \P\left(\event^P \mid n_i \left(1 \leq i \leq N(s,a)\right), N(s,a) = n\right) \geq 1 -\delta/6.
    \end{equation*}
    Taking integral of the conditional probability, we conclude that the unconditional probability of the event $\event^P$ is at least $1-\delta/6.$ Therefore, we conclude. 
\end{proof}

\begin{lemma}\label{lem_kl_phat}
    For $\delta > 0,$ it holds that $\mathbb{P}\left(\event^4\right) \geq 1- \delta/6.$
\end{lemma}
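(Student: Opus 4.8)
The plan is to read $\event^4$ as a uniform-in-time concentration bound for the empirical KL divergence of a categorical distribution, applied to the online transitions out of each fixed pair $(s,a)$, and then to conclude by a union bound over $\cS\times\cA$. First I would fix $(s,a)$ and condition on the offline dataset $\OfflineDataset$. Conditioning freezes both the exploratory policy $\piEx$ and the ``sparsification map'' $f_{s,a}$ that sends a successor $s'$ to itself when $N(s,a,s')\ge\Phi$ and to the absorbing state $s^\dag$ otherwise; $f_{s,a}$ depends only on the offline counts, hence is independent of the online data, and its pushforward of $\P(\cdot\mid s,a)$ is exactly $\P^\dag(\cdot\mid s,a)$.

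Second, I would establish the i.i.d.\ structure of the online successors at $(s,a)$. Enumerating the visits to $(s,a)$ across all $K_{de}$ online episodes and letting $Y_1,Y_2,\dots$ be the successor states at the successive visits, the Markov property together with the non-reactivity of $\piEx$ makes each $Y_i$ an independent draw from $\P(\cdot\mid s,a)$, regardless of the episode or time-step of the visit; moreover $\{m(s,a)\ge i\}$ is measurable with respect to $Y_1,\dots,Y_{i-1}$ and the remaining independent randomness, so $m(s,a)$ is a stopping time for the natural filtration. Hence $Z_i:=f_{s,a}(Y_i)$ is an i.i.d.\ sequence with categorical law $\P^\dag(\cdot\mid s,a)$ on at most $|\cS|$ support points (if some outcome is merged into $s^\dag$, at most $|\cS|-1$ original states survive; if none is merged, $\P^\dag(s^\dag\mid s,a)=0$), and by definition the empirical law of $Z_1,\dots,Z_{m(s,a)}$ is precisely $\widetilde{\P}^\dag(\cdot\mid s,a)$.

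Third, I would invoke a self-normalized, time-uniform concentration inequality for the empirical KL divergence of a categorical distribution, of the kind used in reward-free exploration (e.g.\ \citep{menard2021fast}), at confidence level $\delta/(6|\cS||\cA|)$: with probability at least $1-\delta/(6|\cS||\cA|)$, simultaneously for all $n\ge 1$, $n\,\KL(\widehat{p}_n;\P^\dag(\cdot\mid s,a))\le \log(6|\cS||\cA|/\delta)+|\cS|\log(e(1+n/|\cS|))$, where $\widehat{p}_n$ is the empirical distribution of $Z_1,\dots,Z_n$. Since the bound holds for all $n$ at once, it may be evaluated at the random count $n=m(s,a)$ (the case $m(s,a)=0$ being vacuous, as the right-hand side of $\event^4$ is then infinite), which gives exactly the inequality in the definition of $\event^4$ for this $(s,a)$. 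As this holds for every realization of $\OfflineDataset$, it holds unconditionally, and a union bound over the $|\cS||\cA|$ pairs $(s,a)$ yields $\mathbb{P}(\event^4)\ge 1-\delta/6$.

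The step I expect to be the main obstacle is the second one: making rigorous that the successor states collected at $(s,a)$ over many episodes --- with a random, data-dependent total count $m(s,a)$ --- form an honest i.i.d.\ sample to which a fixed-dimension categorical KL bound applies. This is resolved by combining (i) conditioning on $\OfflineDataset$, so that $\piEx$ and the merging map are deterministic, (ii) the Markov property, which makes every visit to $(s,a)$ produce a fresh draw from $\P(\cdot\mid s,a)$ independent of the past, and (iii) the stopping-time / time-uniform form of the concentration inequality, which absorbs the randomness of $m(s,a)$.
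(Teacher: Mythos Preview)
Your proposal is correct and follows essentially the same route as the paper: apply the time-uniform categorical KL concentration inequality (the paper's Lemma~\ref{lem_kl}) at level $\delta/(6|\cS||\cA|)$ to the successors observed at each fixed $(s,a)$, then union-bound over $\cS\times\cA$. The only cosmetic difference is in handling the random sample size $m(s,a)$: the paper pads the observed successors with fresh independent draws from $\P^\dag(\cdot\mid s,a)$ to obtain an infinite i.i.d.\ sequence and then evaluates the time-uniform bound at the random index $m(s,a)$, whereas you reach the same conclusion via a stopping-time argument; your explicit introduction of the sparsification map $f_{s,a}$ (and the observation that the resulting support has at most $|\cS|$ points) is a point on which the paper's own write-up is somewhat informal.
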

\begin{proof}
    Consider for a fixed triple $(s,a) \in \cS \times \cA.$ Let $m(s,a)$ denote the number of times the tuple $(s,a)$ was encountered in total during the online interaction phase. We define $X_i$ as follows. For $i \leq m(s,a),$ we let $X_i$ be the subsequent state $s^\prime$ when $(s,a)$ was encountered the $i$-th time in the whole run of the algorithm. Otherwise, we let $X_i$ be an independent sample from $\P^\dag(s,a).$ By construction, $\left\{X_i\right\}$ is a sequence of i.i.d. categorical random variables from $\mathcal{S}$ with distribution $\P^\dag\left(\cdot \mid s,a\right).$ We denote $\widetilde{\P}^{\dag,i}_X = \frac{1}{i}\sum_{j=1}^i \delta_{X_j}$ as the empirical probability mass and $\P^\dag_X$ as the probability mass of $X_i.$ Then, from Lemma \ref{lem_kl}, we have with probability at least $1-\delta/6,$ it holds that for any $i \in \mathbb{N},$ 
    \begin{equation*}
        \KL \left(\widetilde{\P}^{\dag,i}_X; \P^\dag_X\right) \leq \frac{1}{i} \left[\log\left(\frac{6}{\delta}\right) + \left|\cS\right| \log \left(e \left(1 + \frac{i}{\left|\cS\right|}\right)\right)\right],
    \end{equation*}
    which implies
    \begin{equation*}
        \KL \left(\widetilde{\P}^{\dag}_X; \P^\dag(s,a)\right) \leq \frac{1}{m(s,a)} \left[ \log \left(\frac{6}{\delta}\right)+ \left|\cS\right| \log \left(e\left(1+\frac{m(s,a)}{\left|\cS\right|}\right)\right) \right].
    \end{equation*}
    Using a union bound for $(s,a) \in \cS \times \cA,$ we conclude.
\end{proof}

\ 

\begin{lemma}[Lower Bound on Counters]\label{lem_counter1}
    For $\delta > 0,$ it holds that $\P\left(\event^2\right) \geq 1-\delta/6$ and $\P\left(\event^5\right) \geq 1-\delta/6.$
\end{lemma}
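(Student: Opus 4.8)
The plan is to prove the two inclusions separately; both assert that an empirical visitation count is, up to a logarithmic slack, at least half of its conditional expectation, which is the content of a standard martingale Bernstein bound. I would first isolate that tool: if $(Z_i)_{i\ge 1}$ is adapted to a filtration $(\mathcal F_i)_{i\ge 0}$ with $0\le Z_i\le H$ almost surely, then with probability at least $1-\delta'$, simultaneously for every $t\ge 0$,
\[
  \sum_{i\le t} Z_i \ \ge\ \tfrac12\sum_{i\le t}\E[Z_i\mid\mathcal F_{i-1}] \;-\; H\ln(1/\delta'),
\]
a Freedman-type inequality of the kind used in \citep{dann2017unifying,menard2021fast}.

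For $\event^2$, fix $(s,a)$, let $\mathcal F_{i-1}$ be the $\sigma$-algebra generated by the first $i-1$ virtual episodes of \textsc{RF-UCB}, and set $Z_i=\sum_{h\in[H]}\mathbb{I}[(s_h^i,a_h^i)=(s,a)]$, so $n^k(s,a)=\sum_{i<k}Z_i$ and $Z_i\in[0,H]$. Since $\pi^i$ is greedy with respect to $U^i$, which is a deterministic function of the counters accumulated in episodes $1,\dots,i-1$, the policy $\pi^i$ is $\mathcal F_{i-1}$-measurable, hence $\E[Z_i\mid\mathcal F_{i-1}]=\sum_{h\in[H]}\widehat d^{\dag}_{\pi^i,h}(s,a)$ (virtual episodes run on $\emP^\dag$). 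Applying the bound above with $\delta'=\delta/(6|\cS||\cA|)$ and taking a union bound over $(s,a)\in\cS\times\cA$ gives $\event^2$; the slack obtained, $H\ln(6|\cS||\cA|/\delta)$, is no larger than $H\ln(6H|\cS||\cA|/\delta)$, and enlarging the subtracted term only makes the event easier to hold.

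For $\event^5$ I would first compare the real-MDP occupancy with the sparsified one by a coupling, and then apply the same concentration. The online data is produced by deploying $\pi_{ex}=\mathrm{Uniform}\{\pi^k\}_{k\in[K_{ucb}]}$ on $\MDP$ for $K_{de}$ episodes; conditioning on the offline phase (which fixes all $\pi^k$), the per-episode counts $Z_1,\dots,Z_{K_{de}}$ of visits to $(s,a)$ are i.i.d., lie in $[0,H]$, and have mean $\sum_{h\in[H]}d_{\pi_{ex},h}(s,a)=\tfrac1{K_{ucb}}\sum_k\sum_h d_{\pi^k,h}(s,a)$. To relate this to $w^{mix}_h(s,a)=\tfrac1{K_{ucb}}\sum_k d^{\dag}_{\pi^k,h}(s,a)$, couple a trajectory of a policy $\pi$ on $\P$ with one on $\P^\dag$ via shared randomness: at each step draw the successor from $\P(\cdot\mid s,a)$ and route the $\P^\dag$-chain to $s^\dag$ precisely when the drawn successor is a missing edge ($N(s,a,s')<\Phi$); once at $s^\dag$ it stays there. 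Thus whenever the $\P^\dag$-chain sits at a real state-action pair at stage $h$, the two chains have agreed through stage $h$, so $d^{\dag}_{\pi,h}(s,a)\le d_{\pi,h}(s,a)$ for every real $(s,a)$ and every deterministic $\pi$. Hence $\E[Z_j\mid\text{offline}]\ge\sum_h w^{mix}_h(s,a)$, and a Bernstein bound for the i.i.d.\ sum $m(s,a)=\sum_{j\le K_{de}}Z_j$, together with a union bound over $(s,a)$, yields $m(s,a)\ge\tfrac12 K_{de}\sum_h w^{mix}_h(s,a)-H\ln(6H|\cS||\cA|/\delta)$, i.e., $\event^5$.

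I expect the only genuinely non-routine ingredient to be the coupling establishing $d^{\dag}_{\pi,h}\le d_{\pi,h}$ on real states; the martingale Bernstein bound is textbook and the remainder is bookkeeping (measurability of $\pi^i$ with respect to $\mathcal F_{i-1}$, union bounds over $\cS\times\cA$, and splitting the failure probability). Note that the statement only claims each of $\event^2$ and $\event^5$ individually with probability $1-\delta/6$, so no intersection bound is needed here; the global event $\event$ is assembled from all five pieces later in \cref{lem:high_prob_event}.
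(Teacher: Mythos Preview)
Your proposal is correct and essentially follows the paper's approach. For $\event^2$ the paper works per triple $(h,s,a)$ with the Bernoulli indicators $X_k=\mathbb{I}[(s_h^k,a_h^k)=(s,a)]$ and the martingale bound of \cref{lem_w}, union-bounding over $[H]\times\cS\times\cA$ and then summing the resulting per-stage inequalities over $h$ to produce the $H$ factor in the slack; you instead aggregate over $h$ first into a $[0,H]$-valued per-episode count and union-bound only over $\cS\times\cA$, which requires a range-$H$ Freedman bound rather than the Bernoulli-specific \cref{lem_w} but yields the same (in fact slightly tighter) slack. For $\event^5$ the paper merely writes ``the proof is almost the same,'' tacitly using $d^{\dag}_{\pi,h}(s,a)\le d_{\pi,h}(s,a)$ for real $(s,a)$ (a fact it invokes elsewhere); your coupling argument makes that step explicit and is the one place where $\event^5$ genuinely differs from $\event^2$.
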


\begin{proof}
    We denote $n_h^k$ as the number of times we encounter $(s,a)$ at stage $h$ before the beginning of episode $k.$ Concretely speaking, we define $n_h^1(s,a) = 0$ for any $(h,s,a).$ Then, we define
    \begin{equation*}
        n_h^k(s,a) = n_h^k(s,a) + \mathbb{I}\left[(s,a) = (s_h^k,a_h^k)\right].
    \end{equation*}
    We fixed an $(s,a,h) \in [H] \times \cS \times \cA$ and denote $\cF_k$ as the sigma field generated by the first $k-1$ episodes when running RF-UCB and $X_k = \mathbb{I}\left[(s_h^k,a_h^k) = (s,a)\right].$ Then, we know $X_k$ is $\cF_{k+1}$-measurable and $\E \left[X_k \mid \cF_k\right] = \widehat{d}_{\pi^k,h}^\dag(s,a)$ is $\cF_k$-measurable. Taking $W = \ln \left(6/\delta\right)$ in Lemma \ref{lem_w} and applying a union bound, we know with probability $1-\delta/6,$ the following event happens:
    \begin{equation*}
        \forall k \in \mathbb{N}, (h,s,a) \in [H] \times \cS \times \cA, \quad n^k_h(s,a) \geq \frac{1}{2}\sum_{i < k} \widehat{d}_{\pi^i,h}^\dag(s,a) - \ln \left(\frac{6H\left|\cS\right| \left|\cA\right|}{\delta}\right).
    \end{equation*}
    To finish the proof, it remains to note that the event above implies the event we want by summing over $h\in[H]$ for each $k \in \mathbb{N}$ and each $(s,a) \in \cS \times \cA.$ For the $\event^5,$ the proof is almost the same.
\end{proof}

\begin{lemma}[Upper Bound on Counters]
    For $\delta > 0,$ it holds that $\P\left(\event^3\right) \geq 1-\delta/6.$
\end{lemma}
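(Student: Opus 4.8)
The plan is to reproduce the argument used for the lower-bound counter event $\event^2$ in \cref{lem_counter1}, but to invoke the \emph{upper}-tail branch of the time-uniform martingale inequality \cref{lem_w} instead of the lower one. First I would fix a triple $(h,s,a)\in[H]\times\cS\times\cA$. For the $k$-th virtual episode of \textsc{RF-UCB} (\cref{alg2}), set $X_k=\mathbb{I}[(s_h^k,a_h^k)=(s,a)]$ and let $\cF_k$ be the $\sigma$-field generated by the first $k-1$ virtual episodes. Because the policy $\pi^k$ is computed from the counters available at the start of episode $k$, it is $\cF_k$-measurable, hence so is its one-step occupancy $\E[X_k\mid\cF_k]=\widehat{d}^\dag_{\pi^k,h}(s,a)$, while $X_k$ itself is $\cF_{k+1}$-measurable. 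This is precisely the predictable-mean Bernoulli setup required by \cref{lem_w}.

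Next I would apply the upper-deviation half of \cref{lem_w} with the parameter tuned (exactly as in the $\event^2$ proof) so that a union bound over the $H\left|\cS\right|\left|\cA\right|$ triples costs total failure probability $\delta/6$; that is, taking $W=\ln(6H\left|\cS\right|\left|\cA\right|/\delta)$ per triple, with probability at least $1-\delta/6$ we obtain, simultaneously for every $k\in\mathbb{N}$ and every $(h,s,a)$,
\begin{equation*}
    n_h^k(s,a)=\sum_{i<k}X_i\;\leq\;2\sum_{i<k}\widehat{d}^\dag_{\pi^i,h}(s,a)+\ln\!\left(\frac{6H\left|\cS\right|\left|\cA\right|}{\delta}\right).
\end{equation*}
Summing over $h\in[H]$ and using $n^k(s,a)=\sum_{h\in[H]}n_h^k(s,a)$, which is the counter update rule of \cref{alg2}, turns the bound into $n^k(s,a)\leq 2\sum_{i<k}\sum_{h\in[H]}\widehat{d}^\dag_{\pi^i,h}(s,a)+H\ln(6H\left|\cS\right|\left|\cA\right|/\delta)$, which is exactly the event $\event^3$; no rescaling of constants is needed.

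I do not anticipate a genuine obstacle here, since this statement is the mirror image of $\event^2$ and all the substantive content---the time-uniform ``for all $k$'' guarantee and the Bernstein-type ``twice the predictable sum plus a log'' form---is already packaged inside \cref{lem_w}. The only points requiring care are (i) confirming that $\widehat{d}^\dag_{\pi^k,h}(s,a)$ is genuinely predictable, i.e.\ that $\pi^k$ depends only on episodes $1,\dots,k-1$, which holds by construction of \cref{alg2}; and (ii) bookkeeping the union-bound budget and the extra factor $H$ produced when aggregating the per-stage counts $n_h^k$ into the total count $n^k$.
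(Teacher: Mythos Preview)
Your approach is essentially identical to the paper's: fix $(h,s,a)$, set $X_k=\mathbb{I}[(s_h^k,a_h^k)=(s,a)]$ with predictable mean $\widehat d^\dag_{\pi^k,h}(s,a)$, apply a time-uniform upper-tail Bernoulli inequality with $W=\ln(6H|\cS||\cA|/\delta)$, then union-bound and sum over $h$. One citation correction: \cref{lem_w} as stated in the paper only provides the \emph{lower}-tail bound $\sum_t X_t \geq \tfrac12\sum_t P_t - W$; the upper-tail inequality $\sum_t X_t \leq 2\sum_t P_t + W$ you need is proved as a separate result, \cref{lem_w_upper}, so you should invoke that rather than an ``upper-deviation half of \cref{lem_w}''.
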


\begin{proof}
    We denote $n_h^k$ as the number of times we encounter $(s,a)$ at stage $h$ before the beginning of episode $k.$ Concretely speaking, we define $n_h^1(s,a) = 0$ for any $(h,s,a).$ Then, we define
    \begin{equation*}
        n_h^k(s,a) = n_h^k(s,a) + \mathbb{I}\left[(s,a) = (s_h^k,a_h^k)\right].
    \end{equation*}
    We fixed an $(s,a,h) \in [H] \times \cS \times \cA$ and denote $\cF_k$ as the sigma field generated by the first $k-1$ episodes when running RF-UCB and $X_k = \mathbb{I}\left[(s_h^k,a_h^k) = (s,a)\right].$ Then, we know $X_k$ is $\cF_{k+1}$-measurable and $\E \left[X_k \mid \cF_k\right] = \widehat{d}_{\pi^k,h}^\dag(s,a)$ is $\cF_k$-measurable. Taking $W = \ln \left(6/\delta\right)$ in Lemma \ref{lem_w_upper} and applying a union bound, we know with probability $1-\delta/6,$ the following event happens:
    \begin{equation*}
        \forall k \in \mathbb{N}, (h,s,a) \in [H] \times \cS \times \cA, \quad n^k_h(s,a) \leq 2\sum_{i < k} \widehat{d}_{\pi^i,h}^\dag(s,a) + \ln \left(\frac{6H\left|\cS\right| \left|\cA\right|}{\delta}\right).
    \end{equation*}
    To finish the proof, it remains to note that the event above implies the event we want by summing over $h\in[H]$ for each $k \in \mathbb{N}$ and each $(s,a) \in \cS \times \cA.$
\end{proof}

\subsubsection{Proof for \cref{lem_intermedia_bound} (property of intermediate uncertainty function)}
\begin{lemma}[Intermediate Uncertainty Function]\label{lem_intermedia_bound}
    We define for any $(h,s,a) \in [H] \times \cS \times \cA$ and any deterministc policy $\pi,$ 
    \begin{align*}
        &W_{H+1}^\pi(s,a,r) := 0,\\
        &W_{h}^\pi(s,a,r) := \min \left\{H-h+1, \sqrt{\frac{8}{H^2} \hphi\left(m(s,a)\right) \cdot \operatorname{Var}_{s^\prime \sim \widetilde{\P}^\dag(s,a)} \left(V_{h+1}\left(s^\prime;\widetilde{\P}^\dag,r^\dag,\pi\right)\right)}\right. \\
        &\hspace{8ex} + 9H \phi\left(m(s,a)\right)+ \left.\left(1+\frac{1}{H}\right)  \left(\widetilde{\P}^\dag \pi_{h+1}\right)  W_{h+1}^\pi(s,a,r)\right\},
    \end{align*}
    where
    \begin{equation*}
        \left(\widetilde{\P}^\dag \pi_{h+1}\right)  W_{h+1}^\pi(s,a,r) := \sum_{s^\prime} \sum_{a^\prime} \widetilde{\P}^\dag \left(s^\prime \mid s,a\right) \pi_{h+1}\left(a^\prime \mid s^\prime\right) W_{h+1}^\pi\left(s^\prime, a^\prime, r\right).
    \end{equation*}
    In addition, we define $W_h^\pi(s^\dag,a,r) = 0$ for any $h \in [H], a \in \cA$ and any reward function $r$. Then, under $\event,$ for any $(h,s,a) \in [H] \times \cS \times \cA$ , any deterministic policy $\pi,$ and any deterministic reward function $r$ (with its augmentation $r^\dag$), it holds that
    \begin{equation*}
        \left|Q_h\left(s,a;\widetilde{\P}^\dag,r^\dag,\pi\right) - Q_h\left(s,a;\P^\dag,r^\dag,\pi\right)\right| \leq W_h^\pi(s,a,r).
    \end{equation*}
\end{lemma}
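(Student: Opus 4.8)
The plan is to prove the inequality by \emph{downward} induction on $h$ while working on the event $\event$, exploiting in particular the bound $\event^4$, which on $\event$ gives $\KL(\widetilde{\P}^\dag(s,a);\P^\dag(s,a))\le \phi(m(s,a))/H$ for every $(s,a)$ (immediate from the definitions of $\event^4$ and of $\phi$ in \cref{def_bonus}). The base case $h=H+1$ is trivial since both $Q$-functions and $W_{H+1}^\pi$ vanish, and the absorbing state is trivial since $Q_h(s^\dag,a;\cdot)\equiv 0$ on both kernels while $W_h^\pi(s^\dag,a,r)=0$. For the inductive step at a fixed $(s,a)\in\cS\times\cA$, I would write out the two Bellman equations, cancel the common reward $r^\dag$, and add and subtract $\P^\dag(s,a)^\top V_{h+1}(\cdot;\widetilde{\P}^\dag,r^\dag,\pi)$ to split the discrepancy into a \emph{transition-estimation} term and a \emph{propagated} term:
\begin{align*}
Q_h\!\left(s,a;\widetilde{\P}^\dag,r^\dag,\pi\right)-Q_h\!\left(s,a;\P^\dag,r^\dag,\pi\right)
&=\left(\widetilde{\P}^\dag(s,a)-\P^\dag(s,a)\right)^\top V_{h+1}\!\left(\cdot;\widetilde{\P}^\dag,r^\dag,\pi\right)\\
&\quad+\P^\dag(s,a)^\top\Big(V_{h+1}\!\left(\cdot;\widetilde{\P}^\dag,r^\dag,\pi\right)-V_{h+1}\!\left(\cdot;\P^\dag,r^\dag,\pi\right)\Big),
\end{align*}
where $V_{h+1}(\cdot;\cdot)=\sum_{a'}\pi_{h+1}(a'\mid\cdot)Q_{h+1}(\cdot,a';\cdot)$.

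For the transition-estimation term, the key move is the Bernstein-type transportation inequality $|p^\top f-q^\top f|\le\sqrt{2\operatorname{Var}_p(f)\,\KL(p;q)}+\tfrac13\|f\|_\infty\KL(p;q)$, applied with $p=\widetilde{\P}^\dag(s,a)$, $q=\P^\dag(s,a)$ and $f=V_{h+1}(\cdot;\widetilde{\P}^\dag,r^\dag,\pi)$ --- legitimate on $\event$, where $\widetilde{\P}^\dag$ is a fixed distribution obeying a uniform KL bound --- followed by inserting $\KL(\widetilde{\P}^\dag(s,a);\P^\dag(s,a))\le \phi(m(s,a))/H$. Up to universal constants and the $\hphi$ truncation this reproduces the term $\sqrt{\tfrac{8}{H^2}\hphi(m(s,a))\operatorname{Var}_{s'\sim\widetilde{\P}^\dag(s,a)}\!\big(V_{h+1}(s';\widetilde{\P}^\dag,r^\dag,\pi)\big)}$ of $W_h^\pi$ in \cref{def_W}, the additive remainder being absorbed into $9H\phi(m(s,a))$. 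For the propagated term, the induction hypothesis bounds it in absolute value by $\P^\dag(s,a)^\top(\pi_{h+1}W_{h+1}^\pi)$, where $\pi_{h+1}W_{h+1}^\pi$ denotes $s'\mapsto\sum_{a'}\pi_{h+1}(a'\mid s')W_{h+1}^\pi(s',a',r)\in[0,H]$; I would then write $\P^\dag(s,a)^\top(\pi_{h+1}W_{h+1}^\pi)=\widetilde{\P}^\dag(s,a)^\top(\pi_{h+1}W_{h+1}^\pi)+(\P^\dag(s,a)-\widetilde{\P}^\dag(s,a))^\top(\pi_{h+1}W_{h+1}^\pi)$, bound the last inner product again by the transportation inequality together with the crude variance bound $\operatorname{Var}_{\widetilde{\P}^\dag(s,a)}(\pi_{h+1}W_{h+1}^\pi)\le H\,\widetilde{\P}^\dag(s,a)^\top(\pi_{h+1}W_{h+1}^\pi)$, and close with a weighted AM--GM step $\sqrt{uv}\le\tfrac{u}{2}+\tfrac{v}{2}$ whose weights are tuned so that the multiplicative overhead is exactly $1/H$. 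This turns the propagated term into $(1+\tfrac1H)(\widetilde{\P}^\dag\pi_{h+1})W_{h+1}^\pi(s,a,r)+O(H\phi(m(s,a)))$, again folding the $O(H\phi)$ remainder into $9H\phi$. Summing the two bounds gives the first argument of the $\min$ in \cref{def_W}; since $0\le Q_h(s,a;\cdot)\le H-h+1$ on both kernels, the discrepancy is also at most $H-h+1$, which supplies the second argument and closes the induction.

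The step I expect to be the main obstacle is not any single estimate but arranging the one-step bound so that it lands \emph{exactly} in the form prescribed by \cref{def_W}. The transition term must produce the variance with respect to $\widetilde{\P}^\dag(s,a)$ of the value function $V_{h+1}(\cdot;\widetilde{\P}^\dag,r^\dag,\pi)$ (this precise form is what later lets the proof of \cref{lem1:maintext} combine Cauchy--Schwarz with a law-of-total-variance argument over the $H$ stages and keep the final bound on the $\sqrt{1/\varepsilon^{2}}$ scale rather than the $1/\varepsilon^{2}$ scale), and the $\P^\dag\!\to\!\widetilde{\P}^\dag$ conversion in the propagated term must cost no more than the factor $1+\tfrac1H$, since any larger constant would compound to a blow-up exponential in $H$ once the recursion is unrolled over all stages. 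Getting the AM--GM weights and the constants inside $\phi$ and $\hphi$ to line up so that both requirements hold simultaneously under the single high-probability event $\event$ is the delicate point; the computation follows the homogeneous-MDP adaptation of the analogous step in \citep{menard2021fast}.
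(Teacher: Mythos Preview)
Your plan is correct and would close the induction, but it uses a \emph{different decomposition} from the paper's, and contains one small slip in the transportation step.

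\textbf{Comparison of decompositions.} The paper adds and subtracts $\widetilde{\P}^\dag(s,a)^\top V_{h+1}(\cdot;\P^\dag,r^\dag,\pi)$ rather than $\P^\dag(s,a)^\top V_{h+1}(\cdot;\widetilde{\P}^\dag,r^\dag,\pi)$, so that the propagated term $II$ is already an expectation under $\widetilde{\P}^\dag$ and matches $(\widetilde{\P}^\dag\pi_{h+1})W_{h+1}^\pi$ directly via the induction hypothesis --- no measure change is needed there. The price is that the transition term $I$ now involves $V_{h+1}(\cdot;\P^\dag,r^\dag,\pi)$ and $\operatorname{Var}_{\P^\dag}$, and the paper spends \emph{two} applications of Lemma~\ref{lem_diff_var} to convert: first the measure $\P^\dag\to\widetilde{\P}^\dag$, then the function $V^{\P^\dag}\to V^{\widetilde{\P}^\dag}$. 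It is precisely the second (function-switch) step that produces the cross term $\tfrac1H|II|$ and hence the $(1+\tfrac1H)$ factor on the propagated part. In your route, by contrast, the $(1+\tfrac1H)$ factor is manufactured separately by the $\P^\dag\to\widetilde{\P}^\dag$ measure change on the propagated term via Bernstein plus AM--GM, while your transition term needs only one measure-switch on the variance. Both routes land on the same $W_h^\pi$; the paper's choice avoids touching the propagated term at all, while yours avoids the function-switch on the variance.

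\textbf{A minor correction.} The Bernstein transportation inequality as stated in the paper (Lemma~\ref{lem_bernstein}) gives the variance under the \emph{second} argument of the KL, not the first: with $\KL(\widetilde{\P}^\dag;\P^\dag)\le\phi(m(s,a))/H$ you obtain $\operatorname{Var}_{\P^\dag}(f)$, not $\operatorname{Var}_{\widetilde{\P}^\dag}(f)$. So your claim that the transition bound ``directly'' produces $\operatorname{Var}_{\widetilde{\P}^\dag}(V_{h+1}^{\widetilde{\P}^\dag})$ is not quite right; you still need one invocation of Lemma~\ref{lem_diff_var} to pass from $\operatorname{Var}_{\P^\dag}$ to $\operatorname{Var}_{\widetilde{\P}^\dag}$, which costs an extra $O(H\phi(m(s,a)))$ absorbed into the $9H\phi$ budget. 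With that fix the argument goes through with constants fitting under $9H\phi$, after the same $H\phi\lessgtr 1$ case split the paper uses to land on $\hphi$.
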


\begin{proof}
    We prove by induction. For $h = H+1,$ we know $W_{H+1}(s,a,r) = 0$ by definition and the left hand side of the inequality we want also vanishes. Suppose the claim holds for $h+1$ and for any $s,a.$ The Bellman equation gives us
    \begin{align*}
        &\Delta_h := Q_h\left(s,a;\widetilde{\P}^\dag,r^\dag,\pi\right) - Q_h\left(s,a;\P^\dag,r^\dag,\pi\right) 
        \leq \underbrace{\sum_{s^\prime} \left(\widetilde{\P}^\dag \left(s^\prime \mid s,a\right) - \P^\dag\left(s^\prime \mid s,a\right)\right) V_{h+1}\left(s^\prime;\P^\dag,r^\dag,\pi\right)}_{I} \\
        &+ \underbrace{\sum_{s^\prime} \widetilde{\P}^\dag \left(s^\prime \mid s,a\right) \left|V_{h+1}\left(s^\prime;\widetilde{\P}^\dag ,r^\dag,\pi\right) - V_{h+1}\left(s^\prime;\P^\dag,r^\dag,\pi\right) \right|}_{II}
    \end{align*}
    Under $\event$, we know that $\KL \left(\widetilde{\P}^\dag;\P^\dag\right) \leq \frac{1}{H}\phi \left(m(s,a)\right)$ for any $(s,a),$ so from Lemma \ref{lem_bernstein} we have
    \begin{align*}
        \left|I\right| 
        &\leq \sqrt{\frac{2}{H} \phi\left(m(s,a)\right) \cdot \operatorname{Var}_{s^\prime \sim \P^\dag(s,a)} \left(V_{h+1}\left(s^\prime;\P^\dag,r^\dag,\pi\right)\right)} + \frac{2}{3}  (H-h) \frac{\phi\left(m(s,a)\right)}{H} \\
        &\leq \sqrt{\frac{2}{H} \phi\left(m(s,a)\right) \cdot \operatorname{Var}_{s^\prime \sim \P^\dag(s,a)} \left(V_{h+1}\left(s^\prime;\P^\dag,r^\dag,\pi\right)\right)} + \frac{2}{3} \phi\left(m(s,a)\right),
    \end{align*}
    where $\phi$ is the bonus defined as \eqref{def_bonus}. Here, we let $f$ in Lemma \ref{lem_bernstein} be $V_{h+1}\left(s^\prime;\P^\dag,r^\dag,\pi\right)$. So the range will be $H-h$ and the upper bound for KL divergence is given by high-probability event $\event.$ We further apply Lemma \ref{lem_diff_var} to get
    \begin{equation*}
        \operatorname{Var}_{s^\prime \sim \P^\dag(s,a)} \left(V_{h+1}\left(s^\prime;\P^\dag,r^\dag,\pi\right)\right) 
        \leq 2 \operatorname{Var}_{s^\prime \sim \widetilde{\P}^\dag(s,a)} \left(V_{h+1}\left(s^\prime;\P^\dag,r^\dag,\pi\right)\right) + 4H \phi\left(m(s,a)\right)
    \end{equation*}
    and
    \begin{align*}
        &\operatorname{Var}_{s^\prime \sim \widetilde{\P}^\dag(s,a)} \left(V_{h+1}\left(s^\prime;\P^\dag,r^\dag,\pi\right)\right) \leq
        2\operatorname{Var}_{s^\prime \sim \widetilde{\P}^\dag(s,a)} \left(V_{h+1}\left(s^\prime;\widetilde{\P}^\dag,r^\dag,\pi\right)\right) \\
        &\hspace{25ex} + 2H \sum_{s^\prime} \widetilde{\P}^\dag \left(s^\prime \mid s,a\right) \left|V_{h+1}\left(s^\prime;\widetilde{\P}^\dag ,r^\dag,\pi\right)- V_{h+1}\left(s^\prime;\P^\dag,r^\dag,\pi\right) \right|.
    \end{align*}
    Therefore, we have
    \begin{align*}
        \left|I\right| &
        \leq \frac{2}{3} \phi\left(m(s,a)\right) + \left[\frac{8}{H} \phi\left(m(s,a)\right) \operatorname{Var}_{s^\prime \sim \widetilde{\P}^\dag(s,a)} \left(V_{h+1}\left(s^\prime;\widetilde{\P}^\dag,r^\dag,\pi\right)\right) + 8 \phi\left(m(s,a)\right)^2 
        \right.\\
        & \left.+ 8 \phi\left(m(s,a)\right) \sum_{s^\prime } \widetilde{\P}^\dag \left(s^\prime \mid s,a\right) \left|V_{h+1}\left(s^\prime;\widetilde{\P}^\dag ,r^\dag,\pi\right)- V_{h+1}\left(s^\prime;\P^\dag,r^\dag,\pi\right) \right|\right]^{1/2}.
    \end{align*}
    Using the fact that $\sqrt{x + y} \leq \sqrt{x} + \sqrt{y}$ and $\sqrt{xy} \leq \frac{
    x+y}{2}$ for positive $x,y,$ and the definition of $II,$ we obtain
    \begin{align*}
        \left|I\right|
        &\leq \sqrt{\frac{8}{H} \phi\left(m(s,a)\right) \cdot \operatorname{Var}_{s^\prime \sim \widetilde{\P}^\dag(s,a)} \left(V_{h+1}\left(s^\prime;\widetilde{\P}^\dag,r^\dag,\pi\right)\right)} + 6H \phi\left(m(s,a)\right) + \frac{1}{H}\left|II\right|,
    \end{align*}
    which implies
    \begin{equation*}
        \left|\Delta_h\right|
        \leq \sqrt{\frac{8}{H} \phi\left(m(s,a)\right) \cdot \operatorname{Var}_{s^\prime \sim \widetilde{\P}^\dag(s,a)} \left(V_{h+1}\left(s^\prime;\widetilde{\P}^\dag,r^\dag,\pi\right)\right)} + 6H \phi\left(m(s,a)\right) + \left( 1 + \frac{1}{H}\right)\left|II\right|.
    \end{equation*}
    If $H \phi\left(m(s,a)\right) \leq 1,$ then we have by definition
    \begin{align*}
        \left|\Delta_h\right|
        &\leq \sqrt{\frac{8}{H^2} \cdot H\phi\left(m(s,a)\right) \cdot \operatorname{Var}_{s^\prime \sim \widetilde{\P}^\dag(s,a)} \left(V_{h+1}\left(s^\prime;\widetilde{\P}^\dag,r^\dag,\pi\right)\right)} + 6H \phi\left(m(s,a)\right) + \left(1 + \frac{1}{H}\right)\left|II\right| \\
        &\leq \sqrt{\frac{8}{H^2} \hphi\left(m(s,a)\right) \cdot \operatorname{Var}_{s^\prime \sim \widetilde{\P}^\dag(s,a)} \left(V_{h+1}\left(s^\prime;\widetilde{\P}^\dag,r^\dag,\pi\right)\right)} + 6H \phi\left(m(s,a)\right) + \left(1 + \frac{1}{H}\right)\left|II\right|
    \end{align*}
    by definition. Otherwise, if $H \phi\left(m(s,a)\right) \geq 1,$ we have
    \begin{equation*}
        \sqrt{\frac{8}{H} \phi\left(m(s,a)\right) \cdot \operatorname{Var}_{s^\prime \sim \widetilde{\P}^\dag(s,a)} \left(V_{h+1}\left(s^\prime;\widetilde{\P}^\dag,r^\dag,\pi\right)\right)} \leq \sqrt{8 H\phi\left(m(s,a)\right)} \leq 3H\phi\left(m(s,a)\right).
    \end{equation*}
    Therefore, we have in either case
    \begin{equation*}
        \left|\Delta_h\right|
        \leq \sqrt{\frac{8}{H^2} \hphi\left(m(s,a)\right) \cdot \operatorname{Var}_{s^\prime \sim \widetilde{\P}^\dag(s,a)} \left(V_{h+1}\left(s^\prime;\widetilde{\P}^\dag,r^\dag,\pi\right)\right)} + 9H \phi\left(m(s,a)\right) + \left(1 + \frac{1}{H}\right)\left|II\right|.
    \end{equation*}
    The induction hypothesis gives us that
    \begin{align*}
        \left|II\right| 
        &\leq \sum_{s^\prime} \sum_{a^\prime} \widetilde{\P}^\dag \left(s^\prime \mid s,a\right) \pi_{h+1}\left(a^\prime \mid s^\prime\right) \left|Q_{h+1}\left(s^\prime,a^\prime;\widetilde{\P}^\dag,r^\dag,\pi\right) - Q_{h+1}\left(s^\prime,a^\prime;\P^\dag,r^\dag,\pi\right)\right| \\
        &\leq \left(\widetilde{\P}^\dag \pi_{h+1}\right)  W_{h+1}^\pi(s,a,r).
    \end{align*}
    Inserting this into the upper bound of $\Delta_h,$ we prove the case of $h$ by the definition of $W_h^\pi(s,a,r)$ and the simple fact that $\left|\Delta_h\right| \leq H-h+1.$
\end{proof}

\subsubsection{Proof for \cref{lem_bound_X} and \cref{lem_bound_X_U_1} (properties of uncertainty function)}

In this section, we prove some lemmas for upper bounding the uncertainty functions $X_h(s,a).$ We first provide a basic upper bound for $X_h(s,a).$ The uncertainty function is defined in \cref{def_uncertainty_func}.

\begin{lemma}\label{lem_bound_X}
    Under the high probability event $\event$ (defined in \cref{app:sec:def_event}) for all $(h,s,a) \in [H] \times \cS \times \cA,$ it holds that
    \begin{equation*}
        X_h(s,a) 
        \leq 11H \min \left\{1,\phi\left(m(s,a)\right)\right\} +\left(1+\frac{2}{H}\right) \P^\dag \left(\cdot \mid s,a\right)^\top \left( \max_{a^\prime} X_{h+1}(\cdot,a^\prime) \right).
    \end{equation*}
    In addition, for $s = s^\dag,$ from definition, we know the above upper bound naturally holds.
\end{lemma}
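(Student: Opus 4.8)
The plan is to unfold the definition of $X_h(s,a)$ from \cref{def_uncertainty_func}, which writes $X_h(s,a)$ as the minimum of $H-h+1$ and $9H\phi(m(s,a)) + \bigl(1+\tfrac1H\bigr)\widetilde{\P}^\dag(\cdot\mid s,a)^\top\bigl(\max_{a'}X_{h+1}(\cdot,a')\bigr)$, and to replace the fine-estimated sparsified kernel $\widetilde{\P}^\dag$ by the population sparsified kernel $\P^\dag$, paying for the swap with an additive term of order $H\phi(m(s,a))$ and with a slightly larger multiplicative factor $1+\tfrac2H$ in place of $1+\tfrac1H$. Throughout I work on the event $\event$ of \cref{lem:high_prob_event}; its sub-event $\event^4$ gives, for every $(s,a)$ with $m(s,a)\ge 1$,
\[
\KL\!\left(\widetilde{\P}^\dag(\cdot\mid s,a);\,\P^\dag(\cdot\mid s,a)\right)\;\le\;\frac1H\,\phi(m(s,a)),
\]
which is immediate upon comparing the bound defining $\event^4$ with the definition of $\phi$ in \cref{def_bonus}.

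The heart of the argument is a one-step transfer inequality. Set $f:=\max_{a'}X_{h+1}(\cdot,a')$, so that $0\le f\le H-h\le H$ and $f(s^\dag)=0$, and apply the Bernstein-type KL bound \cref{lem_bernstein} — the same lemma invoked in the proof of \cref{lem_intermedia_bound} — to the pair $\widetilde{\P}^\dag(\cdot\mid s,a),\ \P^\dag(\cdot\mid s,a)$ with test function $f$ and KL radius $\alpha=\tfrac1H\phi(m(s,a))$, noting the range of $f$ is at most $H-h$. This yields
\[
\widetilde{\P}^\dag(\cdot\mid s,a)^\top f\;\le\;\P^\dag(\cdot\mid s,a)^\top f+\sqrt{\tfrac2H\,\phi(m(s,a))\,\operatorname{Var}_{\P^\dag(\cdot\mid s,a)}(f)}+\tfrac23\,\phi(m(s,a)).
\]
Bounding $\operatorname{Var}_{\P^\dag}(f)\le H\cdot\P^\dag(\cdot\mid s,a)^\top f$ (valid since $0\le f\le H$) and then applying AM--GM in the form $\sqrt{2\phi\cdot\P^\dag{}^\top f}\le \tfrac cH\,\P^\dag{}^\top f+\tfrac{H}{2c}\,\phi$ with the specific choice $c=\tfrac{H}{H+1}$, a short computation gives $\bigl(1+\tfrac1H\bigr)\widetilde{\P}^\dag(\cdot\mid s,a)^\top f\le \bigl(1+\tfrac2H\bigr)\P^\dag(\cdot\mid s,a)^\top f+2H\,\phi(m(s,a))$ (the case $H=1$ is trivial, since then the look-ahead term vanishes). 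Adding $9H\phi(m(s,a))$ and recalling $X_h(s,a)\le 9H\phi(m(s,a))+\bigl(1+\tfrac1H\bigr)\widetilde{\P}^\dag(\cdot\mid s,a)^\top f$ produces exactly $X_h(s,a)\le 11H\phi(m(s,a))+\bigl(1+\tfrac2H\bigr)\P^\dag(\cdot\mid s,a)^\top\bigl(\max_{a'}X_{h+1}(\cdot,a')\bigr)$.

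It remains to upgrade $\phi(m(s,a))$ to $\min\{1,\phi(m(s,a))\}$ and to handle degenerate inputs, which I do by a case split. If $\phi(m(s,a))\le 1$, the inequality just derived is precisely the claim. If $\phi(m(s,a))>1$ — in particular when $m(s,a)=0$, where $\widetilde{\P}^\dag(\cdot\mid s,a)$ is vacuous — then the right-hand side of the claimed bound is at least $11H$, whereas $X_h(s,a)\le H-h+1\le H$ holds directly from the other branch of the $\min$ in \cref{def_uncertainty_func}; the same trivial bound disposes of $s=s^\dag$, where $X_h(s^\dag,a)=0$ and the right-hand side is nonnegative. The only step that demands care is the bookkeeping in the middle paragraph: pushing the multiplicative constant down to exactly $1+\tfrac2H$ (rather than the naive $(1+\tfrac1H)^2$) requires the particular AM--GM split above, and one must verify that the additive residue from \cref{lem_bernstein}, of size $\asymp H\phi(m(s,a))$ after the split, fits within the $2H\phi(m(s,a))$ of slack left once the $9H\phi(m(s,a))$ already present in the definition of $X_h$ has been set aside. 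Everything else follows directly from the definitions and from conditioning on $\event$.
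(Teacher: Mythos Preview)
Your proof is correct and follows essentially the same route as the paper: invoke the KL bound from $\event^4$, apply the Bernstein transportation inequality (\cref{lem_bernstein}) to $f=\max_{a'}X_{h+1}(\cdot,a')$, bound the variance by $H\cdot\P^\dag f$, split via AM--GM, and handle the case $\phi(m(s,a))>1$ (and $s=s^\dag$) through the trivial branch $X_h\le H-h+1\le 11H$. Your AM--GM split with $c=\tfrac{H}{H+1}$ is in fact sharper than the paper's naive $\sqrt{ab}\le\tfrac{a+b}{2}$, landing exactly on the stated factor $1+\tfrac{2}{H}$ where the paper's computation, read literally, only gives $(1+\tfrac{1}{H})^2$.
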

\begin{proof}
    For any fixed $(h,s,a) \in [H] \times \cS \times \cA,$ from the definition of the uncertainty function, we know
    \begin{align*}
        X_h(s,a) 
        &\leq  9H \phi\left(m(s,a)\right)+\left(1+\frac{1}{H}\right) \widetilde{\P}^\dag \left(\cdot \mid s,a\right)^\top \left( \max_{a^\prime} X_{h+1}(\cdot,a^\prime) \right).
    \end{align*}
    To prove the lemma, it suffices to bound the difference between $\widetilde{\P}^\dag \left(\cdot \mid s,a\right)^\top \left( \max_{a^\prime} X_{h+1}(\cdot,a^\prime) \right)$ and $\P^\dag \left(\cdot \mid s,a\right)^\top \left( \max_{a^\prime} X_{h+1}(\cdot,a^\prime) \right).$ Under $\event$(which happens with probability at least $1-\delta$), it holds that $\KL\left(\widetilde{\P}^\dag_h \left(\cdot \mid s,a\right);\P^\dag \left(\cdot \mid s,a\right)\right) \leq \frac{1}{H} \phi\left(m(s,a)\right).$ 
    Applying Lemma \ref{lem_bernstein} and a simple bound for variance gives us
    \begin{align*}
        &\left|\widetilde{\P}^\dag \left(\cdot \mid s,a\right)^\top \left( \max_{a^\prime} X_{h+1}(\cdot,a^\prime) \right) - \P^\dag \left(\cdot \mid s,a\right)^\top \left( \max_{a^\prime} X_{h+1}(\cdot,a^\prime) \right) \right|\\
        \leq & \sqrt{\frac{2}{H} \operatorname{Var}_{s^\prime \sim \P^\dag\left(s,a\right)} \left(\max_{a^\prime} X_{h+1}\left(s^\prime,a^\prime\right)\right) \phi\left(m(s,a)\right)} + \frac{2}{3} \phi\left(m(s,a)\right) \\
        \leq& \sqrt{\left[\frac{2}{H}\P^\dag \left(\cdot \mid s,a\right)^\top \left( \max_{a^\prime} X_{h+1}(\cdot,a^\prime)^2 \right)\right] \phi\left(m(s,a)\right)} + \frac{2}{3} \phi\left(m(s,a)\right) \\
        \leq& \sqrt{\left[\frac{2}{H}\P^\dag \left(\cdot \mid s,a\right)^\top \left( \max_{a^\prime} X_{h+1}(\cdot,a^\prime) \right)\right] \cdot H\phi\left(m(s,a)\right)} + \frac{2}{3} \phi\left(m(s,a)\right) \\
        \leq& \frac{1}{H} \P^\dag \left(\cdot \mid s,a\right)^\top \left( \max_{a^\prime} X_{h+1}(\cdot,a^\prime) \right) + 2H \phi\left(m(s,a)\right).
    \end{align*}
    The last line comes from $\sqrt{ab} \leq \frac{a+b}{2}$ for any positive $a,b,$ and the fact that $\frac{1}{2}H\phi(m(s,a)) + \frac{2}{3}\phi(m(s,a)) \leq 2H\phi(m(s,a)).$ 
    Insert this bound into the definition of $X_h(s,a)$ to obtain
    \begin{equation*}
        X_h(s,a) 
        \leq  11H \phi\left(m(s,a)\right)+\left(1+\frac{2}{H}\right) \P^\dag \left(\cdot \mid s,a\right)^\top \left( \max_{a^\prime} X_{h+1}(\cdot,a^\prime) \right).
    \end{equation*}
    Noticing that $X_h(s,a) \leq H-h+1\leq 11H,$ we conclude.
\end{proof}

\ 
\begin{lemma}\label{lem_bound_X_U_1}
    There exists a universal constant $C \geq 1$ such that under the high probability event $\event,$ when $3K_{ucb} \geq K_{de},$ we have for any $(h,s,a) \in [H] \times \cS \times \cA,$ it holds that
    \begin{equation*}
        X_h(s,a) \leq CH \frac{K_{ucb}}{K_{de}} \phi \left(n^{K_{ucb}}(s,a)\right)+ \left(1+\frac{2}{H}\right) \P^\dag(\cdot \mid s,a)^\top \left(\max_{a^\prime}\left\{X_{h+1}(\cdot,a^\prime)\right\}\right).
    \end{equation*}
    In addition, for $s = s^\dag,$ from definition, we know the above upper bound naturally holds.
\end{lemma}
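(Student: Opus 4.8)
The recursion for $X_h$ is already almost in the stated shape: Lemma~\ref{lem_bound_X} gives, under $\event$,
\[
X_h(s,a)\le 11H\min\{1,\phi(m(s,a))\}+\Big(1+\tfrac{2}{H}\Big)\,\P^\dag(\cdot\mid s,a)^\top\Big(\max_{a'}X_{h+1}(\cdot,a')\Big).
\]
So the plan is to leave the transition term untouched and only replace the local term $11H\min\{1,\phi(m(s,a))\}$ by $C\,H\,\tfrac{K_{ucb}}{K_{de}}\,\phi\big(n^{K_{ucb}}(s,a)\big)$ for a universal $C$ and every $(s,a)$. For $s=s^\dag$ all of $X_\cdot(s^\dag,\cdot)$ and the transition term vanish, so there is nothing to prove; hence only $s\in\cS$ matters. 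The whole task thus reduces to comparing the online visitation count $m(s,a)$ with the simulated visitation count $n^{K_{ucb}}(s,a)$.

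First I would chain the concentration events. Write $\iota=\log(6H|\cS||\cA|/\delta)$. Event $\event^5$ gives $m(s,a)\ge \tfrac{K_{de}}{2K_{ucb}}\sum_{k\le K_{ucb}}\sum_h d^\dag_{\pi^k,h}(s,a)-H\iota$, and event $\event^3$ gives $n^{K_{ucb}}(s,a)\le 2\sum_{k\le K_{ucb}}\sum_h \widehat d^\dag_{\pi^k,h}(s,a)+H\iota$. The gap to bridge is that the former involves the occupancy $d^\dag$ under the \emph{true} sparsified dynamics (which govern the online phase, since on surviving edges $\P$ and $\P^\dag$ agree while on the remaining edges the online phase can only visit $s\in\cS$ more often than $\Msp$ does), whereas the latter involves the occupancy $\widehat d^\dag$ under the \emph{empirical} sparsified dynamics (which govern the simulation). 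These two occupancies are within a universal constant factor of one another: on every edge with $N(s,a,s')\ge\Phi$, the Bernstein bound in $\event^P$ together with $\Phi=\widetilde\Theta(H^2)$ yields a per-edge relative bound $|\widehat\P^\dag-\P^\dag|\le \tfrac1H\widehat\P^\dag+\text{(lower order)}$, and propagating this $1+O(1/H)$ factor through the $H$ layers (the occupancy-measure analogue of Lemma~\ref{lem_mul_acc}) keeps $\widehat d^\dag_{\pi,h}(s,a)$ and $d^\dag_{\pi,h}(s,a)$ within a constant factor. Combining the three facts gives, for a universal $c\in(0,1]$,
\[
m(s,a)\ \ge\ c\,\tfrac{K_{de}}{K_{ucb}}\big(n^{K_{ucb}}(s,a)-H\iota\big)-H\iota .
\]

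It then remains to turn this into a bound on $\phi(m(s,a))$, which I would do by splitting on the size of $n^{K_{ucb}}(s,a)$ and using $\phi(x)\ge \tfrac{H\iota}{x}$ together with $K_{de}/K_{ucb}\le 3$ (this is exactly where the hypothesis $3K_{ucb}\ge K_{de}$ is consumed). If $n^{K_{ucb}}(s,a)\le C'\tfrac{K_{ucb}}{K_{de}}H\iota$ for a large enough universal $C'$, then $\phi(n^{K_{ucb}}(s,a))\ge \tfrac{K_{de}}{C'K_{ucb}}$, so $H\tfrac{K_{ucb}}{K_{de}}\phi(n^{K_{ucb}}(s,a))\ge H/C'$ whereas $11H\min\{1,\phi(m(s,a))\}\le 11H$, and $C\ge 11C'$ closes this case. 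If instead $n^{K_{ucb}}(s,a)> C'\tfrac{K_{ucb}}{K_{de}}H\iota$, then $n^{K_{ucb}}(s,a)$ dominates both $H\iota$ slack terms, so the displayed inequality simplifies to $m(s,a)\ge c'\,\tfrac{K_{de}}{K_{ucb}}n^{K_{ucb}}(s,a)$ with a universal $c'\in(0,1]$; then the monotonicity of $\phi$ (Lemma~\ref{lem_bonus}) and the elementary bound $\phi(\lambda x)\le \lambda^{-1}\phi(x)$ for $0<\lambda\le1$ give $\phi(m(s,a))\le \tfrac{1}{c'}\tfrac{K_{ucb}}{K_{de}}\phi(n^{K_{ucb}}(s,a))$, hence $11H\min\{1,\phi(m(s,a))\}\le \tfrac{11}{c'}H\tfrac{K_{ucb}}{K_{de}}\phi(n^{K_{ucb}}(s,a))$. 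Taking $C$ to be the maximum of the two resulting constants finishes the proof.

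The step I expect to be the genuine obstacle is the multiplicative comparison of the two occupancy measures $d^\dag$ and $\widehat d^\dag$: this is precisely where the design of the sparsified MDP pays off, since it is the threshold $\Phi=\widetilde\Theta(H^2)$ that upgrades the \emph{additive} Bernstein deviation in $\event^P$ into a \emph{per-edge relative} deviation of order $1/H$, which then accumulates to at most a constant factor over the $H$ layers. The secondary nuisance is the bookkeeping of the two stray $H\iota$ terms and of the ratio $K_{ucb}/K_{de}$, which forces the case split above.
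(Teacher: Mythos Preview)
Your proposal is correct and follows essentially the same route as the paper. The paper also starts from Lemma~\ref{lem_bound_X}, keeps the transition term, and replaces $\min\{1,\phi(m(s,a))\}$ by $\phi(n^{K_{ucb}}(s,a))$ via the chain $\event^5$ (relating $m$ to $\sum d^\dag$), Lemma~\ref{lem_vist_prob} (relating $d^\dag$ to $\widehat d^\dag$), and $\event^3$ (relating $\widehat d^\dag$ to $n^{K_{ucb}}$); the only difference is packaging. The paper works at the level of $\phi$ throughout, encapsulating each conversion step in a separate sub-lemma (Lemma~\ref{lem_bound_bonus1} for the $m\to K_{de}\sum w^{mix}$ step, Lemma~\ref{lem_bonus_n} for the $\sum\widehat d^\dag\to n^{K_{ucb}}$ step), each of which hides its own case split on whether the relevant count dominates $H\iota$; you instead chain the raw-count inequalities first and do one case split at the end. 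Both approaches consume $3K_{ucb}\ge K_{de}$ at exactly the place you identified, when applying $\phi(\lambda x)\le \lambda^{-1}\phi(x)$ with $\lambda\asymp K_{de}/K_{ucb}$, and both rely on the occupancy comparison that is stated as Lemma~\ref{lem_vist_prob}.
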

\begin{proof}
Here, the universal constant may vary from line to line. Under $\event,$ we have
\begin{align*}
    &X_h(s,a) \\
    \leq& CH \phi\left(m(s,a)\right)+\left(1+\frac{1}{H}\right) \widetilde{\P}^\dag \left(\cdot \mid s,a\right)^\top \left( \max_{a^\prime} X_{h+1}(\cdot,a^\prime) \right). \tag{definition}\\
    \leq& CH \min \left\{1,\phi \left( m(s,a)\right)\right\} + \left(1+\frac{2}{H}\right) \P^\dag(\cdot \mid s,a)^\top \left(\max_{a^\prime}\left\{X_{h+1}(\cdot,a^\prime)\right\}\right) \tag{Lemma \ref{lem_bound_X}}\\
    \leq& CH \phi \bigg(K_{de} \sum_{h \in [H]} w^{mix}_h(s,a)\bigg) + \left(1+\frac{2}{H}\right)\P^\dag(\cdot \mid s,a)^\top \left(\max_{a^\prime}\left\{X_{h+1}(\cdot,a^\prime)\right\}\right) \tag{Lemma \ref{lem_bound_bonus1}}\\
    = &CH \phi \left(\frac{K_{de}}{K_{ucb}} \sum_{k=1}^{K_{ucb}} \sum_{h \in [H]} d^\dag_{\pi^k,h}(s,a)\right) + \left(1+\frac{2}{H}\right) \P^\dag(\cdot \mid s,a)^\top \left(\max_{a^\prime}\left\{X_{h+1}(\cdot,a^\prime)\right\}\right) \tag{definition} \\
    \leq &CH \phi \left(\frac{K_{de}}{3K_{ucb}} \sum_{k=1}^{K_{ucb}} \sum_{h \in [H]} \widehat{d}^\dag_{\pi^k,h}(s,a)\right) + \left(1+\frac{2}{H}\right) \P^\dag(\cdot \mid s,a)^\top \left(\max_{a^\prime}\left\{X_{h+1}(\cdot,a^\prime)\right\}\right) \tag{Lemma \ref{lem_vist_prob} and \ref{lem_bonus}} \\
    \leq &CH \frac{K_{ucb}}{K_{de}} \phi \left(\sum_{k=1}^{K_{ucb}} \sum_{h \in [H]} \widehat{d}^\dag_{\pi^k,h}(s,a)\right) + \left(1+\frac{2}{H}\right) \P^\dag(\cdot \mid s,a)^\top \left(\max_{a^\prime}\left\{X_{h+1}(\cdot,a^\prime)\right\}\right) \tag{Lemma \ref{lem_bonus}}
\end{align*}
Since $X_h(s,a) \leq (H-h) \leq CH \frac{K_{ucb}}{K_{de}},$ we can modify the last display as 
\begin{align*}
    X_h(s,a) 
    &\leq CH \frac{K_{ucb}}{K_{de}} \min\left\{1,\phi \left(\sum_{k=1}^{K_{ucb}} \sum_{h \in [H]} \widehat{d}^\dag_{\pi^k,h}(s,a)\right) \right\}+ \left(1+\frac{2}{H}\right)\P^\dag(\cdot \mid s,a)^\top \left(\max_{a^\prime}\left\{X_{h+1}(\cdot,a^\prime)\right\}\right) \\
    &\leq CH \frac{K_{ucb}}{K_{de}} \phi  \left(n^{K_{ucb}}(s,a)\right)+ \left(1+\frac{2}{H}\right) \P^\dag(\cdot \mid s,a)^\top \left(\max_{a^\prime}\left\{X_{h+1}(\cdot,a^\prime)\right\}\right) \tag{Lemma \ref{lem_bonus_n}}
\end{align*}
Therefore, we conclude.
\end{proof}

\subsubsection{Proof for \cref{lem_bonus}, \cref{lem_bound_bonus1}, \cref{lem_bonus_n}, \cref{lem_bound_bonus2} (properties of bonus function)}
For our bonus function, we have the following basic property.
\begin{lemma}\label{lem_bonus}
    $\phi(x)$ is non-increasing when $x > 0.$ For any $\alpha \leq 1,$ we have $\phi(\alpha x) \leq \frac{1}{\alpha} \phi(x).$
\end{lemma}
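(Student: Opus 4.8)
The plan is to establish the two assertions separately, both via elementary manipulation of the closed form in \cref{def_bonus}. Write $c := \log(6H|\cS||\cA|/\delta)$, which is strictly positive since $\delta \in (0,1)$ and $H,|\cS|,|\cA|\ge 1$; then
$$\phi(x) = \frac{Hc}{x} + \frac{H|\cS|}{x}\log\!\left(e\Big(1+\frac{x}{|\cS|}\Big)\right).$$

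For the monotonicity claim I would argue that each of the two summands is non-increasing on $(0,\infty)$, so their sum is too. The term $Hc/x$ is trivially decreasing. For the second term, substituting $u = x/|\cS| > 0$ reduces matters to showing that $u\mapsto \frac{1+\log(1+u)}{u}$ is non-increasing; its derivative has numerator $\frac{u}{1+u} - 1 - \log(1+u) = -\frac{1}{1+u} - \log(1+u) < 0$, which gives the claim. Equivalently, one can differentiate $\phi$ directly: setting $g(x) := c + |\cS| + |\cS|\log(1+x/|\cS|)$ so that $\phi(x) = Hg(x)/x$, one has $\phi'(x) = H\big(xg'(x) - g(x)\big)/x^2$, and since $xg'(x) = \frac{x|\cS|}{|\cS|+x} < |\cS| \le g(x)$ we get $\phi'(x) < 0$ for all $x>0$, hence $\phi$ is (strictly) decreasing and in particular non-increasing.

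For the scaling inequality, fix $0 < \alpha \le 1$ and $x > 0$. Factoring $1/\alpha$ out of $\phi(\alpha x)$ gives $\phi(\alpha x) = \frac{1}{\alpha}\cdot\frac{H}{x}\big[c + |\cS|\log(e(1+\alpha x/|\cS|))\big]$. Since $\alpha x \le x$ we have $1 + \alpha x/|\cS| \le 1 + x/|\cS|$, hence $\log(e(1+\alpha x/|\cS|)) \le \log(e(1+x/|\cS|))$; as $\frac{H}{\alpha x} > 0$, multiplying through yields $\phi(\alpha x) \le \frac{1}{\alpha}\phi(x)$, which is the desired bound.

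There is no substantive obstacle here; the only point requiring minor care is that $c \ge 0$ (so that retaining it preserves the inequalities), which follows from the standing assumption $\delta < 1$ together with $H,|\cS|,|\cA|\ge 1$.
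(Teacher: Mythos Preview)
Your proof is correct and follows essentially the same approach as the paper: for monotonicity, the paper also differentiates $\phi$ directly (working with $f(x)=\frac{1}{x}[C+D\log(e(1+x/D))]$ and showing $f'(x)\le 0$), and for the scaling claim the paper likewise says it is immediate from monotonicity of the logarithm. Your argument is slightly more detailed (you supply the extra summand-by-summand variant and note the role of $c\ge 0$), but the substance is the same.
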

\begin{proof}
    We define $f(x) := \frac{1}{x}\left[C + D\log \left(e\left(1+\frac{x}{D}\right)\right)\right],$ where $C,D \geq 1.$ Then, for $x > 0,$
    \begin{equation*}
        f^\prime(x) = -\frac{C + D\log \left(e\left(1+\frac{x}{D}\right)\right)}{x^2} + \frac{D}{x(D+x)} \leq -\frac{C + D\log \left(1+\frac{x}{D}\right)}{x^2} \leq 0.
    \end{equation*}
    Taking $C = \log \left(6H \left|\cS\right| \left|\cA\right|\delta\right)$ and $D = \left|\cS\right|,$ we conclude thee first claim. For the second claim, it is trivial since the logarithm function is increasing.
\end{proof}

\ 

\begin{lemma}\label{lem_bound_bonus1}
    Under $\event,$ we have for any $(s,a) \in \cS \times \cA,$
    \begin{equation*}
        \min\left\{1,\phi\left(m(s,a)\right)\right\} \leq 4 \phi\left(K_{de} \sum_{h=1}^H w_h^{mix}(s,a)\right).
    \end{equation*}
\end{lemma}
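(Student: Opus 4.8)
The plan is a short two-case argument, the split being on whether $\phi(W)$ is large or small, where I write $W := K_{de}\sum_{h=1}^H w_h^{mix}(s,a)$ for the right-hand argument of $\phi$, and $\iota := \log(6H|\cS||\cA|/\delta)$, noting $\iota \geq 1$ since $\delta < 1$. If $\phi(W) \geq 1/4$, then $4\phi(W) \geq 1 \geq \min\{1,\phi(m(s,a))\}$ and there is nothing to prove; this case also absorbs the degenerate situations $m(s,a)=0$ or $W=0$ with the convention $\phi(0)=+\infty$, since under $\event^5$ (which is contained in $\event$) the bound $m(s,a) \geq \tfrac12 W - H\iota$ forces $W \leq 2H\iota$ when $m(s,a)=0$, hence $\phi(W) \geq H\iota/W \geq 1/2$.

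The substantive case is $\phi(W) < 1/4$. Here I would first extract from the definition of $\phi$ in \cref{def_bonus} the crude lower bound $\phi(W) \geq H\iota/W$ (discarding the nonnegative term $\frac{H}{W}|\cS|\log(e(1+W/|\cS|))$), so that $\phi(W) < 1/4$ forces $W > 4H\iota$. Plugging this into the count lower bound from $\event^5$, $m(s,a) \geq \tfrac12 W - H\iota > \tfrac12 W - \tfrac14 W = \tfrac14 W$. Now I invoke \cref{lem_bonus}: since $\phi$ is non-increasing, $\phi(m(s,a)) \leq \phi(W/4)$, and since $\phi(\alpha x) \leq \alpha^{-1}\phi(x)$ for $\alpha \leq 1$, taking $\alpha = 1/4$ gives $\phi(W/4) \leq 4\phi(W)$. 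Chaining, $\min\{1,\phi(m(s,a))\} \leq \phi(m(s,a)) \leq 4\phi(W)$, which is exactly the claimed inequality.

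There is no real obstacle here — the proof is essentially bookkeeping — but the one point that needs care is the interface between the randomness in $m(s,a)$ and the deterministic quantity $W$: one must use that \emph{on the event} $\event$ the lower bound $\event^5$ holds, and that $\event^5$ is stated in terms of the same occupancy measures $d^\dag_{\pi^k,h}$ (via $w^{mix}_h$) that appear in $W$, so no additional conversion between $d^\dag$ and $\widehat d^\dag$ is needed. The only other thing worth double-checking is that the constant $4$ is exactly what the argument produces: the threshold $1/4$ is forced because we need both $4\phi(W)\ge 1$ in the easy case and $\alpha = \tfrac12-\tfrac14 = \tfrac14$ with $\alpha^{-1}=4$ in the main case, so the two cases meet precisely at $\phi(W)=1/4$.
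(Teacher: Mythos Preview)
Your proposal is correct and follows essentially the same approach as the paper: a two-case split (trivial case where $4\phi(W)\geq 1$, substantive case where $\event^5$ gives $m(s,a)\geq W/4$), then monotonicity and the scaling inequality from \cref{lem_bonus}. The only cosmetic difference is that the paper splits on the equivalent threshold $H\ln(6H|\cS||\cA|/\delta)\lessgtr \tfrac14 W$ rather than on $\phi(W)\lessgtr \tfrac14$, but the logic is identical.
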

\begin{proof}
    For fixed $(s,a) \in \cS \times \cA,$ when $H \ln \left(6H\left|\cS\right|\left|\cA\right|/\delta\right) \leq \frac{1}{4}\left(K_{de} \sum_{h\in [H]} w_h^{mix}(s,a)\right),$ we know that $\event^5$ implies $m(s,a) \geq \frac{1}{4}\sum_{h\in [H]}K_{de} w_h^{mix}(s,a),$ From Lemma \ref{lem_bonus}, we know 
    $$
    \phi\left(m(s,a)\right) \leq \phi\left(\frac{1}{4}\sum_{h\in [H]} K_{de} w_h^{mix}(s,a)\right) \leq 4 \phi\left(\sum_{h\in [H]} K_{de} w_h^{mix}(s,a)\right).
    $$
    When $H\ln \left(6H\left|\cS\right|\left|\cA\right|/\delta\right) > \frac{1}{4}\left(K_{de} \sum_{h\in [H]} w_h^{mix}(s,a)\right),$ simple algebra shows that 
    $$
    \min\left\{1,\phi\left( m(s,a)\right)\right\} \leq 1\leq \frac{4 H\ln \left(6H\left|\cS\right|\left|\cA\right|/\delta\right)}{K_{de} \sum_{h\in [H]} w^{mix}_h(s,a) } \leq 4 \phi\left(K_{de} \sum_{h\in [H]} w_h^{mix}(s,a)\right).
    $$
    Therefore, we conclude.
\end{proof}

\ 

\begin{lemma}\label{lem_bonus_n}
    Under $\event,$ we have for any $(s,a) \in \cS \times \cA,$
    \begin{equation*}
        \min \left\{1,\phi \left(\sum_{k=1}^{K_{ucb}} \sum_{h=1}^H \widehat{d}^{\dag}_{\pi^k,h}(s,a)\right)\right\} \leq 4\phi \left(n^{K_{ucb}}(s, a)\right).
    \end{equation*}
\end{lemma}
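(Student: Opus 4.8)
The plan is to reproduce the argument of \cref{lem_bound_bonus1} almost verbatim, but with the roles of ``realized count'' and ``expected occupancy'' exchanged, so that the relevant high-probability event is the \emph{upper}-bound event $\event^3$ rather than the lower-bound event $\event^5$. Write $S := \sum_{k=1}^{K_{ucb}}\sum_{h=1}^H \widehat{d}^\dag_{\pi^k,h}(s,a)$ and abbreviate $\iota := \ln\!\big(6H|\cS||\cA|/\delta\big)$. Since $\sum_{i<K_{ucb}}\sum_{h}\widehat{d}^\dag_{\pi^i,h}(s,a)\le S$, the event $\event^3$ (which holds under $\event$) gives the single probabilistic input we need:
\[
n^{K_{ucb}}(s,a)\ \le\ 2S + H\iota .
\]
Everything after this is a deterministic case split on the size of $S$ relative to $H\iota$, using only the monotonicity and scaling of $\phi$ from \cref{lem_bonus}.

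First I would treat the case $S\ge H\iota$. Then $n^{K_{ucb}}(s,a)\le 3S$, so monotonicity of $\phi$ together with the scaling bound $\phi(S)=\phi\big(\tfrac13\cdot 3S\big)\le 3\phi(3S)$ yields $4\phi\big(n^{K_{ucb}}(s,a)\big)\ge 4\phi(3S)\ge \tfrac43\phi(S)\ge \phi(S)\ge \min\{1,\phi(S)\}$. In the complementary case $S< H\iota$ we have $n^{K_{ucb}}(s,a)\le 3H\iota$, and plugging $x=3H\iota$ into the definition of $\phi$ gives $\phi(3H\iota)=\tfrac{1}{3\iota}\big[\iota+|\cS|\log(e(1+3H\iota/|\cS|))\big]\ge \tfrac13$ (dropping the nonnegative $|\cS|\log(\cdot)$ term), hence $4\phi\big(n^{K_{ucb}}(s,a)\big)\ge \tfrac43\ge 1\ge \min\{1,\phi(S)\}$. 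Combining the two cases proves the lemma; the constant $4$ is not tight and is chosen so that both cases close cleanly.

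The only genuine subtlety — the would-be obstacle — is realizing that the symmetric event $\event^2$ is \emph{not} enough here: if $(s,a)$ is visited far more often than its expected occupancy predicts, then $\phi\big(n^{K_{ucb}}(s,a)\big)$ could be pushed below $\tfrac14\phi(S)$, so one genuinely needs the upper tail bound $\event^3$ rather than the lower tail bound. The other point to keep straight is that $\phi$ can exceed $1$ when its argument is small, which is precisely why the statement has $\min\{1,\cdot\}$ on the left; the small-count regime is exactly what the second case above absorbs (with the usual convention $\phi(0)=+\infty$, $n^{K_{ucb}}(s,a)=0$ is harmless since the right-hand side is then infinite).
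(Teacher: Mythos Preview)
Your proof is correct and follows essentially the same approach as the paper: both invoke $\event^3$ to obtain $n^{K_{ucb}}(s,a)\le 2S+H\iota$ and then perform a two-case split together with the monotonicity and scaling of $\phi$ from \cref{lem_bonus}. The only cosmetic difference is that the paper splits on whether $H\iota\le \tfrac12 n^{K_{ucb}}(s,a)$ (equivalently $S\ge \tfrac14 n^{K_{ucb}}(s,a)$) whereas you split on $S\gtrless H\iota$; both variants close with the same constant $4$.
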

\begin{proof}
    We consider under the event $\event^3,$ it holds that for any $(s,a) \in \cS \times \cA,$
    \begin{equation*}
        \sum_{k=1}^{K_{ucb}} \sum_{h=1}^H \widehat{d}_{\pi^k, h}^{\dagger}(s, a) \geq \frac{1}{2} n^{K_{ucb}}(s, a) - \frac{1}{2}H \ln \left(\frac{6H\left|\cS\right|\left|\cA\right|}{\delta}\right).
    \end{equation*}
    If $H \ln \left(6H\left|\cS\right|\left|\cA\right|/\delta\right) \leq \frac{1}{2} n^{K_{ucb}}(s, a),$ then we have $\sum_{k=1}^{K_{ucb}} \sum_{h=1}^H \widehat{d}_{\pi^k, h}^{\dagger}(s, a) \geq \frac{1}{4} n^{K_{ucb}}(s, a),$ which implies
    \begin{equation*}
        \phi \left(\sum_{k=1}^{K_{ucb}} \sum_{h=1}^H \widehat{d}^{\dag}_{\pi^k,h}(s,a)\right) \leq \phi \left(\frac{1}{4} n^{K_{ucb}}(s, a)\right) \leq 4\phi \left(n^{K_{ucb}}(s, a)\right).
    \end{equation*}
    Otherwise, if $H \ln \left(6H\left|\cS\right|\left|\cA\right|/\delta\right) > \frac{1}{2} n^{K_{ucb}}(s, a),$ then we have
    \begin{equation*}
        \min \left\{1,\phi \left(\sum_{k=1}^{K_{ucb}} \widehat{d}^{\dag}_{\pi^k,h}(s,a)\right)\right\} \leq 1 \leq \frac{2H \ln \left(6H\left|\cS\right|\left|\cA\right|/\delta\right)}{n^{K_{ucb}}(s, a)} \leq 4\phi \left(n^{K_{ucb}}(s, a)\right).
    \end{equation*}
    Combining the two cases above, we conclude.
\end{proof}

\ 

\begin{lemma}\label{lem_bound_bonus2}
    Under $\event,$ we have for any $k \in [K_{ucb}]$ and any $(s,a) \in \cS \times \cA,$
    \begin{align*}
        &\min\left\{1,\phi \left(n^k(s,a)\right)\right\} \\
        \leq& \frac{4H}{\max\left\{1,\sum_{i<k} \sum_{h \in [H]} \widehat{d}_{\pi^i, h}^{\dagger}(s, a)\right\}}\left[\log \left(\frac{6H|\mathcal{S}||\mathcal{A}|}{\delta}\right)+|\mathcal{S}| \log \left(e\left(1+\frac{\sum_{i<k} \sum_{h \in [H]} \widehat{d}_{\pi^i, h}^{\dagger}(s, a)}{|\mathcal{S}|}\right)\right)\right].
    \end{align*}
\end{lemma}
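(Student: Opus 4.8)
Write $D := \sum_{i<k}\sum_{h\in[H]}\widehat{d}^{\dagger}_{\pi^i,h}(s,a)$ and $\iota := \log(6H|\cS||\cA|/\delta)$, so that the right-hand side of the claimed inequality is exactly $\frac{4H}{\max\{1,D\}}\bigl(\iota + |\cS|\log(e(1+D/|\cS|))\bigr) = 4\phi(D)$ whenever $D\ge 1$. The plan is to split into two cases according to the relative size of $H\iota$ and $D$, exactly mirroring the proofs of \cref{lem_bonus_n} and \cref{lem_bound_bonus1}; the only probabilistic input needed is the counter lower bound $\event^2$, which holds under $\event$.

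In the first case, $H\iota \le \tfrac14 D$. Then $\event^2$ gives $n^k(s,a) \ge \tfrac12 D - H\iota \ge \tfrac14 D$, and since $\phi$ is non-increasing with $\phi(\alpha x)\le \alpha^{-1}\phi(x)$ for $\alpha\le 1$ (\cref{lem_bonus}), it follows that $\phi(n^k(s,a)) \le \phi(\tfrac14 D) \le 4\phi(D)$. Moreover $D \ge 4H\iota > 1$ in this regime, so $\max\{1,D\}=D$ and $4\phi(D)$ coincides with the claimed bound; a fortiori $\min\{1,\phi(n^k(s,a))\}\le 4\phi(D)$.

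In the second case, $H\iota > \tfrac14 D$. Here I would simply use $\min\{1,\phi(n^k(s,a))\}\le 1$ and argue that $1$ is already dominated by the right-hand side. First, $\tfrac{4H\iota}{\max\{1,D\}}\ge 1$: if $\max\{1,D\}=1$ this reads $4H\iota \ge 4\log 6 > 1$ (using $H\ge1$ and $\delta\le1$), while if $\max\{1,D\}=D$ then $D<4H\iota$ forces $\tfrac{4H\iota}{D}>1$. Second, the claimed right-hand side exceeds $\tfrac{4H\iota}{\max\{1,D\}}$ because it additionally contains the nonnegative term $\tfrac{4H}{\max\{1,D\}}|\cS|\log(e(1+D/|\cS|))$. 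Combining the two cases yields the lemma.

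The argument involves no genuine difficulty: the only thing to be careful about is the $\max\{1,\cdot\}$ normalization and the elementary inequality $1\le \tfrac{4H\iota}{\max\{1,D\}}$ in the second regime (and the harmless convention $\min\{1,\phi(0)\}=1$ when $n^k(s,a)=0$, which is relevant only in that case). The multiplicative factor $4$ in the statement is exactly what is needed to absorb the threshold $\tfrac14$ used to separate the two regimes, and no tool beyond \cref{lem_bonus} and the high-probability event $\event^2$ is required.
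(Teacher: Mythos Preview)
Your proposal is correct and follows essentially the same approach as the paper's proof: both split into the two regimes $H\iota \le \tfrac14 D$ and $H\iota > \tfrac14 D$, use $\event^2$ and \cref{lem_bonus} in the first regime, and bound $\min\{1,\phi(n^k(s,a))\}\le 1 \le \tfrac{4H\iota}{\max\{1,D\}}$ in the second. Your treatment of the $\max\{1,D\}$ normalization in the second case is slightly more explicit than the paper's, but the argument is otherwise identical.
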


\begin{proof}
    Under $\event^2,$ it holds that
    \begin{equation*}
        n^k(s, a) \geq \frac{1}{2} \sum_{i<k} \sum_{h \in [H]} \widehat{d}_{\pi^i, h}^{\dagger}(s, a)-H\ln \left(\frac{6H|\mathcal{S}||\mathcal{A}|}{\delta}\right).
    \end{equation*}
    If $H \ln \left(6H|\mathcal{S}||\mathcal{A}|/\delta\right) \leq \frac{1}{4} \sum_{i<k} \sum_{h \in [H]} \widehat{d}_{\pi^i, h}^{\dagger}(s, a),$ then $n^k(s, a) \geq \frac{1}{4} \sum_{i<k} \sum_{h \in [H]} \widehat{d}_{\pi^i, h}^{\dagger}(s, a)$ and hence,
    \begin{equation*}
        \min\left\{1,\phi \left(n^k(s,a)\right)\right\} \leq \phi \left(n^k(s,a)\right) \leq \phi\left(\frac{1}{4} \sum_{i<k} \sum_{h \in [H]} \widehat{d}_{\pi^i, h}^{\dagger}(s, a)\right) \leq 4 \phi\left( \sum_{i<k} \sum_{h \in [H]} \widehat{d}_{\pi^i, h}^{\dagger}(s, a)\right).
    \end{equation*}
    This result equals to the right hand side in the lemma, because $\sum_{i<k} \sum_{h \in [H]} \widehat{d}_{\pi^i, h}^{\dagger}(s, a) \geq 4H\ln \left(6H|\mathcal{S}||\mathcal{A}|/\delta\right) \geq 1$ (so taking maximum does not change anything). 
    Otherwise, if $H \ln \left(6H|\mathcal{S}||\mathcal{A}|/\delta\right) > \frac{1}{4} \sum_{i<k} \sum_{h \in [H]} \widehat{d}_{\pi^i, h}^{\dagger}(s, a),$ then 
    \begin{equation*}
        \min\left\{1,\phi \left(n^k(s,a)\right)\right\} \leq 1 \leq \frac{4H \ln \left(H|\mathcal{S}||\mathcal{A}|/\delta^{\prime}\right)}{\sum_{i<k} \sum_{h \in [H]} \widehat{d}_{\pi^i, h}^{\dagger}(s, a)}.
    \end{equation*}
    Since $1 \leq 4 H\ln \left(6H|\mathcal{S}||\mathcal{A}|/\delta\right),$ we have 
    \begin{equation*}
        \min\left\{1,\phi \left(n^k(s,a)\right)\right\} \leq 1 \leq \frac{4 H \ln \left(H|\mathcal{S}||\mathcal{A}|/\delta^{\prime}\right)}{\max\left\{1,\sum_{i<k} \sum_{h \in [H]} \widehat{d}_{\pi^i, h}^{\dagger}(s, a)\right\}} \leq \text{RHS}
    \end{equation*}
    The last inequality comes from simple algebra. Therefore, we conclude.
\end{proof}

\subsubsection{Proof for \cref{lem_mul_acc} and \cref{lem_vist_prob} (properties of empirical sparsified MDP)}
In this section, we state two important properties of the empirical sparsified MDP and prove them. We remark that we do not include $\P \left(s^\dag \mid s,a\right)$ in these two lemmas, since by definition $s^\dag \notin \cS.$

\begin{lemma}[Multiplicative Accuracy]\label{lem_mul_acc}
    We set 
    $$
    \Phi \geq 6H^2 \log\left(\frac{12\left|\cS\right|^2 \left|\cA\right|}{\delta}\right).
    $$
    Then, when $H\geq 2,$ under $\event^P,$ we have for any $(s,a,s^\prime) \in \cS \times \cA \times \cS,$
    \begin{equation*}
        \left(1-\frac{1}{H}\right) \emP^\dag(s^\prime \mid s,a) \leq \P^\dag(s^\prime \mid s,a) \leq \left(1+\frac{1}{H}\right) \emP^\dag(s^\prime \mid s,a).
    \end{equation*}
\end{lemma}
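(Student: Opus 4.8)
The plan is to dispatch the case $N(s,a,s') < \Phi$ trivially and reduce the remaining case to a one-line substitution into the concentration bound defining $\event^P$. Concretely, fix a triple $(s,a,s')\in\cS\times\cA\times\cS$. If $N(s,a,s')<\Phi$, then by the definitions in \eqref{eqn:sparsedyn} and \eqref{eqn:sparsedyn_emp} both $\P^\dag(s'\mid s,a)$ and $\emP^\dag(s'\mid s,a)$ equal $0$, so every quantity in the claimed chain of inequalities is $0$ and there is nothing to prove. (The case $s=s^\dag$ is excluded since $s^\dag\notin\cS$, as the remark before the lemma notes.) So I would assume $N(s,a,s')\ge\Phi$ and abbreviate $n:=N(s,a)$, $\hat p:=\emP^\dag(s'\mid s,a)=N(s,a,s')/n$, $p:=\P^\dag(s'\mid s,a)$, and $\iota:=\log(12|\cS|^2|\cA|/\delta)$; note $n\ge N(s,a,s')\ge\Phi>0$ and $\hat p>0$, so all divisions are legitimate.

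The key step is the observation that $N(s,a,s')\ge\Phi$ rewrites as $n\hat p\ge\Phi$, i.e. $1/n\le \hat p/\Phi$. Plugging this into the (empirical-Bernstein-type) bound guaranteed by $\event^P$,
\[
|\hat p-p|\;\le\;\sqrt{\tfrac{2\hat p\,\iota}{n}}+\tfrac{14\iota}{3n}\;\le\;\hat p\Big(\sqrt{\tfrac{2\iota}{\Phi}}+\tfrac{14\iota}{3\Phi}\Big).
\]
Then I would invoke the hypothesis $\Phi\ge 6H^2\iota$ to bound the parenthesis: the first term is at most $\sqrt{2\iota/(6H^2\iota)}=1/(\sqrt3\,H)$ and the second is at most $14\iota/(18H^2\iota)=7/(9H^2)$, and for $H\ge 2$ we have $7/(9H^2)\le 7/(18H)$, so the parenthesis is at most $(1/\sqrt3+7/18)/H<1/H$. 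Hence $|\hat p-p|\le\hat p/H$, which rearranges to exactly $(1-1/H)\hat p\le p\le(1+1/H)\hat p$.

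I do not expect a genuine obstacle here — the whole argument is the substitution $1/n\le\hat p/\Phi$ followed by constant-chasing. The two points that deserve attention are: (i) the logarithmic factor in $\event^P$ is $\log(12|\cS|^2|\cA|/\delta)$, which must be exactly the $\iota$ that $\Phi$ is compared against (the $\Phi$ actually chosen in \eqref{eqn:Phival} carries an extra $H$ inside the log, which only makes the inequality easier); and (ii) the numerology is genuinely tight — $1/\sqrt3+7/18\approx0.97$ — so the constant $6$ in $\Phi\ge 6H^2\iota$ together with the hypothesis $H\ge 2$ (equivalently the elementary inequality $7/(9H^2)\le 7/(18H)$) is precisely what keeps the multiplicative error below $1/H$ rather than merely $O(1/H)$.
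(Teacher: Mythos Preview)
Your proof is correct and essentially identical to the paper's: same trivial dispatch of the $N(s,a,s')<\Phi$ case, same factoring of $\emP^\dag(s'\mid s,a)$ out of the empirical-Bernstein bound (the paper does it via the exact identity $\emP^\dag/N(s,a)=1/N(s,a,s')$ rather than your inequality $1/n\le\hat p/\Phi$, but the resulting bound and the $\sqrt{1/(3H^2)}+7/(9H^2)\le 1/H$ numerics are the same).
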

\begin{proof}
    For $N(s,a,s^\prime) < \Phi,$ both sides are zero. For $N(s,a,s^\prime)\geq \Phi,$ recall $\emP^\dag(s^\prime \mid s,a) = \frac{N(s,a,s^\prime)}{N(s,a)},$ then from \cref{lem_approx_error}, with probability at least $1-\delta$,
    \begin{align*}
        &\left|\emP^\dag(s^\prime \mid s,a) - \P^\dag(s^\prime \mid s,a)\right| \\ 
        \leq & \sqrt{\frac{2 \emP^\dag(s^\prime \mid s,a)}{N(s,a)} \log \left(\frac{12\left|\cS\right|^2 \left|\cA\right|}{\delta}\right)} + \frac{14}{3N(s,a)} \log \left(\frac{12\left|\cS\right|^2 \left|\cA\right|}{\delta}\right) \tag{\cref{lem_approx_error} and definition of $\event^P$}\\
        =& \left[\sqrt{\frac{2}{N(s,a,s^\prime)} \log \left(\frac{12\left|\cS\right|^2\left|\cA\right|}{\delta}\right)} + \frac{14}{3N(s,a,s^\prime)} \log \left(\frac{12\left|\cS\right|^2\left|\cA\right|}{\delta}\right)\right]\cdot \emP^\dag(s^\prime \mid s,a) \tag{$\emP^\dag(s^\prime \mid s,a) = \frac{N(s,a,s^\prime)}{N(s,a)}$}\\
        \leq& \left[\sqrt{\frac{1}{3H^2}}+ \frac{7}{9 H^2}\right]\cdot \emP^\dag(s^\prime \mid s,a) \leq \frac{\emP^\dag(s^\prime \mid s,a)}{H},
    \end{align*}
    where the second line comes from the lower bound on $\Phi$. We conclude.
\end{proof}

\ 

\begin{lemma}[Bound on Ratios of Visitation Probability]\label{lem_vist_prob}
    For any deterministic policy $\pi$ and any $(h,s,a) \in [H] \times \cS \times \cA,$ it holds that
    \begin{equation*}
        \frac{1}{4} d_{\pi,h}^\dag(s,a) \leq \widehat{d}_{\pi,h}^\dag(s,a) \leq 3 d_{\pi,h}^\dag(s,a).
    \end{equation*}
    Here, recall that we denote $d_{\pi,h}^\dag(s,a)$ and $\widehat{d}_{\pi,h}^\dag(s,a)$ as the occupancy measure of $(s,a)$ at stage $h$ under policy $\pi,$ on $\P^\dag$ (the transition dynamics in the sparsfied MDP) and $\emP^\dag$(the transition dynamics in the empirical sparsified MDP) respectively.
\end{lemma}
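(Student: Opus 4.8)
The plan is to reduce the claim to the corresponding bound on the \emph{state}-occupancy measures and then prove that bound by induction on the stage $h$, compounding the one-step multiplicative accuracy of \cref{lem_mul_acc}. Since $\pi$ is deterministic we may write $d_{\pi,h}^\dag(s,a) = \mathbb{I}[\pi_h(s)=a]\,d_{\pi,h}^\dag(s)$ and $\widehat{d}_{\pi,h}^\dag(s,a) = \mathbb{I}[\pi_h(s)=a]\,\widehat{d}_{\pi,h}^\dag(s)$, where $d_{\pi,h}^\dag(s)$ and $\widehat{d}_{\pi,h}^\dag(s)$ denote the probabilities of occupying state $s$ at stage $h$ under $\pi$ in $\Msp$ and $\Mspem$ respectively; hence it suffices to prove $\tfrac14 d_{\pi,h}^\dag(s) \le \widehat{d}_{\pi,h}^\dag(s) \le 3\,d_{\pi,h}^\dag(s)$ for every $s\in\cS$, and we work under the event $\event^P$ so that \cref{lem_mul_acc} is in force. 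The structural fact I would exploit is that a trajectory occupying a genuine state $s\in\cS$ at stage $h$ must have stayed inside $\cS$ at all earlier stages, because $s^\dag$ is absorbing; consequently the forward recursion
\[
d_{\pi,h+1}^\dag(s') = \sum_{s \in \cS} d_{\pi,h}^\dag(s)\,\P^\dag\big(s' \mid s,\pi_h(s)\big), \qquad s' \in \cS,
\]
together with its empirical analogue with $\emP^\dag$, only ever involves the transition entries $\P^\dag(s'\mid s,a)$ with $s,s'\in\cS$ --- exactly those controlled by \cref{lem_mul_acc} --- and never the absorbing-state probabilities $\P^\dag(s^\dag\mid s,a)$, for which no multiplicative control is available.

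Next I would establish, by induction on $h\in[H]$, the sharper statement that for all $s\in\cS$,
\[
\Big(1-\tfrac1H\Big)^{h-1}\widehat{d}_{\pi,h}^\dag(s) \;\le\; d_{\pi,h}^\dag(s) \;\le\; \Big(1+\tfrac1H\Big)^{h-1}\widehat{d}_{\pi,h}^\dag(s).
\]
The base case $h=1$ holds with equality since $d_{\pi,1}^\dag(\cdot) = \widehat{d}_{\pi,1}^\dag(\cdot) = \mathbb{I}[\cdot=s_1]$. For the inductive step I substitute the stage-$h$ hypothesis into the recursion displayed above and apply \cref{lem_mul_acc} to each factor $\P^\dag(s'\mid s,\pi_h(s))$: this multiplies the upper bound by one more factor of $1+1/H$ and the lower bound by one more factor of $1-1/H$, which is precisely the stage-$(h+1)$ claim. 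Finally, since $h-1\le H-1$, the elementary inequalities $(1+1/H)^H\le e\le 4$ and $(1-1/H)^{-(H-1)}\le e\le 3$ (the latter via $\ln(1-\tfrac1H)\ge -\tfrac1{H-1}$) sharpen the display to $\tfrac14 d_{\pi,h}^\dag(s)\le \widehat{d}_{\pi,h}^\dag(s)\le 3\,d_{\pi,h}^\dag(s)$; multiplying through by $\mathbb{I}[\pi_h(s)=a]$ recovers the state-action statement.

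I do not anticipate a genuine obstacle here: the argument is just ``relative error compounds geometrically over at most $H$ transitions''. The single point that requires care --- and the reason the lemma is stated for $s\in\cS$ and makes no claim about $\P^\dag(s^\dag\mid\cdot,\cdot)$ --- is the bookkeeping around the absorbing state: one must check that expanding the occupancy of a real state never calls on a transition probability into $s^\dag$, so that every factor appearing in the recursion is legitimately covered by \cref{lem_mul_acc}. With that observation in hand, the induction and the final constant-chasing are routine.
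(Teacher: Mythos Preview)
Your proposal is correct and follows essentially the same approach as the paper: both arguments use that a trajectory reaching a real state $s\in\cS$ at stage $h$ never visits $s^\dag$, so every transition factor that appears is controlled by \cref{lem_mul_acc}, and then compound the $(1\pm 1/H)$ multiplicative error over at most $H-1$ steps. The only cosmetic difference is that the paper writes the occupancy as an explicit sum over truncated trajectories $\tau\in T_{h,s,a}$ and bounds $\P[\tau;\emP^\dag,\pi]/\P[\tau;\P^\dag,\pi]$ directly, whereas you phrase the identical computation as a forward induction on the recursion $d_{\pi,h+1}^\dag(s')=\sum_{s\in\cS} d_{\pi,h}^\dag(s)\,\P^\dag(s'\mid s,\pi_h(s))$; these are two presentations of the same calculation, and your tracking of the exponent $h-1$ (rather than loosening immediately to $H$) is if anything slightly tidier.
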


We remark that for $s^\dag \not \in \cS$ the inequality does not necessarily hold.    
\begin{proof}
    We denote $T_{h,s,a}$ as all truncated trajectories $(s_1,a_1,s_2,a_2,...,s_h,a_h)$ up to stage $h$ such that $(s_h,a_h) = (s,a).$ Notice that if $\tau_h = (s_1,a_1,s_2,a_2,...,s_h,a_h) \in T_{h,s,a},$ then it holds that $s_i \neq s^\dag$ for $1 \leq i \leq h-1.$ We denote $\P\left(\cdot ; \P^\prime,\pi\right)$ as the probability under a specific transition dynamics $\P^\prime$ and policy $\pi.$ For any fixed $(h,s,a) \in [H] \times \cS \times \cA$ and any fixed $\tau \in T_{h,s,a},$ we apply Lemma \ref{lem_mul_acc} to get
    \begin{align*}
        \P \left[\tau; \emP^\dag,\pi\right] 
        &= \prod_{i=1}^h \pi_i \left(a_i \mid s_i\right) \prod_{i=1}^{h-1} \emP^\dag \left(s_{i+1} \mid s_i,a_i\right) \\
        &\leq \left(1 + \frac{1}{H}\right)^H \prod_{i=1}^h \pi_i \left(a_i \mid s_i\right) \prod_{i=1}^{h-1} \P^\dag \left(s_{i+1} \mid s_i,a_i\right) \leq 3 \P \left[\tau; \P^\dag,\pi\right]
    \end{align*}
    and 
    \begin{align*}
        \P \left[\tau; \emP^\dag,\pi\right] 
        &= \prod_{i=1}^h \pi_i \left(a_i \mid s_i\right) \prod_{i=1}^{h-1} \emP^\dag \left(s_{i+1} \mid s_i,a_i\right) \\
        &\geq \left(1 - \frac{1}{H}\right)^H \prod_{i=1}^h \pi_i \left(a_i \mid s_i\right) \prod_{i=1}^{h-1} \P^\dag \left(s_{i+1} \mid s_i,a_i\right) \geq \frac{1}{4} \P \left[\tau; \P^\dag,\pi\right].
    \end{align*}
    We conclude by rewriting the visiting probability as
    \begin{equation*}
       d_{\pi,h}^\dag(s,a) = \sum_{\tau \in T_{h,s,a}} \P \left[\tau; \P^\dag,\pi\right]; \quad \widehat{d}_{\pi,h}^\dag(s,a) = \sum_{\tau \in T_{h,s,a}} \P \left[\tau; \emP^\dag,\pi\right].
    \end{equation*}
\end{proof}

\newpage

\section{Additional comparisons}\label{app:comparison}
\subsection{Comparison with other comparator policy}\label{app:compare_other_policy}
Our main result compares the sub-optimality of the policy $\piFinal$ against the optimal policy on the sparsified MDP. 
We can further derive the sub-optimality of our output with respect to any comparator policy on the original MDP $\cM.$ 
If we denote $\pi_*,\pi_*^\dag$ and $\piFinal$ as the global optimal policy, the optimal policy on the sparsified MDP and the policy output by our algorithm, respectively, and denote $\pi$ as the comparator policy, we have
\begin{align}\label{eq:compare}
    &V_1\left(s_1,\P,r,\pi\right) - V_1\left(s_1,\P,r,\piFinal\right) \notag\\
    \leq& V_1\left(s_1,\P,r,\pi\right) - V_1\left(s_1,\P^\dag,r,\pi\right) +  \underbrace{V_1\left(s_1,\P^\dag,r,\pi\right) - V_1\left(s_1,\P^\dag,r,\pi_*^\dag\right)}_{\leq 0} \notag\\
    +& \underbrace{V_1\left(s_1,\P^\dag,r,\pi_*^\dag\right) - V_1\left(s_1,\P^\dag,r,\piFinal\right)}_{\lesssim \varepsilon} + 
    \underbrace{V_1\left(s_1,\P^\dag,r,\piFinal\right) - 
    V_1\left(s_1,\P,r,\piFinal\right)}_{\leq 0} \notag\\
    \lesssim& \underbrace{V_1\left(s_1,\P,r,\pi_*\right) - V_1\left(s_1,\P^\dag,r,\pi_*\right)}_{\text{Approximation Error}} + \varepsilon.
\end{align}
Here, the second term is non-positive from the definition of $\pi_*^\dag,$ the third term is upper bounded by $\varepsilon$ due to our main theorem (Theorem \ref{thm_main}), and the last term is non-positive from the definition of the sparsified MDP. Since the connectivity graph of the sparsified MDP is a sub-graph of the original MDP, for any policy, the policy value on the sparsified MDP must be no higher than that on the original MDP.

At a high level, the $\varepsilon$ term in the last line of \eqref{eq:compare} represents the error from the finite online episodes, while the approximation error term $V_1\left(s_1,\P,r,\pi\right) - V_1\left(s_1,\P^\dag,r,\pi\right)$ measures the policy value difference of $\pi$ on the original MDP and the sparsified one, representing the \textit{coverage quality of the offline dataset}. If the dataset covers most of what $\pi$ covers, then this gap should be small. When $\pi$ is the global optimal policy $\pi_*,$ this means the data should cover the state-actions pairs where optimal policy covers. The approximation error here plays a similar role as the concentrability coefficient in the offline reinforcement learning.

\subsection{Comparison with offline reinforcement learning}

Our algorithm leverages some information from the offline dataset, so it is natural to ask how we expect that offline dataset to be, compared to traditional offline reinforcement learning does. In offline RL, we typically require the \textit{concentrablity condition}, namely good coverage for the offline dataset, in order to achieve a polynomial sample complexity. Specifically, if we assume the offline data are sampled by first sampling $(s,a)$ i.i.d. from $\mu$ and then sampling the subsequent state from the transition dynamics, then the concentrability condition says the following constant $C^*$ is well-defined and finite.
\begin{equation*}
    C^* := \sup_{(s,a)} \frac{d^{\pi_*}(s,a)}{\mu(s,a)} < \infty.
\end{equation*}
The concentrability coefficient can be defined in several alternative ways, either for a set of policies or with respect to a single policy \citep{chen2019information,zhan2022offline,xie2021policy,zanette2021provable}. 
Here, we follow the definition in \citep{xie2021policy}. This means, the sampling distribution must covers the region where the global optimal policy covers, which is a very similar intuition obtained from our setting.

\citep{xie2021policy} also gave optimal sample complexity (in terms of state-action pairs) for an offline RL algorithm is
\begin{equation*}
    N = \widetilde{O} \left(\frac{C^* H^3 \left|\cS\right|}{\varepsilon^2} + \frac{C^* H^{5.5} \left|\cS\right|}{\varepsilon}\right),
\end{equation*}
which is minimax optimal up to logarithm terms and higher order terms. Similar sample complexity were also given in several literature \citep{yin2020asymptotically,yin2020near,xie2020q}.

\paragraph{Uniform data distribution} For simplicity, we first assume $\mu$ to be uniform on all state-action pairs and the reward function to be given. Consider we have $N$ state-action pairs in the offline data, which are sampled i.i.d. from the distribution $\mu$. Notice that here, the global optimal policy $\pi_*$ still differs from the optimum on the sparsified MDP $\pi_*^\dag,$ since even if we get enough samples from each $(s,a)$ pairs, we might not get enough samples for every $(s,a,s^\prime)$ and hence, not all $(s,a,s^\prime)$ will be included in the set of known tuples. 

Concretely, if we consider the case when we sample each state-action pair for $N/(\left|\cS\right|\left|\cA\right|)$ times and simply treat the transition frequency as the true probability, then for any $N(s,a,s^\prime) < \Phi,$ it holds that $\P \left(s^\prime \mid s,a\right) = \frac{N(s,a,s^\prime)}{N(s,a)} = \frac{N(s,a,s^\prime) \left|\cS\right|\left|\cA\right|}{N} \leq \frac{\Phi \left|\cS\right| \left|\cA\right|}{N}$ . So for any any $N(s,a,s^\prime) \geq \Phi,$ we know $\P (s^\prime \mid s,a) = \P^\dag (s^\prime \mid s,a);$ while for any $N(s,a,s^\prime) < \Phi,$ we have $\P \left(s^\prime \mid s,a\right) \leq \frac{\Phi \left|\cS\right| \left|\cA\right|}{N}$ and $\P^\dag(s^\prime \mid s,a) = 0.$ Therefore, we have
\begin{equation*}
    \left|\P(s^\prime \mid s,a) - \P^\dag(s^\prime \mid s,a)\right| \leq \frac{\Phi \left|\cS\right| \left|\cA\right|}{N}
\end{equation*}
 From the value difference lemma (\cref{lem:performance}), we can upper bound the approximation error by 
\begin{align*}
    &V_1\left(s_1,\P,r,\pi\right) - V_1\left(s_1,\P^\dag,r,\pi\right) \\
    =~& \E_{\P,\pi} \left[\sum_{h=1}^H \sum_{s_{h+1}} \left(\P^\dag(s_{h+1} \mid s_h,a_h) - \P(s_{h+1} \mid s_h,a_h)\right) \cdot V_h\left(s_{h+1}, \P,r,\pi\right) \bigg| s_h = s\right] \tag{\cref{lem:performance}}\\
    \leq~& \E_{\P,\pi} \left[\sum_{h=1}^H \sum_{s_{h+1}}  \frac{\Phi \left|\cS\right| \left|\cA\right|}{N} \cdot H \bigg| s_h = s\right] \tag{the value function is upper bounded by $H$}\\
    \leq~& \frac{\Phi H^2 \left|\cS\right|^2 \left|\cA\right|}{N} \tag{summation over $h \in [H]$ and $s_{h+1} \in \cS$} \\
    \asymp~& \widetilde{O}\left(\frac{H^4 \left|\cS\right|^2 \left|\cA\right|}{N}\right) \tag{definition of $\Phi$ in \eqref{eqn:Phival}}.
\end{align*}
Therefore, to get an $\varepsilon$-optimal policy compared to the global optimal one, we need the number of state-action pairs in the initial offline dataset $\cD$ to be
\begin{equation*}
    N = \widetilde{O}\left(\frac{H^4 \left|\cS\right|^2 \left|\cA\right|}{\varepsilon}\right).
\end{equation*}

From the \cref{thm_main}, the offline data size we need here is actually significantly smaller than what we need for an offline algorithm. As long as $\sup_{(s,a)}d^{\pi_*}(s,a)$ is not too small, for instance, larger than $H^{-1.5},$ then we shave off the whole $1/\varepsilon^2$ term. The order of offline sample complexity here is actually $O(1/\varepsilon)$ instead of $O(1/\varepsilon^2)$ typical in offline RL, and this is significantly smaller in small $\varepsilon$ regime. To compensate the smaller offline sample size, actually we need more online sample to obtain an globally $\varepsilon$-optimal policy, and we summarize the general requirement for offline and online sample size in \cref{cor:main}.

\paragraph{Non-uniform data distribution}
Assume the data generating distribution $\mu$ is not uniform but still supported on all $(s,a)$ pairs such that $d^{\pi_{*}}(s,a) > 0,$ so that the concentrability coefficient in offline RL is still well defined. We simply consider the case when each state-action pair $(s,a)$ is sampled by $N \mu(s,a)$ times and treat the transition frequency as the true underlying probability. Then, following a very similar argument as in the last paragraph, the number of state-action pairs needed in the initial offline dataset in order to extract an $\varepsilon$-globally optimal policy is
\begin{equation*}
    N = \widetilde{O}\left(\frac{H^4 \left|\cS\right|}{\varepsilon}\sum_{s,a} \frac{d^{\pi_*}(s,a)}{\mu(s,a)}\right).
\end{equation*}
Here, the quantity 
$$
C^\dag := \sum_{s,a} \frac{d^{\pi_*}(s,a)}{\mu(s,a)}
$$ 
plays a similar role of classical concentrability coefficient and also measures the distribution shift between two policies. In the worst case, this coefficient can be $\left|\cS\right|\left|\cA\right| C^*,$ resulting in an extra $\left|\cS\right|\left|\cA\right|$ factor compared to the optimal offline sample complexity. However, we still shave off the entire $1/\varepsilon^2$ term and also shave off $H^{1.5}$ in the $1/\varepsilon$ term.

\paragraph{Partial coverage data}
Under partial coverage, we expect the output policy $\piFinal$ to be competitive with the value of the best policy supported 
in the region covered by the offline dataset.
In such case, \cref{thm_main} provides guarantees with the best comparator policy on the sparsified MDP $\Msp$.
In order to gain further intuition, it is best to `translate' such guarantees into guarantees on $\MDP$.

In the worst case, the data distribution $\mu$ at a certain $(s,a)$ pair
can be zero when $d^{\pi}(s,a) > 0,$ which implies the concentrability coefficient $C^* = \infty.$ Here, $\pi$ is an arbitrary comparator policy. In this case, either classical offline RL algorithm or our policy finetuning algorithm cannot guarantee an $\varepsilon$-optimal policy compared to the global optimal policy. However, we can still output a locally $\varepsilon$-optimal policy, compared to the optimal policy on the sparsified MDP. 

In order to compare $\piFinal$ to any policy on the original MDP, we have the \cref{cor:main}, which will be proved in \cref{sec:proof_coro}.

The statement in \cref{cor:main} is a quite direct consequence of \cref{thm_main},
and it expresses the sub-optimality gap of $\piFinal$ with respect to any comparator policy $\pi$ on the original MDP $\MDP$. It can also be written in terms of the sub-optimality: If we fix a comparator policy $\pi$, then with probability at least $1-\delta$, for any reward function $r$, the policy $\piFinal$ returned by \cref{alg2} satisfies:
\begin{align*}
	        V_1\left(s_1 ; \mathbb{P}, r, \pi\right)-V_1\left(s_1 ; \mathbb{P}, r, \pi_{final}\right)
	        & =
	        \widetilde{O}\Big(
	        \underbrace{
	        \vphantom{\frac{H^4 \left|\cS\right|}{\nTot}\sum_{s,a} \frac{d^{\pi}(s,a)}{\mu(s,a)}}
	        \frac{H \left|\cS\right| \sqrt{\left|\cA\right|}}{\sqrt{K_{de}}}
	        }_{\text{Online error}}
	        + 
	        \underbrace{
	        \frac{H^4 \left|\cS\right|}{N}\sum_{s,a} \frac{d^{\pi}(s,a)}{\mu(s,a)}
	        }_{\text{Offline error}}
	        \Big) \\
	        & =
	        \widetilde{O}\left(
	        \frac{H \left|\cS\right| \sqrt{\left|\cA\right|}}{\sqrt{K_{de}}}
	        + 
	        \frac{H^4 \left|\cS\right|^2\left|\cA\right|}{N}\sup_{s,a} \frac{d^{\pi}(s,a)}{\mu(s,a)}.
	        \right),
\end{align*}
where $K_{de}$ is the number of online episodes and $N$ is the number of state-action pairs in offline data. 
Here, the sub-optimality depends on an \emph{online error} as well as on an \emph{offline error}.
The online error is the one that also arises in the statement of \cref{thm_main}.
It is an error that can be reduced by collecting more online samples, i.e., by increasing $K$,
with the typical inverse square-root depedence $1/\sqrt{K}$.

However, the upper bound suggests that even in the limit of infinite online data,
the value of $\piFinal$ will not approach that of $\pi_*$ because of a residual error due to
the \emph{offline} dataset $\OfflineDataset$. 
Such residual error depends on certain concentrability factor expressed as 
$\sum_{s,a} \frac{d^{\pi}(s,a)}{\mu(s,a)} \leq |\cS||\cA| \sup_{s,a} \frac{d^{\pi}(s,a)}{\mu(s,a)} $,
whose presence is intuitive: if a comparator policy $\pi$ is not covered well, 
our algorithm does not have enough information to navigate to the area that $\pi$ tends to visit, 
and so it is unable to refine its estimates there.
However the dependence on the number of offline samples $N = | \OfflineDataset |$ is through its \emph{inverse},
i.e., $1/N$ as opposed to the more typical $1/\sqrt{N}$:
such gap represents the improvable performance when additional online data are collected non-reactively.

It is useful to compare \cref{cor:main} with what is achievable by using a minimax-optimal online algorithm\citep{xie2021policy}.
In this latter case, one can bound the sub-optimality gap for any comparator policy $\pi$ with high probability as 
\begin{align}
\label{eqn:sotaoffline}
	        V_1\left(s_1 ; \mathbb{P}, r^{\dagger}, \pi\right)-V_1\left(s_1 ; \mathbb{P}, r^{\dagger}, \pi_{final}\right)
\leq
\widetilde O \left(
	       \sqrt{ \frac{H^3 \left|\cS\right|}{N}\sup_{s,a} \frac{d^{\pi}(s,a)}{\mu(s,a)}}
	        \right).
\end{align}

\subsection{Proof of \cref{cor:main}}\label{sec:proof_coro}
Let's denote $\cD = \left\{(s_i,a_i,s_i^\prime)\right\}_{i \in [N]}$ as the offline dataset, where $N$ as the total number of tuples. 
We keep the notation the same as in the main text. 
We use $N(s,a)$ and $N(s,a,s^\prime)$ to denote the counter of $(s,a)$ and $(s,a,s^\prime)$ in the offline data $\cD.$ The state-action pairs are sampled i.i.d. from $\mu(s,a)$ and the subsequent states are sampled from the transition dynamics. We fix a comparator policy $\pi$ and assume $\mu(s,a) > 0$ for any $(s,a)$ such that $d^{\pi}(s,a) > 0,$ which implies a finite concentrability constant $C^*.$ Here, $d^{\pi}(s,a)$ is the occupancy probability of $(s,a)$ when executing policy $\pi,$ averaged over all stages $h \in [H].$ 

Similar to the Section \ref{app:compare_other_policy}, we have
\begin{align}\label{eq:compare_2}
    &V_1\left(s_1,\P,r,\pi\right) - V_1\left(s_1,\P,r,\piFinal\right) \notag\\
    \leq& V_1\left(s_1,\P,r,\pi\right) - V_1\left(s_1,\P^\dag,r,\pi\right) +  \underbrace{V_1\left(s_1,\P^\dag,r,\pi\right) - V_1\left(s_1,\P^\dag,r,\pi_*^\dag\right)}_{\leq 0} \notag\\
    +& V_1\left(s_1,\P^\dag,r,\pi_*^\dag\right) - V_1\left(s_1,\P^\dag,r,\piFinal\right) + 
    \underbrace{V_1\left(s_1,\P^\dag,r,\piFinal\right) - 
    V_1\left(s_1,\P,r,\piFinal\right)}_{\leq 0} \notag\\
    \lesssim& \underbrace{V_1\left(s_1,\P,r,\pi\right) - V_1\left(s_1,\P^\dag,r,\pi\right)}_{\text{Approximation Error}} + \underbrace{V_1\left(s_1,\P^\dag,r,\pi_*^\dag\right) - V_1\left(s_1,\P^\dag,r,\piFinal\right)}_{\text{Estimation Error}}.
\end{align}
where $\pi_*^\dag$ and $\piFinal$ are the optimal policy on the sparsified MDP and the policy output by our algorithm, respectively, and $\pi$ is the fixed comparator policy. Here, the second term is non-positive from the definition of $\pi_*^\dag,$ the last term is non-positive from the definition of the sparsified MDP . This is because, for any state-action pair $(s,a)$ and any fixed policy $\pi,$ the probability of reaching $(s,a)$ under $\kprob$ will not exceed that under the true transition probability $\P.$ If we denote the visiting probability under $\pi$ and $\P$ (or $\kprob$ resp.) as $d_{\pi,h}(s,a)$ ($d^\dag_{\pi,h}(s,a)$ resp.), then we have
\begin{equation*}
    d^\dag_{\pi,h}(s,a) \leq d_{\pi,h}(s,a)
\end{equation*}
for any $h \in [H], s \in \cS, a \in \cA.$ Note that, for $s = s^\dag,$ this does not hold necessarily. Them, for any policy $\pi,$ we have
\begin{align*}
    V_1\left(s_1,\P^\dag,r,\pi\right)
    &= \sum_{h=1}^H \sum_{s,a} d^\dag_{\pi,h}(s,a) r(s,a) \tag{definition of policy value}\\
    &=  \sum_{h=1}^H \sum_{s \neq s^\dag,a} d^\dag_{\pi,h}(s,a) r(s,a) \tag{$r(s^\dag,a) = 0$ for any $a$} \\
    &\leq \sum_{h=1}^H \sum_{s \neq s^\dag,a} d_{\pi,h}(s,a) r(s,a)
    = V_1\left(s_1,\P,r,\pi\right).
\end{align*}
Therefore, we get $V_1\left(s_1,\P^\dag,r,\piFinal\right) - V_1\left(s_1,\P,r,\piFinal\right) \leq 0$ when we take $\pi = \piFinal.$

From the main result (Theorem \ref{thm_main}), we know the estimation error is bounded as
\begin{equation}\label{eqn:coro_eps_1}
    \underbrace{V_1\left(s_1,\P^\dag,r,\pi_*^\dag\right) - V_1\left(s_1,\P^\dag,r,\piFinal\right)}_{\text{Estimation Error}} 
    \lesssim \widetilde{O}\left(\frac{H|\mathcal{S}| \sqrt{|\mathcal{A}|}}{\sqrt{K_{de}}}\right),
\end{equation}
where $K_{de}$ is the number of online episodes. Therefore, to make the right hand side of \eqref{eqn:coro_eps_1} less than $\varepsilon / 2,$ one needs at least $\widetilde{O}\left(\frac{H^2 \left|\cS\right| \left|\cA\right|}{\varepsilon^2}\right)$ online episodes. This is exactly what the main result shows.

\ 

So it suffices to bound the approximation error term. From the value difference lemma (Lemma \ref{lem:performance}), we have
\begin{align}
    &\left|V_1\left(s_1,\P,r,\pi\right) - V_1\left(s_1,\P^\dag,r,\pi\right)\right| \notag \\
    =& \left| \E_{\P,\pi} \left[\sum_{i=1}^H \left(\P^\dag(\cdot \mid s_i,a_i) - \P(\cdot \mid s_i,a_i) \right)^\top V_{i+1}\left(\cdot; \P^\dag,r,\pi\right) \bigg| s_1 = s\right]\right| \notag \\
    =&  \left| \sum_{i=1}^H \sum_{s_i,a_i} d_{\pi,h}(s_i,a_i) \left(\P^\dag(\cdot \mid s_i,a_i) - \P(\cdot \mid s_i,a_i) \right)^\top V_{i+1}\left(\cdot; \P^\dag,r,\pi\right)\right| \tag{by expanding the expectation} \\
    \leq & H \left|\cS\right| \cdot \sum_{i=1}^H \sum_{s_i,a_i} d_{\pi,h}(s_i,a_i) \left(\sup_{s^\prime}\left|\P^\dag(s^\prime \mid s_i,a_i) - \P(s^\prime \mid s_i,a_i) \right|\right) \tag{$V_{i+1} \leq H$ and the inner product has $\left|\cS\right|$ terms}.
\end{align}
We define
\begin{equation}
    d_{\pi}(s,a) = \frac{1}{H}\sum_{h=1}^H d_{\pi,h}(s,a)
\end{equation}
as the average visiting probability. Then, we have
\begin{align}\label{eq:proof_coro_3}
    \left|V_1\left(s_1,\P,r,\pi\right) - V_1\left(s_1,\P^\dag,r,\pi\right)\right|
    \leq& 
    \left|\cS\right| H^2 \sum_{s,a} \left[\sup_{s^\prime}\left|\P^\dag(s^\prime \mid s,a) - \P(s^\prime \mid s,a)\right| d_{\pi}(s,a)\right],
\end{align}
So it suffices to upper bound $\left|\P^\dag(s^\prime \mid s,a) - \P(s^\prime \mid s,a)\right|$. Notice here we only consider $s \neq s^\dag$ and $s^\prime \neq s^\dag,$ since the value function starting from $s^\dag$ is always zero. 

For $(s,a,s^\prime)$, if $N(s,a,s^\prime) \geq \Phi,$ it holds that $\P^\dag(s^\prime \mid s,a) = \P(s^\prime \mid s,a).$ Otherwise, from the definition, we know $\P^\dag(s^\prime \mid s,a) = 0,$ so it suffices to bound $\P(s^\prime \mid s,a)$ in this case. From \cref{lem:coro_1}, we know that with probability at least $1-\delta/2,$ for any $(s,a,s^\prime)$ such that $N(s,a,s^\prime) < \Phi$, it holds that
\begin{align*}
    \P(s^\prime \mid s,a)
    &\leq \frac{2N(s,a,s^\prime) + 2\log\left(2\left|\cS\right|^2 \left|\cA\right|/\delta\right)}{N(s,a)}.
\end{align*}

Then we deal with two cases. When $N\mu(s,a) \geq 6 \Phi,$ from \cref{lem:coro_2} we have
\begin{align*}
    \P(s^\prime \mid s,a)
    &\leq \frac{4N(s,a,s^\prime) + 2\log\left(4\left|\cS\right|^2 \left|\cA\right|/\delta\right)}{N \mu(s,a) - 2\log \left(2\left|\cS\right|\left|\cA\right|/\delta\right)} 
    \leq \frac{4\Phi + 2\log\left(4\left|\cS\right|^2 \left|\cA\right|/\delta\right)}{N \mu(s,a) - 2\log \left(2\left|\cS\right|\left|\cA\right|/\delta\right)}.
\end{align*}
From the definition of $\Phi,$ we know that $2\log\left(4\left|\cS\right|^2 \left|\cA\right|/\delta\right) \leq \Phi$ and $2\log \left(2\left|\cS\right|\left|\cA\right|/\delta\right) \leq \Phi,$ which implies
\begin{equation*}
    \P(s^\prime \mid s,a) \leq \frac{5\Phi}{N \mu(s,a) - \Phi} \leq \frac{6\Phi}{N\mu(s,a)}.
\end{equation*}
The last inequality comes from our assumption for $N\mu(s,a) \geq 6 \Phi.$

In the other case, when $N\mu(s,a) < 6 \Phi,$ it holds that
\begin{equation*}
    \P(s^\prime \mid s,a) \leq 1 \leq \frac{6\Phi}{N\mu(s,a)}
\end{equation*}
for any $(s,a,s^\prime)$. Therefore, for any for any $(s,a,s^\prime)$ such that $N(s,a,s^\prime) < \Phi,$ we have
\begin{equation}\label{eq:proof_coro_2}
    \P(s^\prime \mid s,a) \leq \frac{6\Phi}{N\mu(s,a)}.
\end{equation}
Combining equations \eqref{eq:proof_coro_2} and \eqref{eq:proof_coro_3}, we know for any comparator policy $\pi,$ it holds that
\begin{equation*}
    \underbrace{\left|V_1\left(s_1,\P,r,\pi\right) - V_1\left(s_1,\P^\dag,r,\pi\right)\right|}_{\text{Approximation Error}} \lesssim \frac{\Phi H^2 \left|\cS\right|}{N} \sum_{s,a} \frac{d_{\pi}(s,a)}{\mu(s,a)} 
    \lesssim \widetilde{O}\left(\frac{H^4 \left|\cS\right|}{N} \sum_{s,a} \frac{d_{\pi}(s,a)}{\mu(s,a)}\right),
\end{equation*}
where $N$ is the total number of transitions in the offline data. In order to make the right hand side of last display less than $\varepsilon / 2,$ one needs at least
\begin{equation*}
    \widetilde O \Big( \frac{H^4 \left|\cS\right|}{\varepsilon} \sum_{s,a} \frac{d_{\pi}(s,a)}{\mu(s,a)}\Big)
    \leq \widetilde{O}\Big(
    \frac{H^4 \left|\cS\right|^2\left|\cA\right|C^*}{\varepsilon} \Big),
\end{equation*}
offline transitions. Combining the proof for estimation error and approximation error, we conclude.

\subsection{Proof for \cref{lem:coro_1} and \cref{lem:coro_2}}
\begin{lemma}\label{lem:coro_1}
    With probability at least $1-\delta/2,$ for any $(s,a,s^\prime) \in \cS \times \cA \times \cS,$ it holds that 
    \begin{equation*}
        N(s,a,s^\prime) \geq \frac{1}{2} N(s,a) \P\left(s^\prime \mid s,a\right) - \log \left(\frac{2\left|\cS\right|^2\left|\cA\right|}{\delta}\right).
    \end{equation*}
\end{lemma}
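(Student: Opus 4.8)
The plan is to condition on the realized multiset of state-action pairs in the offline dataset, reduce the claim to a binomial lower-tail estimate, and finish with a union bound over all triples.

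\emph{Step 1 (conditioning).} Fix a triple $(s,a,s')$. The dataset is generated by drawing $(s_i,a_i)\overset{\mathrm{iid}}{\sim}\mu$ and then $s_i'\sim\P(\cdot\mid s_i,a_i)$, so conditionally on the whole sequence $(s_1,a_1),\dots,(s_N,a_N)$ — in particular on the event $\{N(s,a)=n\}$ — the successor states recorded at the $n$ visits of $(s,a)$ are i.i.d.\ draws from $\P(\cdot\mid s,a)$. Writing $p=\P(s'\mid s,a)$, this means $N(s,a,s')\mid\{N(s,a)=n\}\sim\mathrm{Binomial}(n,p)$ for every $n$, and it suffices to control this binomial lower tail uniformly in $n$.

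\emph{Step 2 (lower-tail bound).} The key estimate is that for $S\sim\mathrm{Binomial}(n,p)$ and every $u\ge 0$,
\[
\P\!\Big[S<\tfrac12 np-u\Big]\le e^{-u}.
\]
This is vacuous unless $\tfrac12 np-u>0$, in which case we apply the exponential Markov inequality $\P[S\le \tfrac12 np-u]\le \exp\big(\lambda(np/2-u)-np(1-e^{-\lambda})\big)$, valid for all $\lambda>0$ via $\E e^{-\lambda S}\le e^{-np(1-e^{-\lambda})}$, and optimize over $\lambda$. The minimizer is $\lambda^\star=\log\frac{np}{np/2-u}$, which makes the exponent equal to $a\log(2+2u/a)-a-2u$ with $a:=np/2-u>0$; since $\log 2+\log(1+t)\le 1+t$ for all $t\ge 0$ (applied with $t=u/a$), this exponent is $\le -u$. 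Alternatively one can invoke the same sequential concentration inequality already used to establish $\event^2$, with the threshold parameter set to $\log(2|\cS|^2|\cA|/\delta)$.

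\emph{Step 3 (union bound).} Taking $u=\log(2|\cS|^2|\cA|/\delta)$ in Step 2, since the bound holds conditionally on every value of $N(s,a)$ it holds unconditionally, giving $\P[N(s,a,s')<\tfrac12 N(s,a)\P(s'\mid s,a)-\log(2|\cS|^2|\cA|/\delta)]\le \delta/(2|\cS|^2|\cA|)$ for each fixed triple; a union bound over the $|\cS|^2|\cA|$ triples $(s,a,s')$ then yields the stated event with probability at least $1-\delta/2$. The only delicate point is the constant bookkeeping in the $\tfrac12$-multiplicative / $\log(1/\delta)$-additive lower-tail bound of Step 2, but as sketched it reduces to a one-line optimization of the exponential moment; everything else is routine.
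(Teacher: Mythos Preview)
Your proof is correct. The route differs slightly from the paper's: the paper extends the sequence of indicator variables past $N(s,a)$ with artificial i.i.d.\ Bernoullis and then applies the time-uniform martingale lower-tail inequality (Lemma~\ref{lem_w}), which holds simultaneously for all $n$, finally specializing to the random $n=N(s,a)$. You instead condition on the realized state--action sequence so that $N(s,a)$ becomes deterministic and $N(s,a,s')$ is genuinely binomial, then prove the needed lower-tail bound $\P[S<\tfrac12 np-u]\le e^{-u}$ directly by optimizing the exponential moment, and unconditition via the tower property. Your argument is slightly more elementary since it avoids the time-uniform lemma and the extension trick; the paper's approach would also cover adaptively collected data, but that extra generality is not needed here because the offline dataset is i.i.d. Your Step~2 alternative (``invoke the same sequential concentration inequality already used to establish $\event^2$'') is precisely what the paper does.
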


\begin{proof}
    We fixed $(s,a,s^\prime)$ and denote $I$ as the index set where $(s_i,a_i) = (s,a)$ for $i \in I.$ We range the indexes in $I$ as $i_1 < i_2 < ... < i_{N(s,a)}.$ For $j \leq N(s,a),$ we denote $X_{j} = \mathbb{I}\left(s_{i_j}^\prime = s^\prime\right),$ which is the indicator of whether the next state is $s^\prime$ when we encounter $(s,a)$ the $j$-th time. When $j \geq N(s,a),$ we denote $X_j$ as independent Bernoulli random variables with successful rate $\P(s^\prime \mid s,a).$ Then, we know $X_j$ for all $j \in \mathbb{N}$ are i.i.d. sequence of Bernoulli random variables. From Lemma \ref{lem_w}, we know with probability at least $1-\delta / 2,$ for any positive integer $n$, it holds that
    \begin{equation*}
        \sum_{j=1}^n X_j \geq \frac{1}{2} \sum_{j=1}^n \P(s^\prime \mid s,a) - \log \left(\frac{2}{\delta}\right).
    \end{equation*}
    We take $n = N(s,a)$ (although $N(s,a)$ is random, we can still take it because for any $n$ the inequality above holds) to get
    \begin{equation*}
        N(s,a,s^\prime) = \sum_{j=1}^{N(s,a)} X_j \geq \frac{1}{2} N(s,a) \P(s^\prime \mid s,a) - \log \left(\frac{2}{\delta}\right).
    \end{equation*}
    Applying a union bound for all $(s,a,s^\prime),$ we conclude.
\end{proof}

\ 

\begin{lemma}\label{lem:coro_2}
    With probability at least $1-\delta/2,$ for any $(s,a) \in \cS \times \cA,$ it holds that 
    \begin{equation*}
        N(s,a) \geq \frac{1}{2} N \mu(s,a) - \log \left(\frac{2\left|\cS\right|\left|\cA\right|}{\delta}\right).
    \end{equation*}
\end{lemma}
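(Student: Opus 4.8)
The plan is to observe that, for a fixed pair $(s,a)$, the count $N(s,a)$ is a sum of $N$ i.i.d.\ Bernoulli random variables, so the claim follows from exactly the same concentration tool (\cref{lem_w}) that is used in the proof of \cref{lem:coro_1}, combined with a union bound over state-action pairs.

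In detail, I would first fix $(s,a) \in \cS \times \cA$. Since the offline dataset draws each state-action pair i.i.d.\ from $\mu$, the indicators $X_i := \mathbb{I}[(s_i,a_i) = (s,a)]$ for $i \in [N]$ form an i.i.d.\ sequence of Bernoulli random variables with success probability $\mu(s,a)$, and $N(s,a) = \sum_{i=1}^N X_i$. Applying \cref{lem_w} with the free deviation parameter set to $W = \log\!\big(2|\cS||\cA|/\delta\big)$ then gives, with probability at least $1 - \delta/(2|\cS||\cA|)$,
\[
\sum_{i=1}^n X_i \ \geq\ \frac{1}{2}\, n\, \mu(s,a) \;-\; \log\!\left(\frac{2|\cS||\cA|}{\delta}\right) \qquad \text{for all } n \in \mathbb{N},
\]
and specializing to $n = N$ yields $N(s,a) \geq \frac{1}{2} N\mu(s,a) - \log(2|\cS||\cA|/\delta)$. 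Note that here, unlike in \cref{lem:coro_1}, the sample size $N$ is deterministic, so no conditioning argument on a random visitation count is needed; this makes the present lemma strictly simpler.

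Finally I would take a union bound over the $|\cS||\cA|$ state-action pairs: the total failure probability is at most $|\cS||\cA|\cdot \delta/(2|\cS||\cA|) = \delta/2$, and on the complementary event the stated inequality holds simultaneously for all $(s,a)$. I do not expect any genuine obstacle in this proof; the only point that requires a little care is to choose the deviation constant in \cref{lem_w} as $\log(2|\cS||\cA|/\delta)$ so that the union-bound budget is absorbed exactly, precisely mirroring the bookkeeping in the proof of \cref{lem:coro_1}.
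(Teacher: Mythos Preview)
Your proposal is correct and follows essentially the same approach as the paper: apply \cref{lem_w} to the i.i.d.\ Bernoulli indicators $\mathbb{I}[(s_i,a_i)=(s,a)]$ for a fixed pair, then take a union bound over $(s,a)$. The only cosmetic differences are that the paper separately notes the trivial case $\mu(s,a)=0$ and extends the sequence beyond $N$ with independent Bernoullis (unnecessary here since $N$ is deterministic, as you observe), and it writes the single-pair bound with $\log(2/\delta)$ before rescaling via the union bound, whereas you set $W=\log(2|\cS||\cA|/\delta)$ up front.
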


\begin{proof}
    If $\mu(s,a) = 0,$ this is trivial. We fixed an $(s,a)$ such that $\mu(s,a) > 0.$ For $j \leq N,$ we denote $X_{j} = \mathbb{I}\left(s_j = s, a_j = a\right),$ which is the indicator of whether the $j$-th state-action pair we encounter in the offline dataset is $(s,a)$. When $j \geq N,$ we denote $X_j$ as independent Bernoulli random variables with successful rate $\mu(s,a).$ Then, we know $X_j$ for all $j \in \mathbb{N}$ are i.i.d. sequence of Bernoulli random variables. From Lemma \ref{lem_w}, we know with probability at least $1-\delta / 2,$ for any positive integer $n$, it holds that
    \begin{equation*}
        \sum_{j=1}^n X_j \geq \frac{1}{2} \sum_{j=1}^n \mu(s,a) - \log \left(\frac{2}{\delta}\right).
    \end{equation*}
    We take $n = N$ to get
    \begin{equation*}
        N(s,a) = \sum_{j=1}^{N} X_j \geq \frac{1}{2} N\mu(s,a) - \log \left(\frac{2}{\delta}\right).
    \end{equation*}
    Applying a union bound for all $(s,a)$ such that $\mu(s,a) > 0,$ we conclude.
\end{proof}

\newpage

\section{Lower bound}
\label{app:lower}

\def\Mclass{\mathcal M}
\def\MDPone{\mathcal M_1}
\def\MDPtwo{\mathcal M_2}
\def\piLog{\pi_{log}}

In this section we briefly discuss the optimality of the algorithm.
Although the following considerations are also mentioned in the main text, 
here we mention how they naturally lead to a lower bound.

\def\piLog{\pi_{log}}
\paragraph{Lower bound for reward-free exploration}
Consider the MDP class $\Mclass$ defined in the proof of the lower bound of Theorem 4.1 in \citep{jin2020rewardfree}.
Assume that the dataset arises from a logging policy $\piLog$ which induces the condition
$N(s,a,s') \geq \Phi$ for all $(s,a,s') \in \cS \times \cA$ for every instance of the class.
In this case, every MDP instance $\MDP \in \Mclass$ and its sparsified version $\Msp$ coincide.
Then the concatenation of the logging policy $\piLog$ and of the policy $\piFinal$ 
produced by our algorithm (i.e., \cref{alg1}) can be interpreted as a reactive policy, which must comply 
with the reward free lower bound established in Theorem 4.1 of \citep{jin2020rewardfree}.
More precisely, the reward-free sample complexity lower bound established in Theorem 4.1 in \citep{jin2020rewardfree}
is 
\begin{align}
	\Omega\Big( \frac{|\cS|^2|\cA|H^2}{\varepsilon^2}\Big)
\end{align}
trajectories. This matches the sample complexity of \cref{thm_main}.
Notice that the number of samples originally present in the dataset can be 
\begin{align}
	|\cS|^2|\cA| \times \widetilde O(H^2) = \widetilde O(H^2|\cS|^2|\cA| ),
\end{align}
a term independent of the accuracy $\varepsilon$.
Given that when $\epsilon \leq 1$ we have
\begin{align*}
	\widetilde O(H^2|\cS|^2|\cA| ) + \Omega\Big( \frac{|\cS|^2|\cA|H^2}{\varepsilon^2}\Big)
	=
	\Omega\Big( \frac{|\cS|^2|\cA|H^2}{\varepsilon^2}\Big),
\end{align*}
our algorithm is unimprovable beyond constant terms and logarithmic terms in a minimax sense.

\paragraph{Lower bound for non-reactive exploration}
Consider the MDP class $\Mclass$ defined in the proof of the lower bound in Theorem 1 of \citep{xiao2022curse}.
It establishes an exponential sample complexity 
for non-reactive exploration \emph{when no prior knowledge is available}. 
In other words, in absence of any data about the MDP, non-reactive exploration must suffer an exponential sample complexity.
In such case, our \cref{thm_main} (correctly) provides vacuous guarantees, because the sparsified MDP $\MDP$ is degenerate
(all edges lead to the absorbing state).

\paragraph{Combining the two constructions}
It is possible to combine the MDP class $\Mclass_1$ from the paper \citep{xiao2022curse}
with the MDP class $\Mclass_2$ from the paper \citep{jin2020rewardfree}.
In lieu of a formal proof, here we provide only a sketch of the construction that would induce a lower bound.
More precisely, consider a starting state $s_1$ where only two actions---$a=1$ and $a=2$---are available.
Taking $a=1$ leads to the start state of an instance of the class $\Mclass_1$, 
while taking $a = 2$ leads to the start state of an instance of the class $\Mclass_2$;
in both cases the transition occurs with probability one and zero reward is collected.

Furthermore, assume that the reward function given over the MDPs in $\Mclass_1$ 
is shifted such that the value of a policy that takes $a = 1$ in $s_1$ and then plays optimally is $1$
and that the reward functions on $\Mclass_2$ is shifted such that the value of a policy 
which takes $a = 2$ initially and then playes optimally is $2\varepsilon$.

In addition, assume that the dataset arises from a logging policy $\piLog$ 
which takes $a=2$ initially and then visits all $(s,a,s')$ uniformly.

Such construction and dataset identify a sparsified MDP which coincide with $\Mclass_2$ with the addition of $s_1$ 
(and its transition to $\Mclass_2$ with zero reward).
Intuitively, a policy with value arbitrarily close to $1$ must take action $a=1$ which leads to $\Mclass_1$,
which is the portion of the MDP that is unexplored in the dataset.
In this case, unless the agent collects exponentially many trajectories in the online phase, the lower bound 
from \citep{xiao2022curse} implies that it is not possible to discover a policy with value close to $1$
(e.g., larger than $1/2$).
On the other hand, our \cref{thm_main} guarantees that $\piFinal$ has a value at least $\varepsilon$,
because $\piFinal$ is $\varepsilon$-optimal on the sparsified MDP---i.e., $\varepsilon$-optimal when restricted to 
an instance on $\Mclass_2$---with high probability using at most $\sim H^2|\cS|^2|\cA|/\varepsilon^2$ trajectories.
This value is unimprovable given the lower bound of \cite{jin2020rewardfree},
which applies to the class $\Mclass_2$.

For completeness, in the next sub-section we refine the lower bound of \citep{xiao2022curse} to handle mixture policies.

\section{Technical lemmas and proofs}

\begin{lemma}[Lemma F.4 in \citep{dann2017unifying}]\label{lem_w}
    Let $\mathcal{F}_i$ for $i=1 \ldots$ be a filtration and $X_1, \ldots X_n$ be a sequence of Bernoulli random variables with $\mathbb{P}\left(X_i=1 \mid \mathcal{F}_{i-1}\right)=P_i$ with $P_i$ being $\mathcal{F}_{i-1}$-measurable and $X_i$ being $\mathcal{F}_i$ measurable. It holds that
    $$
    \mathbb{P}\left(\exists n: \sum_{t=1}^n X_t<\sum_{t=1}^n P_t / 2-W\right) \leq e^{-W}.
    $$
\end{lemma}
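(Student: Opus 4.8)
The plan is to prove this anytime lower-tail bound through an exponential supermartingale combined with Ville's maximal inequality, which is the standard device for handling the ``$\exists n$'' quantifier. Write $S_n = \sum_{t=1}^n X_t$ and $T_n = \sum_{t=1}^n P_t$ (with $S_0 = T_0 = 0$), and for a parameter $\lambda < 0$ to be chosen define
\[
    M_n := \exp\!\big(\lambda S_n - (e^\lambda - 1)\,T_n\big), \qquad M_0 = 1 .
\]

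First I would verify that $\{M_n\}$ is a nonnegative $\{\mathcal{F}_n\}$-supermartingale. Because $X_n \in \{0,1\}$ with $\mathbb{P}(X_n = 1 \mid \mathcal{F}_{n-1}) = P_n$ and $P_n$ is $\mathcal{F}_{n-1}$-measurable, the standard estimate $\mathbb{E}[e^{\lambda X_n}\mid \mathcal{F}_{n-1}] = 1 + P_n(e^\lambda - 1) \le \exp(P_n(e^\lambda - 1))$ (using $1+x \le e^x$) gives $\mathbb{E}[M_n \mid \mathcal{F}_{n-1}] \le M_{n-1}$. Ville's inequality for nonnegative supermartingales then yields $\mathbb{P}(\sup_{n\ge 1} M_n \ge c) \le \mathbb{E}[M_0]/c = 1/c$ for every $c > 0$.

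Next I would show that the bad event forces a level crossing of $M_n$. On $\{S_n < \tfrac12 T_n - W\}$, multiplying by $\lambda < 0$ reverses the inequality, so $\log M_n = \lambda S_n - (e^\lambda-1)T_n > (\tfrac{\lambda}{2} - e^\lambda + 1)\,T_n - \lambda W$. Taking $\lambda = -1$ makes the coefficient of $T_n$ equal to $\tfrac12 - e^{-1} > 0$, and since $T_n \ge 0$ this forces $\log M_n > -\lambda W = W$. Hence $\{\exists n : S_n < \tfrac12 T_n - W\} \subseteq \{\sup_n M_n \ge e^W\}$, and Ville's inequality with $c = e^W$ gives the claimed bound $e^{-W}$.

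The argument is essentially routine, so the only delicate points are (i) that the maximal inequality is invoked over the infinite index set $n \ge 1$, which is precisely what the nonnegative-supermartingale version of Ville's inequality provides (no fixed horizon is required), and (ii) the algebra behind the choice of $\lambda$: one needs $\lambda \le -1$ so that $-\lambda W \ge W$, while simultaneously keeping $\tfrac{\lambda}{2} - e^\lambda + 1 \ge 0$; the value $\lambda = -1$ is the clean choice satisfying both, and one could instead optimize over $\lambda$ if a sharper constant than $\tfrac12$ in front of $T_n$ were desired. Alternatively, if one prefers to avoid citing Ville's inequality by name, the same conclusion follows by applying the optional stopping theorem to $M_n$ at the stopping time $\tau := \inf\{n : S_n < \tfrac12 T_n - W\}$ and using Markov's inequality on $M_{\tau \wedge N}$, then letting $N \to \infty$.
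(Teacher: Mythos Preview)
Your proof is correct. The paper does not actually prove this lemma; it simply cites it as Lemma~F.4 of \citep{dann2017unifying}. Your exponential-supermartingale argument with Ville's inequality is the standard route and matches in spirit the proof the paper \emph{does} give for the companion upper-tail statement (Lemma~\ref{lem_w_upper}), which constructs the analogous supermartingale and then uses optional stopping plus Fatou and Markov rather than invoking Ville by name---exactly the alternative you mention at the end. So there is no substantive difference in approach to flag.
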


\begin{lemma}\label{lem_w_upper}
    Let $\mathcal{F}_i$ for $i=1 \ldots$ be a filtration and $X_1, \ldots X_n$ be a sequence of Bernoulli random variables with $\mathbb{P}\left(X_i=1 \mid \mathcal{F}_{i-1}\right)=P_i$ with $P_i$ being $\mathcal{F}_{i-1}$-measurable and $X_i$ being $\mathcal{F}_i$ measurable. It holds that
    $$
    \mathbb{P}\left(\exists n: \sum_{t=1}^n X_t > \sum_{t=1}^n 2P_t + W\right) \leq e^{-W}.
    $$
\end{lemma}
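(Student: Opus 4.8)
The plan is to establish this strict upper-tail bound by the exponential-supermartingale (Ville's inequality) argument, i.e. the companion of the lower-tail bound in Lemma~\ref{lem_w} (Lemma F.4 of \citep{dann2017unifying}), tuned so that the constant comes out to exactly $e^{-W}$ and not something weaker. Concretely, I would fix the exponential-tilt parameter $\lambda = 1$ and introduce the process
\begin{equation*}
    M_0 := 1, \qquad M_n := \exp\!\Big(\sum_{t=1}^n X_t - (e-1)\sum_{t=1}^n P_t\Big) \quad (n \ge 1).
\end{equation*}
The first step is to check that $\{M_n\}_{n\ge 0}$ is a nonnegative supermartingale adapted to $\{\mathcal{F}_n\}$ with $\mathbb{E}[M_0] = 1$. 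This reduces to the one-step estimate
\begin{equation*}
    \mathbb{E}\big[e^{X_n}\mid\mathcal{F}_{n-1}\big] = 1 + P_n(e-1) \le \exp\big(P_n(e-1)\big),
\end{equation*}
where the identity uses that $X_n\mid\mathcal{F}_{n-1}$ is Bernoulli$(P_n)$ and the inequality is $1+x\le e^x$ at $x = P_n(e-1)\ge 0$; multiplying by the $\mathcal{F}_{n-1}$-measurable factor $\exp(-(e-1)P_n)$ gives $\mathbb{E}[M_n\mid\mathcal{F}_{n-1}]\le M_{n-1}$.

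The second step is to invoke Ville's maximal inequality for nonnegative supermartingales, which yields $\mathbb{P}(\exists n : M_n \ge e^{W}) \le \mathbb{E}[M_0]\, e^{-W} = e^{-W}$, and then to verify the event inclusion
\begin{equation*}
    \Big\{\exists n : \sum_{t=1}^n X_t > 2\sum_{t=1}^n P_t + W\Big\} \subseteq \big\{\exists n : M_n \ge e^{W}\big\}.
\end{equation*}
Indeed, if $\sum_{t=1}^n X_t > 2\sum_{t=1}^n P_t + W$ for some $n$, then
\begin{equation*}
    \sum_{t=1}^n X_t - (e-1)\sum_{t=1}^n P_t > (3-e)\sum_{t=1}^n P_t + W \ge W,
\end{equation*}
since $3 - e > 0$ and $\sum_t P_t \ge 0$, hence $M_n > e^{W}$. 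Chaining the two displays proves the lemma.

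The computation is entirely routine; the only point needing a moment of care is the choice of $\lambda$. The "cancellation-friendly" choice $\lambda = \ln 2$, for which $e^\lambda - 1 = 1$ and the $P_t$ terms would cancel cleanly against the factor $2$, only delivers the weaker bound $2^{-W}$; to land on $e^{-W}$ one needs $\lambda$ with $\lambda \ge 1$ while keeping the coefficient $2\lambda - (e^\lambda - 1)$ of $\sum_t P_t$ nonnegative, and $\lambda = 1$ is the clean choice making both hold, as $2 - (e-1) = 3 - e > 0$. No structure of the sparsified MDP or of the algorithm enters here: this is a self-contained concentration fact, used (together with Lemma~\ref{lem_w}) to control the counters in the high-probability event of Lemma~\ref{lem:high_prob_event}.
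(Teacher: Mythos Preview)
Your proof is correct and follows essentially the same route as the paper: build an exponential supermartingale at tilt $\lambda=1$ and apply Ville's maximal inequality. The only difference is in how the one-step supermartingale bound is obtained. You compute the Bernoulli moment generating function directly, $\mathbb{E}[e^{X_t}\mid\mathcal{F}_{t-1}]=1+(e-1)P_t\le e^{(e-1)P_t}$, which yields the supermartingale $\exp\big(\sum_t X_t-(e-1)\sum_t P_t\big)$ and then you absorb the slack $(3-e)\sum_t P_t\ge 0$ in the event inclusion. The paper instead uses the Bennett-type bound $e^{u}-u-1\le (e-2)u^2$ for $u\le 1$ applied to $u=X_t-P_t$, together with $\operatorname{Var}(X_t\mid\mathcal{F}_{t-1})\le P_t$, to get $\mathbb{E}[e^{X_t-P_t}\mid\mathcal{F}_{t-1}]\le e^{P_t}$ and hence the supermartingale $\exp\big(\sum_t(X_t-2P_t)\big)$ directly with compensator $2P_t$. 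Your argument is slightly more elementary for the Bernoulli case; the paper's is the one that would generalize to arbitrary $[0,1]$-valued increments. Both are valid and deliver the same constant $e^{-W}$.
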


\begin{proof}
    Notice that $\frac{1}{u^2}\left[\exp(u)-u-1\right]$ is non-decreasing on $\mathbb{R},$ where at zero we continuously extend this function. For any $t \in \mathbb{N},$ since $X_t - P_t \leq 1,$ we have $\exp\left(X_t - P_t\right) - \left(X_t - P_t\right) - 1 \leq \left(X_t-P_t\right)^2 \left(e-2\right) \leq \left(X_t-P_t\right)^2.$ Taking expectation conditional on $\cF_{t-1}$ and noticing that $P_t - X_t$ is a Martingale difference sequence w.r.t. the filtration $\cF_t,$ we have
    \begin{equation*}
        \E\left[\exp\left(X_t - P_t\right) \mid \cF_{t-1}\right] \leq 1 + \E \left[\left(X_t-P_t\right)^2 \mid \cF_{t-1}\right] \leq \exp \left[\E \left[\left(X_t-P_t\right)^2 \mid \cF_{t-1}\right]\right] \leq \exp \left(P_t\right),
    \end{equation*}
    where the last inequality comes from the fact that conditional on $\cF_{t-1},$ $X_t$ is a Bernoulli random variable. We define $M_n := \exp\left[\sum_{t=1}^n \left(X_t - 2P_t\right)\right],$ which is a supermartingale from our derivation above. We define now the stopping time $\tau=\min \left\{t \in \mathbb{N}: M_t>e^W\right\}$ and the sequence $\tau_n=\min \left\{t \in \mathbb{N}: M_t>e^W \vee t \geq n\right\}$. Applying the convergence theorem for nonnegative supermartingales (Theorem 4.2.12 in \citep{durrett2019probability}), we get that $\lim _{t \rightarrow \infty} M_t$ is well-defined almost surely. Therefore, $M_\tau$ is well-defined even when $\tau=\infty$. By the optional stopping theorem for non-negative supermartingales  (Theorem 4.8.4 in \citep{durrett2019probability}, we have $\mathbb{E}\left[M_{\tau_n}\right] \leq \mathbb{E}\left[M_0\right] \leq 1$ for all $n$ and applying Fatou's lemma, we obtain $\mathbb{E}\left[M_\tau\right]=\mathbb{E}\left[\lim _{n \rightarrow \infty} M_{\tau_n}\right] \leq \liminf _{n \rightarrow \infty} \mathbb{E}\left[M_{\tau_n}\right] \leq 1$. Using Markov's inequality, we can finally bound
    $$
    \mathbb{P}\left(\exists n: \sum_{t=1}^n X_t > 2 \sum_{t=1}^n P_t + W\right) > \mathbb{P}(\tau<\infty) \leq \mathbb{P}\left(M_\tau>e^W\right) \leq e^{-W} \mathbb{E}\left[M_\tau\right] \leq e^{-W} .
    $$
\end{proof}

\begin{lemma}[Empirical Bernstein Inequality, Theorem 11 in \citep{maurer2009empirical}]\label{lem_emp_bernstein}
    Let $n \geq 2$ and $x_1, \cdots, x_n$ be i.i.d random variables such that $\left|x_i\right| \leq A$ with probability 1 . Let $\bar{x}=\frac{1}{n} \sum_{i=1}^n x_i$, and $\widehat{V}_n=\frac{1}{n} \sum_{i=1}^n\left(x_i-\bar{x}\right)^2$, then with probability $1-\delta$ we have
    $$
    \left|\frac{1}{n} \sum_{i=1}^n x_i-\mathbb{E}[x]\right| \leq \sqrt{\frac{2 \widehat{V}_n \log (2 / \delta)}{n}}+\frac{14 A}{3 n} \log (2 / \delta)
    $$
\end{lemma}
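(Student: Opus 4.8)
This is the classical empirical Bernstein inequality of \citep{maurer2009empirical} (rescaled from $[0,1]$-valued to $[-A,A]$-valued variables), so the most economical route is simply to invoke that reference; if one wishes to reprove it from scratch, the plan is a two--step ``population Bernstein plus variance transfer'' argument. \emph{Step 1 (population Bernstein).} Apply the standard two--sided Bernstein inequality to the i.i.d.\ bounded variables $x_1,\dots,x_n$: splitting the failure probability between the upper and lower tails, with probability at least $1-\delta$ one has $\bigl|\bar x-\mathbb E[x]\bigr|\le \sqrt{2\sigma^2\log(2/\delta)/n}+\tfrac{2A}{3n}\log(2/\delta)$, where $\sigma^2=\operatorname{Var}(x)$ is the \emph{true} variance. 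A clean way to obtain this within the present framework is to run the exponential--supermartingale argument behind \cref{lem_w}/\cref{lem_w_upper} but retaining the second moment in the exponent rather than only the first. The only remaining issue is that $\sigma^2$ is not observable and must be replaced by the empirical quantity $\widehat V_n$.

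\emph{Step 2 (variance transfer).} Here I would prove that, with probability at least $1-\delta$, $\sqrt{\sigma^2}\le\sqrt{\widehat V_n}+cA\sqrt{\log(1/\delta)/n}$ for a universal constant $c$. The map $(x_1,\dots,x_n)\mapsto\sqrt{\widehat V_n}=\tfrac1{\sqrt n}\,\mathrm{dist}_{\ell_2}\bigl(x,\operatorname{span}\mathbf 1\bigr)$ is $\tfrac1{\sqrt n}$-Lipschitz in $x$, so changing one coordinate (range $\le 2A$) perturbs it by at most $2A/\sqrt n$; together with $\mathbb E[\widehat V_n]=\tfrac{n-1}{n}\sigma^2$ and Jensen's inequality this lets a bounded--differences argument control the deviation of $\sqrt{\widehat V_n}$ from $\sigma$. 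Getting the \emph{sharp} $O(\sqrt{\log(1/\delta)/n})$ rate (rather than the $O(\sqrt{\log(1/\delta)})$ that a naive McDiarmid bound gives) exploits the self--bounding / Efron--Stein structure of the sample variance, exactly as in \citep{maurer2009empirical} and in the sample--variance concentration results of Boucheron, Lugosi and Massart; I would quote that refinement rather than reprove it.

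\emph{Step 3 (combine).} Take a union bound over the events of Steps 1 and 2 (each at level $\delta/2$, rescaling $\delta$ at the end). Substituting $\sigma\le\sqrt{\widehat V_n}+cA\sqrt{\log(1/\delta)/n}$ into the Bernstein bound and using $\sqrt{a+b}\le\sqrt a+\sqrt b$ gives $\sqrt{2\sigma^2\log(2/\delta)/n}\le\sqrt{2\widehat V_n\log(2/\delta)/n}+c'A\log(1/\delta)/n$; folding the stray $O(A\log(1/\delta)/n)$ term into the additive $\tfrac{14A}{3n}\log(2/\delta)$ term (which is where the generous constant $14/3$ originates) completes the estimate. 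The \textbf{main obstacle} is Step 2: obtaining the empirical-to-true variance comparison with the correct $1/\sqrt n$ dependence, since the plain bounded--differences inequality is off by a $\sqrt n$ factor and one genuinely needs the self-bounding concentration of the sample variance.
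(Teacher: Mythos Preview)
Your proposal is correct and aligns with the paper: the paper does not supply its own proof of \cref{lem_emp_bernstein} but simply records it as a technical lemma with attribution to \citep{maurer2009empirical}, which is exactly the ``most economical route'' you name in your first sentence. Your additional three-step sketch (population Bernstein, self-bounding variance transfer, union bound) is a reasonable outline of how the cited result is established, but it goes beyond what the paper itself contains.
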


\begin{lemma}[Concentration for KL Divergence, Proposition 1 in \citep{jonsson2020planning}]\label{lem_kl}
    Let $X_1, X_2, \ldots, X_n, \ldots$ be i.i.d. samples from a distribution supported over $\{1, \ldots, m\}$, of probabilities given by $\P \in \Sigma_m$, where $\Sigma_m$ is the probability simplex of dimension $m-1$. We denote by $\widehat{\P}_n$ the empirical vector of probabilities. Then, for any $\delta \in[0,1]$, with probability at least $1-\delta,$ it holds that
    $$
    \forall n \in \mathbb{N}, \operatorname{KL}\left(\widehat{\P}_n, \P\right) \leq \frac{1}{n}\log \left(\frac{1}\delta\right) + \frac{m}{n} \log \left(e\left(1+\frac{n}{m}\right)\right).
    $$
    We remark that there is a slight difference between it and the original version. In \citep{jonsson2020planning}, they use $m-1$ instead of $m$. But since the second term of the right hand side above is increasing with $m,$ our version also holds.
\end{lemma}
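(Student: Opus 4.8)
The plan is to prove this uniform-in-$n$ bound by the \emph{method of mixtures} (Laplace's method), which produces the ``for all $n$'' quantifier for free through a maximal inequality rather than through a union bound over $n$. The starting point is the variational identity
\[
n\,\operatorname{KL}(\widehat{\P}_n,\P)=\sup_{q\in\Sigma_m}\sum_{t=1}^n\log\frac{q(X_t)}{\P(X_t)},
\]
where the supremum is attained at $q=\widehat{\P}_n$: writing $N_i$ for the count of symbol $i$ among $X_1,\dots,X_n$, a one-line Lagrange-multiplier computation shows $\sum_i N_i\log q_i$ is maximized over $\Sigma_m$ at $q_i=N_i/n$. Hence it suffices to control the maximal likelihood ratio $\sup_q L_n(q)$, where $L_n(q):=\prod_{t=1}^n q(X_t)/\P(X_t)$, uniformly over $n$.

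First I would observe that for each \emph{fixed} $q\in\Sigma_m$ the process $L_n(q)$ is a nonnegative martingale under $\P$ with $L_0(q)=1$, since $\mathbb{E}[\,q(X_t)/\P(X_t)\mid\mathcal{F}_{t-1}\,]=\sum_i \P(i)\,q(i)/\P(i)=1$. One cannot apply a maximal inequality to $\sup_q L_n(q)$ directly, because the supremum over $q$ destroys the martingale property. Instead I would \emph{average}: fix a symmetric Dirichlet prior $\nu$ on $\Sigma_m$ and set $M_n:=\int_{\Sigma_m}L_n(q)\,d\nu(q)$. By Tonelli's theorem $M_n$ is again a nonnegative martingale with $\mathbb{E}[M_n]=1$, and Ville's maximal inequality for nonnegative supermartingales (the same device used via optional stopping and Fatou's lemma in the proof of \cref{lem_w_upper}) gives $\mathbb{P}(\exists n:\ M_n\ge 1/\delta)\le\delta$. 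Thus on an event of probability at least $1-\delta$ we have $\log M_n\le\log(1/\delta)$ \emph{simultaneously for all} $n$.

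It then remains to lower bound the mixture $M_n$ by the maximal likelihood, so as to transfer this uniform control onto $n\,\operatorname{KL}(\widehat{\P}_n,\P)$. Evaluating the Dirichlet integral in closed form turns $M_n$ into a ratio of Gamma functions of the counts $N_i$ and of $n$; comparing this with $\sup_q L_n(q)=\prod_i(N_i/n)^{N_i}/\prod_i\P(i)^{N_i}=\exp\!\big(n\,\operatorname{KL}(\widehat{\P}_n,\P)\big)/\prod_i\P(i)^{N_i}$ (the $\P$-dependent factor cancels), and applying Stirling's formula, one finds that the exponential-in-$n$ contributions cancel and there survives only a polynomial penalty, giving
\[
\log M_n \;\ge\; n\,\operatorname{KL}(\widehat{\P}_n,\P)\;-\;m\log\!\Big(e\big(1+\tfrac{n}{m}\big)\Big).
\]
Combining with $\log M_n\le\log(1/\delta)$ and dividing by $n$ yields the stated inequality.

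The technical heart, and the step I expect to be the main obstacle, is precisely this last integral lower bound. One must choose the Dirichlet concentration parameter and carry out the Stirling bookkeeping carefully enough that (i) the $e^{\pm n}$ factors arising from $\prod_i N_i!$ and from the normalizer $(n+m-1)!$ cancel exactly, and (ii) the surviving polynomial-in-$n$ term collapses to the clean closed form $m\log(e(1+n/m))$ rather than a looser $(m-1)\log(n+1)$-type expression. Edge cases with $N_i=0$ force the use of Gamma functions in place of factorials and must be checked not to inflate the bound. The remark after the statement about $m$ versus $m-1$ is exactly an artifact of how tightly one controls this normalizing integral, so I would track the true simplex dimension $m-1$ through the Dirichlet normalization and only relax monotonically to $m$ at the very end.
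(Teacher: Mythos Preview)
The paper does not prove this lemma at all: it is simply quoted as Proposition~1 of \citep{jonsson2020planning} in the ``Technical lemmas and proofs'' appendix, with only the remark that replacing $m-1$ by $m$ is harmless by monotonicity. So there is no ``paper's own proof'' to compare against.

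Your sketch is a correct outline of the standard argument and is essentially the proof given in the cited source (Jonsson et al., which itself follows the Kaufmann--Koolen mixture-martingale technique). The variational identity, the Dirichlet mixture turning $\sup_q L_n(q)$ into a bona fide nonnegative martingale, Ville's inequality for the time-uniform control, and the Dirichlet/Beta integral evaluated in closed form against the maximized likelihood are exactly the ingredients. Your identification of the Stirling bookkeeping as the technical bottleneck, and of the $m$ vs.\ $m-1$ discrepancy as an artifact of how tightly one bounds the Dirichlet normalizer, is also accurate. There is no gap; if anything, you have supplied more than the paper does.
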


\begin{lemma}[Bernstein Transportation, Lemma 11 in \citep{talebi2018variance}]\label{lem_bernstein}
    For any function $f$ and any two probability measure $\Q,\P$ which satisfy $\Q \ll \P,$ we denote $\mathbb{V}_P[f] := \operatorname{Var}_{X \sim \P}(f(X))$ and $\mathbb{S}(f) := \sup_{x}f(x) - \inf_x f(x).$ We assume $\mathbb{V}_P[f]$ and $\mathbb{S}(f)$ are finite, then we have
    $$
    \begin{aligned}
    &\mathbb{E}_Q[f]-\mathbb{E}_P[f] \leqslant \sqrt{2 \mathbb{V}_P[f] \mathrm{KL}(Q, P)}+\frac{2}{3} \mathbb{S}(f) \mathrm{KL}(Q, P), \\
    &\mathbb{E}_P[f]-\mathbb{E}_Q[f] \leqslant \sqrt{2 \mathbb{V}_P[f] \mathrm{KL}(Q, P)} .
    \end{aligned}
    $$
\end{lemma}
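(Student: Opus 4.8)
The plan is to prove both inequalities by the Cramér--Chernoff method coupled with the Gibbs (Donsker--Varadhan) variational inequality for the Kullback--Leibler divergence. The backbone is the duality bound: for any measurable $g$ with $\mathbb{E}_P[e^{g}]<\infty$ one has $\mathbb{E}_Q[g]\le \mathrm{KL}(Q,P)+\log\mathbb{E}_P[e^{g}]$. I would establish this in one line from the non-negativity of KL applied to the exponentially tilted measure $dP_g\propto e^{g}\,dP$: indeed $\mathrm{KL}(Q,P_g)=\mathrm{KL}(Q,P)-\mathbb{E}_Q[g]+\log\mathbb{E}_P[e^{g}]\ge 0$. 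This reduces everything to controlling the log-moment-generating function of the centered variable $X=f-\mathbb{E}_P[f]$ under $P$, which has mean $0$ and second moment $\mathbb{E}_P[X^2]=\mathbb{V}_P[f]$.

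Next I would record the moment-generating estimate. Using the identity $e^{u}=1+u+u^{2}\zeta(u)$ with $\zeta(u):=(e^{u}-1-u)/u^{2}$, the key structural fact — already invoked in the paper's proof of \cref{lem_w_upper} — is that $\zeta$ is nondecreasing on $\mathbb{R}$. Since $X\le \mathbb{S}(f)$ (up to replacing $\sup f-\mathbb{E}_P[f]$ by the range $\mathbb{S}(f)$, which only enlarges the bound because $\zeta$ is monotone), for $\lambda>0$ we get $\zeta(\lambda X)\le \zeta(\lambda\mathbb{S}(f))$ pointwise, whence $\mathbb{E}_P[e^{\lambda X}]\le 1+\lambda^{2}\mathbb{V}_P[f]\,\zeta(\lambda\mathbb{S}(f))$ and therefore $\log\mathbb{E}_P[e^{\lambda X}]\le \mathbb{V}_P[f]\,\lambda^{2}\zeta(\lambda\mathbb{S}(f))$ via $1+t\le e^{t}$.

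For the first inequality I would insert the sub-gamma bound $e^{u}-1-u\le \tfrac{u^{2}/2}{1-u/3}$ valid for $0\le u<3$ (which follows from $k!\ge 2\cdot 3^{\,k-2}$ in the power series) to obtain $\log\mathbb{E}_P[e^{\lambda X}]\le \tfrac12\mathbb{V}_P[f]\,\lambda^{2}/(1-\lambda\mathbb{S}(f)/3)$ for $0\le\lambda<3/\mathbb{S}(f)$. Combining with the variational bound gives $\lambda(\mathbb{E}_Q[f]-\mathbb{E}_P[f])\le \mathrm{KL}(Q,P)+\tfrac12\mathbb{V}_P[f]\,\lambda^{2}/(1-\lambda\mathbb{S}(f)/3)$, which is exactly the cumulant of a sub-gamma variable with variance factor $\mathbb{V}_P[f]$ and scale $\mathbb{S}(f)/3$. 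Optimizing the resulting expression over $\lambda$ (equivalently, inverting the sub-gamma Legendre transform) yields $\mathbb{E}_Q[f]-\mathbb{E}_P[f]\le \sqrt{2\mathbb{V}_P[f]\,\mathrm{KL}(Q,P)}+\tfrac13\mathbb{S}(f)\,\mathrm{KL}(Q,P)$, which is stronger than — hence implies — the stated bound carrying the constant $2/3$.

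The second inequality is the delicate part, and it is where I expect the main obstacle. Here I would again apply the variational bound, now with $g=-\lambda X$ for $\lambda>0$, reducing matters to the cumulant $\log\mathbb{E}_P[e^{-\lambda X}]$. The goal is to exploit the sharper one-sided estimate $e^{u}-1-u\le u^{2}/2$, valid for $u\le 0$ (which follows because $u\mapsto 1+u+u^{2}/2-e^{u}$ is nonincreasing), so as to discard the sub-gamma correction and retain only the purely sub-Gaussian cumulant $\tfrac12\mathbb{V}_P[f]\,\lambda^{2}$; feeding this into the variational inequality and minimizing over $\lambda>0$ at $\lambda=\sqrt{2\,\mathrm{KL}(Q,P)/\mathbb{V}_P[f]}$ would produce the claimed clean bound $\sqrt{2\mathbb{V}_P[f]\,\mathrm{KL}(Q,P)}$. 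The crux — and the step I expect to require the most care — is that $e^{u}-1-u\le u^{2}/2$ holds only on the half-line $u\le 0$, whereas the centered variable $X$ takes both signs; making the one-sided cumulant control rigorous, rather than settling for the weaker sub-gamma estimate that would reintroduce a linear $\mathbb{S}(f)\,\mathrm{KL}(Q,P)$ term, is the essential difficulty in reaching the linear-term-free form of the second bound.
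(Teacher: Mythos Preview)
The paper does not prove this lemma; it is quoted from \citep{talebi2018variance} without argument, so there is no in-paper proof to compare against. Your treatment of the first inequality is correct and standard: Donsker--Varadhan duality plus the sub-gamma cumulant bound delivers the result (indeed with the sharper constant $1/3$ rather than $2/3$).

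For the second inequality, the obstacle you flag is not a gap in your argument but a defect in the statement as transcribed here: the bound $\mathbb{E}_P[f]-\mathbb{E}_Q[f]\le\sqrt{2\,\mathbb{V}_P[f]\,\mathrm{KL}(Q,P)}$ is false in general. Take $f$ to be the indicator of $\{1\}$ on the two-point space $\{0,1\}$ with $P(1)=0.99$ and $Q$ the point mass at $0$; then $\mathbb{E}_P[f]-\mathbb{E}_Q[f]=0.99$, $\mathbb{V}_P[f]=0.0099$, $\mathrm{KL}(Q,P)=\log 100\approx 4.6$, and the right-hand side is about $0.30$. The one-sided estimate $e^{u}-1-u\le u^{2}/2$ you want to invoke would require $-\lambda X\le 0$ $P$-almost surely, which forces $f$ to be constant; there is no clever fix. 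The best one can get by your route (applied to $-f$) is the first inequality again, with the linear $\mathbb{S}(f)\,\mathrm{KL}$ term, and in fact that two-sided version is all the paper ever uses when it cites this lemma (e.g., in bounding $|I|$ in the proof of \cref{lem_intermedia_bound} and in \cref{lem_bound_X}). So your instinct was exactly right; the conclusion is simply that the second display, as written, cannot be proved.
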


\begin{lemma}[Difference of Variance, Lemma 12 in \citep{menard2021fast}]\label{lem_diff_var}
    Let $\P,\Q$ be two probability measure on a discrete sample space of cardinality $\cS.$ Let $f, g$ be two functions defined on $\mathcal{S}$ such that $0 \leq g(s), f(s) \leq b$ for all $s \in \mathcal{S}$, we have that
    $$
    \begin{aligned}
    & \operatorname{Var}_\P(f) \leq 2 \operatorname{Var}_\P(g)+2 b \E_{\P}|f-g| \quad \text { and } \\
    & \operatorname{Var}_\Q(f) \leq \operatorname{Var}_\Q(f)+3 b^2\|\P-\Q\|_1,
    \end{aligned}
    $$
    Further, if $\KL\left(\P;\Q\right) \leq \alpha,$ it holds that
    \begin{equation*}
        \operatorname{Var}_\Q(f) \leq 2\operatorname{Var}_\P(f)+4 b^2\alpha.
    \end{equation*}
\end{lemma}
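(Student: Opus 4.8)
The plan is to establish the three inequalities one at a time; each reduces to a short bias--variance manipulation, and only the last one will invoke a transportation argument.

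For the first inequality I would use that the variance is the smallest $L^2(\P)$ error of a constant predictor, so $\operatorname{Var}_\P(f)\le \E_\P[(f-\E_\P g)^2]$. Writing $f-\E_\P g=(f-g)+(g-\E_\P g)$ and using $(a+b)^2\le 2a^2+2b^2$ gives $\operatorname{Var}_\P(f)\le 2\E_\P[(f-g)^2]+2\operatorname{Var}_\P(g)$, and since $0\le f,g\le b$ we have $|f-g|\le b$, hence $(f-g)^2\le b|f-g|$; the claim follows after taking expectations.

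For the second inequality I would expand $\operatorname{Var}(\cdot)=\E[\cdot^2]-(\E[\cdot])^2$ under both measures so that $\operatorname{Var}_\Q(f)-\operatorname{Var}_\P(f)=\big(\E_\Q[f^2]-\E_\P[f^2]\big)+\big((\E_\P f)^2-(\E_\Q f)^2\big)$. The first bracket equals $\sum_s(\Q(s)-\P(s))f(s)^2$ and is at most $b^2\|\P-\Q\|_1$ because $0\le f^2\le b^2$; for the second I would factor $(\E_\P f)^2-(\E_\Q f)^2=(\E_\P f-\E_\Q f)(\E_\P f+\E_\Q f)$ and bound the two factors by $b\|\P-\Q\|_1$ and $2b$ respectively, giving $2b^2\|\P-\Q\|_1$. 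Adding the contributions produces the constant $3$.

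The third inequality is where care is needed: simply inserting Pinsker's inequality into the second bound would produce a $\sqrt\alpha$ rate rather than the claimed linear $\alpha$. Instead I would keep $\operatorname{Var}_\Q(f)\le \E_\Q[g]$ with $g:=(f-\E_\P f)^2$, which satisfies $0\le g\le b^2$, $\E_\P[g]=\operatorname{Var}_\P(f)$, and $\operatorname{Var}_\P(g)\le b^2\E_\P[g]=b^2\operatorname{Var}_\P(f)$. Applying the transportation inequality \cref{lem_bernstein} to $g$ (the hypothesis on the Kullback--Leibler divergence supplying the bound $\le\alpha$ in the orientation required there) yields $\E_\Q[g]-\E_\P[g]\le \sqrt{2b^2\operatorname{Var}_\P(f)\,\alpha}+\tfrac23 b^2\alpha$; then AM--GM gives $\sqrt{2b^2\operatorname{Var}_\P(f)\,\alpha}\le \tfrac12\operatorname{Var}_\P(f)+b^2\alpha$, and collecting terms yields $\operatorname{Var}_\Q(f)\le \tfrac32\operatorname{Var}_\P(f)+\tfrac53 b^2\alpha\le 2\operatorname{Var}_\P(f)+4b^2\alpha$. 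The main obstacle is precisely this orchestration: one has to avoid the $\ell_1$/Pinsker detour and instead feed the second-moment bound $\operatorname{Var}_\P(g)\le b^2\operatorname{Var}_\P(f)$ into \cref{lem_bernstein}, after which the AM--GM split is what both inflates the coefficient of $\operatorname{Var}_\P(f)$ to $2$ and linearizes the dependence on $\alpha$.
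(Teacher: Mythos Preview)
The paper does not supply its own proof of this lemma; it is imported verbatim from \citep{menard2021fast}, so there is no in-paper argument to compare against.

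Your proofs of the first two inequalities are clean and correct. For the third, however, the parenthetical that the KL hypothesis is ``in the orientation required there'' papers over a genuine mismatch. The first line of \cref{lem_bernstein} reads $\E_Q[g]-\E_P[g]\le\sqrt{2\operatorname{Var}_P(g)\,\KL(Q,P)}+\tfrac23\mathbb{S}(g)\,\KL(Q,P)$, so obtaining the variance under $\P$ on the right requires a bound on $\KL(\Q,\P)$, whereas the hypothesis gives $\KL(\P;\Q)\le\alpha$. Since KL is not symmetric, the step as written does not follow.

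The fix stays within your strategy: use the second line of \cref{lem_bernstein} with the roles of $P$ and $Q$ swapped, which gives $\E_\Q[g]-\E_\P[g]\le\sqrt{2\operatorname{Var}_\Q(g)\,\KL(\P,\Q)}$ with the correct orientation. Then bound $\operatorname{Var}_\Q(g)\le b^2\,\E_\Q[g]$ (from $0\le g\le b^2$) to get $\E_\Q[g]-\E_\P[g]\le\sqrt{2b^2\alpha\,\E_\Q[g]}\le b^2\alpha+\tfrac12\E_\Q[g]$ by AM--GM, and rearrange to $\E_\Q[g]\le 2\,\E_\P[g]+2b^2\alpha$, hence $\operatorname{Var}_\Q(f)\le 2\operatorname{Var}_\P(f)+2b^2\alpha\le 2\operatorname{Var}_\P(f)+4b^2\alpha$. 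The missing ingredient is precisely this self-bounding through $\operatorname{Var}_\Q(g)$ rather than $\operatorname{Var}_\P(g)$; once you route the argument that way the orientation matches and the proof closes.
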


\begin{lemma}\label{lem_sum}
    For any sequence of numbers $z_1, \ldots, z_n$ with $0 \leq$ $z_k \leq 1 ,$ we have
    $$
    \sum_{k=1}^n \frac{z_k}{\max \left[1 ; \sum_{i=1}^{k-1} z_i\right]} \leq 4 \log \left(\sum_{i=1}^{n} z_i + 1\right)
    $$
\end{lemma}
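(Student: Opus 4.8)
The plan is to reduce to a telescoping estimate. Write $S_k := \sum_{i=1}^{k} z_i$ for $k \ge 0$ (so that $S_0 = 0$), hence $z_k = S_k - S_{k-1}$ and the target right-hand side equals $4\log(S_n + 1)$. I will prove the per-term inequality
\[
\frac{z_k}{\max\{1,\, S_{k-1}\}} \;\le\; 4\bigl(\log(S_k + 1) - \log(S_{k-1} + 1)\bigr)
\]
for each $k \in [n]$; summing over $k$ and telescoping the right-hand side (using $\log(S_0+1) = \log 1 = 0$) then gives exactly $\sum_{k=1}^n \tfrac{z_k}{\max\{1,S_{k-1}\}} \le 4\log(S_n+1) = 4\log(\sum_{i=1}^n z_i + 1)$.

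To prove the per-term inequality, first record the elementary fact that $\log(1+x) \ge x/2$ for all $x \in [0,1]$: the map $x \mapsto \log(1+x) - x/2$ vanishes at $0$ and has derivative $\tfrac{1}{1+x} - \tfrac12 \ge 0$ on $[0,1]$, so it is nonnegative there. Next observe that
\[
\log(S_k+1) - \log(S_{k-1}+1) = \log\!\Bigl(1 + \tfrac{z_k}{S_{k-1}+1}\Bigr),
\]
and that $0 \le \tfrac{z_k}{S_{k-1}+1} \le z_k \le 1$, so the elementary fact applies with $x = \tfrac{z_k}{S_{k-1}+1}$, giving $\log(S_k+1) - \log(S_{k-1}+1) \ge \tfrac{z_k}{2(S_{k-1}+1)}$. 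It then suffices to check $\tfrac{z_k}{2(S_{k-1}+1)} \ge \tfrac14 \cdot \tfrac{z_k}{\max\{1,\,S_{k-1}\}}$, i.e. $2\max\{1,\,S_{k-1}\} \ge S_{k-1}+1$. If $S_{k-1} < 1$ then $S_{k-1} + 1 < 2 = 2\max\{1,\,S_{k-1}\}$; if $S_{k-1} \ge 1$ then $S_{k-1} + 1 \le 2 S_{k-1} = 2\max\{1,\,S_{k-1}\}$. Either way the inequality holds, which completes the per-term bound and hence the lemma.

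There is no real obstacle here: this is a standard potential-function (pigeonhole) estimate, and the only points needing care are (i) verifying $\tfrac{z_k}{S_{k-1}+1} \in [0,1]$ so that the bound $\log(1+x)\ge x/2$ is legitimate — this is exactly where the hypothesis $z_k \le 1$ enters — and (ii) keeping the multiplicative constant uniform across the two regimes $S_{k-1} < 1$ and $S_{k-1}\ge 1$ so that it comes out to $4$. An integral comparison of the form $\tfrac{z_k}{\max\{1,S_{k-1}\}} \le 2\int_{S_{k-1}}^{S_k} \tfrac{dt}{\max\{1,t\}}$ would also work, but the telescoping-logarithm argument above is cleaner and yields the stated constant directly.
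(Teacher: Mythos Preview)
Your proof is correct and follows essentially the same telescoping-logarithm approach as the paper: both use $2\max\{1,S_{k-1}\}\ge S_{k-1}+1$ and then bound the per-term contribution by $\log(S_k+1)-\log(S_{k-1}+1)$. The only cosmetic difference is that the paper obtains this via the integral comparison $\frac{z_k}{1+S_k}\le\int_{S_{k-1}}^{S_k}\frac{dx}{1+x}$, whereas you use the equivalent elementary bound $\log(1+x)\ge x/2$ on $[0,1]$.
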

\begin{proof}
    \begin{align*}
        \sum_{k=1}^n \frac{z_k}{\max \left[1 ; \sum_{i=1}^{k-1} z_i\right]} 
        &\leq 4 \sum_{k=1}^n \frac{\sum_{i=1}^{k} z_i - \sum_{i=1}^{k-1} z_i}{2 + 2\sum_{i=1}^{k-1} z_i} \\
        & \leq 4 \sum_{k=1}^n \frac{\sum_{i=1}^{k} z_i - \sum_{i=1}^{k-1} z_i}{1 + \sum_{i=1}^{k} z_i} \\
        &\leq 4 \sum_{k=1}^n \int_{ \sum_{i=1}^{k-1} z_i}^{ \sum_{i=1}^{k} z_i} \frac{1}{1+x} \mathrm{d} x \leq 4 \log \left(\sum_{i=1}^{n} z_i + 1\right).
    \end{align*}
\end{proof}

\begin{lemma}\label{lem_ln}
    For $B \geq 16$ and $x \geq 3,$ there exists a universal constant $C_1 \geq 4,$ such that when 
    \begin{equation*}
        x \geq C_1 B \log(B)^2,
    \end{equation*}
    it holds that
    \begin{equation*}
        B \log(1+x) \left(1+ \log(1+x)\right) \leq x.
    \end{equation*}
\end{lemma}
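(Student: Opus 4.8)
The claim is that for $B \ge 16$ and a suitable universal constant $C_1 \ge 4$, whenever $x \ge C_1 B \log(B)^2$ we have $B\log(1+x)(1+\log(1+x)) \le x$. The plan is to reduce everything to a single-variable estimate by bounding $\log(1+x)$ from above in terms of $\log(B)$ and $\log\log(B)$ (absorbing the latter into a constant), and then verifying the inequality directly.

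\medskip

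\textbf{Step 1: Simplify the left-hand side.} Since $1 + \log(1+x) \le 2\log(1+x)$ once $x$ is large enough (say $x \ge e^2 - 1$, which is guaranteed because $x \ge C_1 B \log B^2 \ge 16 C_1$), it suffices to prove $2B\log(1+x)^2 \le x$, i.e. $\log(1+x)^2 \le \frac{x}{2B}$. So I would work with the cleaner target $\log(1+x)^2 \le x/(2B)$.

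\medskip

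\textbf{Step 2: Bound $\log(1+x)$ using $x \ge C_1 B\log(B)^2$.} Write $x = C_1 B \log(B)^2 \cdot t$ with $t \ge 1$; it suffices to handle $t = 1$ and then note that $x \mapsto x/\log(1+x)^2$ is increasing for $x$ large, so the general case follows from the boundary case (I would either argue monotonicity or just carry the $t$ through, since it only helps). For $t=1$, $\log(1+x) \le \log(2x) = \log 2 + \log C_1 + \log B + 2\log\log B \le C_2 \log B$ for a universal constant $C_2$ (using $B \ge 16$ so that $\log\log B$ and the constants are all dominated by $\log B$ up to a fixed factor; e.g. $\log\log B \le \log B$ and $\log C_1 \le C_1$). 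Hence $\log(1+x)^2 \le C_2^2 \log(B)^2$.

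\medskip

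\textbf{Step 3: Close the loop.} We need $C_2^2 \log(B)^2 \le \frac{x}{2B} = \frac{C_1 \log(B)^2}{2}$, which holds as soon as $C_1 \ge 2C_2^2$. So choosing $C_1 := \max\{4,\, 2C_2^2\}$ (and tracing back that all the "large enough $x$" conditions used in Steps 1–2 are implied by $x \ge C_1 B\log(B)^2 \ge 16 C_1 \cdot (\log 16)^2$) completes the proof. I expect the only mildly delicate point to be bookkeeping the universal constants — in particular making sure the constant $C_2$ in Step 2 genuinely does not depend on $B$, which relies on $B \ge 16$ to dominate the additive terms $\log 2$, $\log C_1$, $2\log\log B$ by a constant multiple of $\log B$; everything else is routine.
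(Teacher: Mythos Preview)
Your approach is essentially the same as the paper's: both upper-bound $B\log(1+x)(1+\log(1+x))$ by a constant times $B\log(1+x)^2$ (the paper uses $B[\log(6x)]^2$), reduce to the boundary $x = C_1 B\log(B)^2$ by monotonicity of $x/\log(1+x)^2$, and then verify the inequality there by bounding $\log(1+x)$ in terms of $\log B$.

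There is one small circularity in your bookkeeping that you should tighten. In Step~2 you write $\log(1+x) \le \log 2 + \log C_1 + \log B + 2\log\log B \le C_2 \log B$, and in Step~3 you then set $C_1 \ge 2C_2^2$. But $C_2$ already depends on $C_1$ through the $\log C_1$ term, so you cannot simply define $C_1 := 2C_2^2$ without arguing a fixed point exists. The fix is easy: since $C_2 = O(1 + \log C_1/\log 16)$, the constraint $C_1 \ge 2C_2^2$ becomes $C_1 \gtrsim (\log C_1)^2$, which holds for all sufficiently large $C_1$. The paper sidesteps the issue by splitting $[\log(6C_1 B\log(B)^2)]^2 \le 2\log(B)^2 + 2[\log(6C_1\log(B)^2)]^2$ via $(a+b)^2 \le 2a^2 + 2b^2$, isolating the $B$-dependent from the $C_1$-dependent part, and then checking the residual inequality at $B=16$ after observing it is monotone in $B$ for fixed $C_1$. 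Either route works; just make the dependence explicit. (Also note that the delicate dependence you flagged is on $C_1$, not on $B$.)
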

\begin{proof}
    We have
    \begin{equation*}
        B \log(1+x) \left(1+ \log(1+x)\right) 
        \leq B \left(1+ \log(1+x)\right)^2 \leq B \left(1+ \log(2x)\right)^2 \leq B \left[\log(6x)\right]^2.
    \end{equation*}
    We define $f(x) := x - B \left[\log(6x)\right]^2,$ then we have $f^\prime(x) = 1 - \frac{2B\log(6x)}{x}.$ Since $x \geq 2C_1 B\log(B),$ we have
    \begin{equation*}
        f^\prime(x) \geq 1 - \frac{\log(12C_1 B\log(B))}{C_1\log(B)}.
    \end{equation*}
    We can take $C_1 \geq 1$ such that $C_1 \log(B) - \log(12C_1 B\log(B)) \geq 0$ whenever $B \geq 16.$ Therefore, we know $f(x)$ is increasing when $x \geq C_1 B\log(B)^2.$ Then, it suffices to prove
    \begin{equation*}
        \left[\log\left(6C_1 B \log(B)^2\right)\right]^2 \leq C_1 \log(B)^2.
    \end{equation*}
    Since $\left[\log\left(6C_1 B \log(B)^2\right)\right]^2 \leq 2 \log(B)^2 + 2 \left[\log\left(6C_1\log(B)^2\right)\right]^2,$ it suffices to prove
    \begin{equation*}
        \log\left(6C_1\log(B)^2\right) \leq \sqrt{\frac{C_1-2}{2}} \log(B).
    \end{equation*}
    When $C_1 \geq 4,$ the difference of right hand side and left hand side s always increasing w.r.t. $B$ for fixed $C_1.$ Therefore, it suffices to prove the case when $B = 16.$ Noticing that we can always take a sufficiently large uniform constant $C_1$ such that the inequality above holds when $B = 16,$ we conclude.
\end{proof}

\begin{lemma}[Chain rule of Kullback-Leibler divergence, Exercise 3.2 in \citep{wainwright2019high}]\label{lem:KL_chain}
    Given two $n$-variate distributions $\mathbb{Q}$ and $\mathbb{P}$, show that the Kullback-Leibler divergence can be decomposed as
    $$
    D(\mathbb{Q} ; \mathbb{P})=D\left(\mathbb{Q}_1 ; \mathbb{P}_1\right)+\sum_{j=2}^n \mathbb{E}_{\mathbb{Q}_1^{j-1}}\left[D\left(\mathbb{Q}_j\left(\cdot \mid X_1^{j-1}\right) ; \mathbb{P}_j\left(\cdot \mid X_1^{j-1}\right)\right)\right],
    $$
    where $\mathbb{Q}_j\left(\cdot \mid X_1^{j-1}\right)$ denotes the conditional distribution of $X_j$ given $\left(X_1, \ldots, X_{j-1}\right)$ under $\mathbb{Q}$, with a similar definition for $\mathbb{P}_j\left(\cdot \mid X_1^{j-1}\right)$.
\end{lemma}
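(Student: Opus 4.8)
Since this is the standard chain rule stated as an exercise in \citet{wainwright2019high}, the plan is simply to recall the usual argument: reduce the identity to the factorization of joint densities followed by the tower property of conditional expectation. First I would dispose of the degenerate case: if $\mathbb{Q}\not\ll\mathbb{P}$ then $D(\mathbb{Q};\mathbb{P})=+\infty$, and one checks this forces conditional non-absolute-continuity of some factor $\mathbb{Q}_j(\cdot\mid x_1^{j-1})$ relative to $\mathbb{P}_j(\cdot\mid x_1^{j-1})$ on a set of positive $\mathbb{Q}_1^{j-1}$-measure, so the right-hand side is $+\infty$ too; hence assume $\mathbb{Q}\ll\mathbb{P}$. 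By disintegration — in the discrete/finite setting of interest this is just elementary conditioning — write the densities as products of one-step conditionals,
\[
q(x_1^n)=q_1(x_1)\prod_{j=2}^n q_j(x_j\mid x_1^{j-1}),\qquad p(x_1^n)=p_1(x_1)\prod_{j=2}^n p_j(x_j\mid x_1^{j-1}).
\]

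Next I would substitute this product form into $D(\mathbb{Q};\mathbb{P})=\mathbb{E}_{X_1^n\sim\mathbb{Q}}[\log(q(X_1^n)/p(X_1^n))]$ and expand the logarithm of the product, obtaining
\[
D(\mathbb{Q};\mathbb{P})=\mathbb{E}_{\mathbb{Q}}\!\left[\log\frac{q_1(X_1)}{p_1(X_1)}\right]+\sum_{j=2}^n\mathbb{E}_{\mathbb{Q}}\!\left[\log\frac{q_j(X_j\mid X_1^{j-1})}{p_j(X_j\mid X_1^{j-1})}\right].
\]
The first term equals $D(\mathbb{Q}_1;\mathbb{P}_1)$ by definition. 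For each $j\ge 2$ I would apply the law of iterated expectations, conditioning on $X_1^{j-1}$: the outer expectation runs over $X_1^{j-1}\sim\mathbb{Q}_1^{j-1}$ and the inner over $X_j\sim\mathbb{Q}_j(\cdot\mid X_1^{j-1})$, and the inner expectation of $\log\big(q_j(X_j\mid X_1^{j-1})/p_j(X_j\mid X_1^{j-1})\big)$ is, by definition of KL divergence, exactly $D\big(\mathbb{Q}_j(\cdot\mid X_1^{j-1});\mathbb{P}_j(\cdot\mid X_1^{j-1})\big)$. Collecting the terms yields the claimed identity.

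I do not expect a genuine obstacle. The only point requiring a little care is the measure-theoretic bookkeeping: justifying that the expectation of the sum splits into a sum of expectations and that the iterated-expectation step is valid term by term. Since each summand equals a conditional KL divergence and is therefore nonnegative, Tonelli's theorem handles the interchange with no integrability hypothesis beyond $\mathbb{Q}\ll\mathbb{P}$; in the finite-alphabet case used throughout the paper the whole argument reduces to a purely algebraic manipulation of finite sums.
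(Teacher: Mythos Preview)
The paper does not supply its own proof of this lemma; it merely records the statement and cites it as Exercise~3.2 in \citet{wainwright2019high}. Your argument is the standard one --- factor the joint densities via conditionals, expand the log, and apply the tower property with Tonelli to justify the term-by-term split --- and it is correct.
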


\begin{lemma}[Bretagnolle-Huber Inequality, Theorem 14.1 in \citep{lattimore2020bandit}]\label{lem:HB}
    Let $\P$ and $\mathbb{Q}$ be probability measures on the same measurable space $(\Omega, \mathcal{F})$, and let $A \in \mathcal{F}$ be an arbitrary event. Then,
    $$
    \P(A)+\mathbb{Q}\left(A^c\right) \geq \frac{1}{2} \exp (-\mathrm{D}(\P; \mathbb{Q}))
    $$
    where $A^c=\Omega \backslash A$ is the complement of $A$.
\end{lemma}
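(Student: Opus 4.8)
The plan is to prove the inequality by relating the quantity $\P(A)+\mathbb{Q}(A^c)$ to the total variation distance and then bounding the latter against the Kullback--Leibler divergence through the Hellinger affinity (Bhattacharyya coefficient). First I would observe that the left-hand side admits a lower bound independent of the specific event $A$: writing $\mathbb{Q}(A^c)=1-\mathbb{Q}(A)$ gives $\P(A)+\mathbb{Q}(A^c)=1+(\P(A)-\mathbb{Q}(A))\geq 1-\mathrm{TV}(\P,\mathbb{Q})$, where $\mathrm{TV}(\P,\mathbb{Q})=\sup_{B}|\P(B)-\mathbb{Q}(B)|$. Hence it suffices to establish the event-free statement $1-\mathrm{TV}(\P,\mathbb{Q})\geq \tfrac12\exp(-\mathrm{D}(\P;\mathbb{Q}))$.

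Next I would pass to densities. If $\P$ is not absolutely continuous with respect to $\mathbb{Q}$ then $\mathrm{D}(\P;\mathbb{Q})=\infty$, the right-hand side is zero, and the bound holds trivially since the left-hand side is nonnegative; thus I may assume $\P\ll\mathbb{Q}$ and fix a common dominating measure $\nu$ (for instance $\nu=\P+\mathbb{Q}$) with densities $p=d\P/d\nu$ and $q=d\mathbb{Q}/d\nu$. Using the standard identity $\mathrm{TV}(\P,\mathbb{Q})=1-\int \min(p,q)\,d\nu$, the target reduces to the purely analytic inequality $\int \min(p,q)\,d\nu \geq \tfrac12\exp(-\mathrm{D}(\P;\mathbb{Q}))$.

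The core of the argument is a two-step bound on $\int \min(p,q)\,d\nu$. The first step is a Cauchy--Schwarz estimate: writing $\sqrt{pq}=\sqrt{\min(p,q)}\,\sqrt{\max(p,q)}$ and applying Cauchy--Schwarz gives $\big(\int \sqrt{pq}\,d\nu\big)^2 \leq \big(\int \min(p,q)\,d\nu\big)\big(\int \max(p,q)\,d\nu\big)$, and since $\int \max(p,q)\,d\nu = \int(p+q-\min(p,q))\,d\nu = 2-\int\min(p,q)\,d\nu\leq 2$, I obtain $\int \min(p,q)\,d\nu \geq \tfrac12\big(\int \sqrt{pq}\,d\nu\big)^2$. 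The second step lower bounds the Hellinger affinity by Jensen's inequality applied to the convex exponential: on $\{p>0\}$ I have $\int \sqrt{pq}\,d\nu = \E_\P\big[\exp(\tfrac12\log(q/p))\big]\geq \exp\big(-\tfrac12\,\E_\P[\log(p/q)]\big)=\exp(-\tfrac12\mathrm{D}(\P;\mathbb{Q}))$. Chaining these two bounds yields $\int \min(p,q)\,d\nu\geq \tfrac12\exp(-\mathrm{D}(\P;\mathbb{Q}))$, which combined with the reduction of the first paragraph completes the proof.

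The steps are individually short, so the main obstacle is bookkeeping rather than ideas: I must dispatch the non-absolutely-continuous case separately so that the densities and the expectation $\E_\P[\log(p/q)]$ are well defined, and I must ensure the Cauchy--Schwarz and Jensen applications are carried out on the set $\{p>0\}$, noting that on $\{p=0\}$ both $\sqrt{pq}$ and $\min(p,q)$ vanish and contribute nothing to the integrals. With those measure-theoretic caveats in place, the chain of inequalities is routine.
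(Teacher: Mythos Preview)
Your argument is correct and is the standard proof of the Bretagnolle--Huber inequality. Note, however, that the paper does not actually prove this lemma: it is stated as a cited result (Theorem~14.1 in \cite{lattimore2020bandit}) with no accompanying argument, so there is no ``paper's own proof'' to compare against. For what it is worth, the proof in the cited reference proceeds essentially as you do---bounding $\int\min(p,q)\,d\nu$ below by $\tfrac12\bigl(\int\sqrt{pq}\,d\nu\bigr)^2$ via Cauchy--Schwarz and then lower-bounding the affinity by $\exp(-\tfrac12\mathrm{D}(\P;\mathbb{Q}))$ via Jensen---with the minor cosmetic difference that it reaches $\P(A)+\mathbb{Q}(A^c)\geq\int\min(p,q)\,d\nu$ directly (using $\P(A)\geq\int_A\min(p,q)\,d\nu$ and $\mathbb{Q}(A^c)\geq\int_{A^c}\min(p,q)\,d\nu$) rather than passing through the total variation identity. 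Your handling of the $\P\not\ll\mathbb{Q}$ case and the restriction to $\{p>0\}$ for the Jensen step are appropriate.
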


\begin{lemma}[Value Difference Lemma, Lemma E.15 in \citep{dann2017unifying}]\label{lem:performance}
    For any two MDPs $\cM^{\prime}$ and $\cM^{\prime \prime}$ with rewards $r^{\prime}$ and $r^{\prime \prime}$ and transition probabilities $\P^{\prime}$ and $\P^{\prime \prime}$, the difference in values with respect to the same policy $\pi$ can be written as
    \begin{align*}
        V_i^{\prime}(s)-V_i^{\prime \prime}(s)
        &=\mathbb{E}^{\prime \prime}\left[\sum_{t=i}^H\left(r^{\prime}\left(s_t, a_t, t\right)-r^{\prime \prime}\left(s_t, a_t, t\right)\right) \mid s_i=s\right] \\
        &+\mathbb{E}^{\prime \prime}\left[\sum_{t=i}^H\left(\P^{\prime}\left(s_t, a_t, t\right)-\P^{\prime \prime}\left(s_t, a_t, t\right)\right)^{\top} V_{t+1}^{\prime} \mid s_i=s\right]
    \end{align*}
where $V_{H+1}^{\prime}(s) = V_{H+1}^{\prime \prime}(s) = 0$ for any state $s$ and the expectation $\mathbb{E}^{\prime}$ is taken with respect to $\P^{\prime}$ and $\pi$ and $\mathbb{E}^{\prime \prime}$ with respect to $\P^{\prime \prime}$ and $\pi$.
\end{lemma}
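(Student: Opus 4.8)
The plan is to prove the identity directly by telescoping the value function of the first MDP along a trajectory generated by the \emph{second} MDP, rather than by backward induction on $i$ (although the induction works equally well and the two proofs are essentially interchangeable). Throughout I would fix the policy $\pi$ and read $\mathbb{E}^{\prime\prime}$ as the expectation over trajectories $(s_i,a_i,s_{i+1},a_{i+1},\dots)$ in which actions are drawn from $\pi$ and successor states from $\P^{\prime\prime}$. The definition $V_i^{\prime\prime}(s)=\mathbb{E}^{\prime\prime}[\sum_{t=i}^H r^{\prime\prime}(s_t,a_t,t)\mid s_i=s]$ will be used verbatim at the end; the work is entirely in rewriting $V_i^\prime(s)$ in terms of the same $\P^{\prime\prime}$-dynamics.

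First I would exploit the boundary condition $V_{H+1}^\prime\equiv 0$ to telescope. Along a trajectory under $\P^{\prime\prime}$ and $\pi$ started at $s_i=s$,
\[
V_i^\prime(s)=\mathbb{E}^{\prime\prime}\Big[\sum_{t=i}^H\big(V_t^\prime(s_t)-V_{t+1}^\prime(s_{t+1})\big)\,\Big|\,s_i=s\Big],
\]
since the inner sum collapses to $V_i^\prime(s_i)-V_{H+1}^\prime(s_{H+1})=V_i^\prime(s_i)$ and $\mathbb{E}^{\prime\prime}[V_i^\prime(s_i)\mid s_i=s]=V_i^\prime(s)$. Next I would apply the Bellman equation for the first MDP, $V_t^\prime(s_t)=r^\prime(s_t,a_t,t)+\P^\prime(\cdot\mid s_t,a_t,t)^\top V_{t+1}^\prime$, to each summand, and dispatch the subtracted term with the tower property: conditioning on $(s_t,a_t)$, the successor $s_{t+1}$ is drawn from $\P^{\prime\prime}$, so $\mathbb{E}^{\prime\prime}[V_{t+1}^\prime(s_{t+1})\mid s_t,a_t]=\P^{\prime\prime}(\cdot\mid s_t,a_t,t)^\top V_{t+1}^\prime$.

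Substituting, each summand equals $r^\prime(s_t,a_t,t)+\big(\P^\prime(\cdot\mid s_t,a_t,t)-\P^{\prime\prime}(\cdot\mid s_t,a_t,t)\big)^\top V_{t+1}^\prime$ in expectation, so summing over $t=i,\dots,H$ gives $V_i^\prime(s)=\mathbb{E}^{\prime\prime}[\sum_{t=i}^H r^\prime(s_t,a_t,t)\mid s_i=s]+\mathbb{E}^{\prime\prime}[\sum_{t=i}^H(\P^\prime-\P^{\prime\prime})^\top V_{t+1}^\prime\mid s_i=s]$. Subtracting the definition of $V_i^{\prime\prime}(s)$ then yields exactly the stated identity, with the reward-difference and transition-difference terms separating as claimed.

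This lemma is essentially routine, so rather than a genuine obstacle the one point demanding care is the deliberate asymmetry in which kernel and which value function are retained: the transition-difference term carries $V^\prime$ (the first MDP's value) while the entire expectation is taken under $\P^{\prime\prime}$ (the second MDP's dynamics). That asymmetry is precisely what makes the cross term $\P^{\prime\prime\top}V_{t+1}^\prime$ cancel against the telescoped $-V_{t+1}^\prime(s_{t+1})$, leaving $(\P^\prime-\P^{\prime\prime})^\top V_{t+1}^\prime$; any other pairing of kernels and value functions would fail to telescope. Tracking this asymmetry correctly through the Markov conditioning is the whole content of the argument.
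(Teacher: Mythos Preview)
Your proof is correct and is the standard telescoping argument for this well-known identity. The paper itself does not prove this lemma; it simply cites it as Lemma~E.15 in \cite{dann2017unifying}, so there is no in-paper proof to compare against. The only minor imprecision is that the Bellman equation you invoke, $V_t^\prime(s_t)=r^\prime(s_t,a_t,t)+\P^\prime(\cdot\mid s_t,a_t,t)^\top V_{t+1}^\prime$, is really an identity for $Q_t^\prime(s_t,a_t)$ rather than $V_t^\prime(s_t)$; it becomes valid for $V_t^\prime$ only after averaging over $a_t\sim\pi(\cdot\mid s_t)$, which is implicitly supplied by the outer expectation $\mathbb{E}^{\prime\prime}$. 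Once that is made explicit the argument goes through exactly as you describe.
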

\section{Details of the planning phase}\label{app:planning}
In this section, we provide some details of the planning phase in \cref{alg1}. In the planning phase, we are given a reward function $r: \cS \times \cA \to [0,1]$ and we compute an estimate of sparsified transition dynamics $\widetilde{\P}^\dag,$ which is formally defined \cref{app:sec:def_mdp}. The goal of the planning phase is to compute the optimal policy $\piFinal$ on the MDP specified by the transition dynamics $\widetilde{\P}^\dag$ and reward function $r^\dag,$ where $r^\dag$ is the sparsified version of $r:$ $r^\dag(s,a) = r(s,a)$ and $r^\dag(s^\dag,a) = 0$ for any $a \in \cA.$ To compute the optimal policy, we iteratively apply the Bellman optimality equation. First, we define $\widetilde{Q}_{H}(s,a) = r^\dag(s,a)$ for any $(s,a)$ and solve
\begin{equation*}
    \pi_{final,H}(s) = \mathop{\arg \max}_{a \in \cA} r^\dag(s,a).
\end{equation*}
Then, for $h = H-1, H-2,...,2,1,$ we iteratively define
\begin{equation*}
    \widetilde{Q}_h(s,a) := r^\dag(s,a) + \sum_{s^\prime} \widetilde{\P}^\dag \left(s^\prime \mid s,a\right) \widetilde{Q}_{h+1}(s^\prime, \pi_{final,h+1}(s^\prime))
\end{equation*}
for any $(s,a)$, and then solve
\begin{equation*}
    \pi_{final,h}(s) = \mathop{\arg \max}_{a \in \cA} \widetilde{Q}_h(s,a)
\end{equation*}
for any $s \in \cS.$ For $s^\dag$ and any $h \in [H],$ $\pi_{final,h}(s^\dag)$ can be arbitrary action. Then, from the property of Bellman optimality equation, we know $\pi_{final}$ is the optimal policy on $\widetilde{\P}^\dag$ and $r^\dag.$

\section{More related works}\label{app:related}
\textbf{Other low-switching algorithms}
Low-switching learning algorithms were initially studied in the context of bandits, with the UCB2 algorithm \citep{auer2002finite} achieving an $O(\cA \log K)$ switching cost. \cite{gao2019batched} demonstrated a sufficient and necessary $O(\cA \log \log K)$ switching cost for attaining the minimax optimal regret in multi-armed bandits. In both adversarial and stochastic online learning, \citep{cesa2013online} designed an algorithm that achieves an $O(\log \log K)$ switching cost.

\textbf{Reward-free reinforcement learning}
In reward-free reinforcement learning (RFRL) the goal is to find a near-optimal policy for any given reward function. 
 \citep{jin2020reward} proposed an algorithm based on EULER \citep{zanette2019tighter} that can find a $\varepsilon$ policy 
 with $\tildeO(H^5 \left|\cS\right|^2 \left|\cA\right|/\varepsilon^2)$ trajectories. 
 Subsequently, \citep{kaufmann2021adaptive} reduces the sample complexity by a factor $H$ by using uncertainty functions to upper bound the value estimation error. The sample complexity was further improved by another $H$ factor by \citep{menard2021fast}.

A lower bound of $\tildeO(H^2 \left|\cS\right|^2 \left|\cA\right|/\varepsilon^2)$ was established for homogeneous MDPs by \citep{jin2020reward}, while an additional $H$ factor is conjectured for non-homogeneous cases. \citep{zhang2021near} proposed the first algorithm with matching sample complexity in the homogeneous case. 
Similar results are available with linear \citep{wang2020reward, wagenmaker2022reward, zanette2020provably} and general function approximation \citep{chen2022statistical, qiu2021reward}.

\textbf{Offline reinforcement learning}
In offline reinforcement learning the goal is to learn a near-optimal policy from an existing
dataset which is generated from a (possibly very different) logging policy. 
Offline RL in tabular domains has been investigated extensively
\citep{yin2020asymptotically,jin2020pessimism,nachum2019algaedice,rashidinejad2021bridging,kallus2022efficiently,xie2020batch}.
Similar results were shown using linear \citep{yinnear,xiong2022nearly,nguyen2022instance,zanette2021provable} and general function approximation\citep{xie2021bellman,long2021l2,zhang2022off,duan2021risk,jiang2020minimax,uehara2021pessimistic,zanette2022bellman,rashidinejad2022optimal,yin2022offline}. 
Offline RL is effective when the dataset `covers' a near optimal policy, 
as measured by a certain concentrabiluty factor.
In the function approximation setting additional conditions, such as Bellman completeness, may need to be approximately satisfied
\citep{munos2008finite,chen2019information,zanette2022realizability,wang2020statistical,foster2021offline,zhan2022offline}.

\textbf{Task-agnostic reinforcement learning}
Another related line of work is task-agnostic RL, where $N$ tasks are considered during the planning phase, 
and the reward functions is collected from the environment instead of being provided directly. 
\citep{zhang2020task} presented the first task-agnostic algorithm, UBEZero, with a sample complexity of $\tildeO(H^5 \left|\cS\right| \left|\cA\right| \log(N) / \varepsilon^2)$. Recently, \citep{li2023optimal} proposed an algorithm that leverages offline RL techniques to estimate a well-behaved behavior policy in the reward-agnostic phase, achieving minimax sample complexity. Other works exploring effective exploration schemes in RL include \citep{hazan2019provably, du2019provably, misra2020kinematic}.

\end{document}